\theoremstyle{plain}
\newtheorem{assumption}{\protect\assumptionname}
\theoremstyle{plain}
\newtheorem{thm}{\protect\theoremname}
\theoremstyle{definition}
\newtheorem{defn}[thm]{\protect\definitionname}
\theoremstyle{plain}
\newtheorem{cor}[thm]{\protect\corollaryname}
\theoremstyle{plain}
\newtheorem{prop}[thm]{\protect\propositionname}
\theoremstyle{remark}
\newtheorem{rem}[thm]{\protect\remarkname}
\theoremstyle{plain}
\newtheorem{lem}[thm]{\protect\lemmaname}
\providecommand{\assumptionname}{Assumption}
\providecommand{\corollaryname}{Corollary}
\providecommand{\definitionname}{Definition}
\providecommand{\lemmaname}{Lemma}
\providecommand{\propositionname}{Proposition}
\providecommand{\remarkname}{Remark}
\providecommand{\theoremname}{Theorem}
\begin{document}
\title{Global Convergence of Three-layer Neural Networks in the Mean Field
Regime\thanks{This paper is a conference submission. We refer to the work \cite{nguyen2020rigorous}
and its companion note \cite{pham2020note} for generalizations as
well as other conditions for global convergence in the case of multilayer
neural networks.}}
\author{Huy Tuan Pham\thanks{Department of Mathematics, Stanford University. This work was done
in parts while H. T. Pham was at the University of Cambridge.}$\quad$and Phan-Minh Nguyen\thanks{The Voleon Group. This work was done while P.-M. Nguyen was at Stanford
University.}\; \thanks{The author ordering is randomized.}}
\maketitle
\begin{abstract}
In the mean field regime, neural networks are appropriately scaled
so that as the width tends to infinity, the learning dynamics tends
to a nonlinear and nontrivial dynamical limit, known as the mean field
limit. This lends a way to study large-width neural networks via analyzing
the mean field limit. Recent works have successfully applied such
analysis to two-layer networks and provided global convergence guarantees.
The extension to multilayer ones however has been a highly challenging
puzzle, and little is known about the optimization efficiency in the
mean field regime when there are more than two layers.

In this work, we prove a global convergence result for unregularized
feedforward three-layer networks in the mean field regime. We first
develop a rigorous framework to establish the mean field limit of
three-layer networks under stochastic gradient descent training. To
that end, we propose the idea of a \textit{neuronal embedding}, which
comprises of a fixed probability space that encapsulates neural networks
of arbitrary sizes. The identified mean field limit is then used to
prove a global convergence guarantee under suitable regularity and
convergence mode assumptions, which -- unlike previous works on two-layer
networks -- does not rely critically on convexity. Underlying the
result is a universal approximation property, natural of neural networks,
which importantly is shown to hold at \textit{any} finite training
time (not necessarily at convergence) via an algebraic topology argument.
\end{abstract}

\section{Introduction}

Interests in the theoretical understanding of the training of neural
networks have led to the recent discovery of a new operating regime:
the neural network and its learning rates are scaled appropriately,
such that as the width tends to infinity, the network admits a limiting
learning dynamics in which all parameters evolve nonlinearly with
time\footnote{This is to be contrasted with another major operating regime (the
NTK regime) where parameters essentially do not evolve and the model
behaves like a kernel method (\cite{jacot2018neural,chizat2018note,du2018gradient,allen2018convergence,zou2018stochastic,lee2019wide}).}. This is known as the mean field (MF) limit (\cite{mei2018mean,chizat2018,rotskoff2018neural,sirignano2018mean,nguyen2019mean,araujo2019mean,sirignano2019mean}).
The four works \cite{mei2018mean,chizat2018,rotskoff2018neural,sirignano2018mean}
led the first wave of efforts in 2018 and analyzed two-layer neural
networks. They established a connection between the network under
training and its MF limit. They then used the MF limit to prove that
two-layer networks could be trained to find (near) global optima using
variants of gradient descent, despite non-convexity (\cite{mei2018mean,chizat2018}).
The MF limit identified by these works assumes the form of gradient
flows in the measure space, which factors out the invariance from
the action of a symmetry group on the model. Interestingly, by lifting
to the measure space, with a convex loss function (e.g. squared loss),
one obtains a limiting optimization problem that is convex (\cite{bengio2006convex,bach2017breaking}).
The analyses of \cite{mei2018mean,chizat2018} utilize convexity,
although the mechanisms to attain global convergence in these works
are more sophisticated than the usual convex optimization setup in
Euclidean spaces.

The extension to multilayer networks has enjoyed much less progresses.
The works \cite{nguyen2019mean,araujo2019mean,sirignano2019mean}
argued, heuristically or rigorously, for the existence of a MF limiting
behavior under gradient descent training with different assumptions.
In fact, it has been argued that the difficulty is not simply technical,
but rather conceptual (\cite{nguyen2019mean}): for instance, the
presence of intermediate layers exhibits multiple symmetry groups
with intertwined actions on the model. Convergence to the global optimum
of the model under gradient-based optimization has not been established
when there are more than two layers.

In this work, we prove a global convergence guarantee for feedforward
three-layer networks trained with unregularized stochastic gradient
descent (SGD) in the MF regime. After an introduction of the three-layer
setup and its MF limit in Section \ref{sec:setup}, our development
proceeds in two main steps:

\paragraph*{Step 1 (Theorem \ref{thm:gradient descent coupling} in Section \ref{sec:connection}):}

We first develop a rigorous framework that describes the MF limit
and establishes its connection with a large-width SGD-trained three-layer
network. Here we propose the new idea of a \textit{neuronal embedding},
which comprises of an appropriate non-evolving probability space that
encapsulates neural networks of arbitrary sizes. This probability
space is in general abstract and is constructed according to the (not
necessarily i.i.d.) initialization scheme of the neural network. This
idea addresses directly the intertwined action of multiple symmetry
groups, which is the aforementioned conceptual obstacle (\cite{nguyen2019mean}),
thereby covering setups that cannot be handled by formulations in
\cite{araujo2019mean,sirignano2019mean} (see also Section \ref{sec:Conclusion}
for a comparison). Our analysis follows the technique from \cite{sznitman1991topics,mei2018mean}
and gives a quantitative statement: in particular, the MF limit yields
a good approximation of the neural network as long as $n_{\min}^{-1}\log n_{\max}\ll1$
independent of the data dimension, where $n_{\min}$ and $n_{\max}$
are the minimum and maximum of the widths.

\paragraph*{Step 2 (Theorem \ref{thm:global-optimum-3} in Section \ref{sec:global_conv}):}

We prove that the MF limit, given by our framework, converges to the
global optimum under suitable regularity and convergence mode assumptions.
Several elements of our proof are inspired by \cite{chizat2018};
the technique in their work however does not generalize to our three-layer
setup. Unlike previous two-layer analyses, we do not exploit convexity;
instead we make use of a new element: a universal approximation property.
The result turns out to be conceptually new: global convergence can
be achieved even when the loss function is non-convex. An important
crux of the proof is to show that the universal approximation property
holds at \textit{any} finite training time (but not necessarily at
convergence, i.e. at infinite time, since the property may not realistically
hold at convergence). 

Together these two results imply a positive statement on the optimization
efficiency of SGD-trained unregularized feedforward three-layer networks
(Corollary \ref{cor:global-optimum-3-NN}). Our results can be extended
to the general multilayer case -- with new ideas on top and significantly
more technical works -- or used to obtain new global convergence
guarantees in the two-layer case (\cite{nguyen2020rigorous,pham2020note}).
We choose to keep the current paper concise with the three-layer case
being a prototypical setup that conveys several of the basic ideas.
Complete proofs are presented in appendices.

\paragraph*{Notations.}

$K$ denotes a generic constant that may change from line to line.
$\left|\cdot\right|$ denotes the absolute value for a scalar and
the Euclidean norm for a vector. For an integer $n$, we let $\left[n\right]=\left\{ 1,...,n\right\} $.

\section{Three-layer neural networks and the mean field limit\label{sec:setup}}

\subsection{Three-layer neural network\label{subsec:Three-layer-neural-network}}

We consider the following three-layer network at time $k\in\mathbb{N}_{\geq0}$
that takes as input $x\in\mathbb{R}^{d}$:
\begin{align}
\hat{{\bf y}}\left(x;\mathbf{W}\left(k\right)\right) & =\varphi_{3}\left(\mathbf{H}_{3}\left(x;\mathbf{W}\left(k\right)\right)\right),\label{eq:three-layer-nn}\\
\mathbf{H}_{3}\left(x;\mathbf{W}\left(k\right)\right) & =\frac{1}{n_{2}}\sum_{j_{2}=1}^{n_{2}}{\bf w}_{3}\left(k,j_{2}\right)\varphi_{2}\left({\bf H}_{2}\left(x,j_{2};\mathbf{W}\left(k\right)\right)\right),\nonumber \\
{\bf H}_{2}\left(x,j_{2};\mathbf{W}\left(k\right)\right) & =\frac{1}{n_{1}}\sum_{j_{1}=1}^{n_{1}}{\bf w}_{2}\left(k,j_{1},j_{2}\right)\varphi_{1}\left(\left\langle {\bf w}_{1}\left(k,j_{1}\right),x\right\rangle \right),\nonumber 
\end{align}
in which $\mathbf{W}\left(k\right)=\left({\bf w}_{1}\left(k,\cdot\right),{\bf w}_{2}\left(k,\cdot,\cdot\right),{\bf w}_{3}\left(k,\cdot\right)\right)$
consists of the weights\footnote{To absorb first layer's bias term to $\mathbf{w}_{1}$, we assume
the input $x$ to have $1$ appended to the last entry.} ${\bf w}_{1}\left(k,j_{1}\right)\in\mathbb{R}^{d}$, ${\bf w}_{2}\left(k,j_{1},j_{2}\right)\in\mathbb{R}$
and ${\bf w}_{3}\left(k,j_{2}\right)\in\mathbb{R}$. Here $\varphi_{1}:\;\mathbb{R}\to\mathbb{R}$,
$\varphi_{2}:\;\mathbb{R}\to\mathbb{R}$ and $\varphi_{3}:\;\mathbb{R}\to\mathbb{R}$
are the activation functions, and the network has widths $\left\{ n_{1},n_{2}\right\} $.

We train the network with SGD w.r.t. the loss ${\cal L}:\;\mathbb{R}\times\mathbb{R}\to\mathbb{R}_{\geq0}$.
We assume that at each time $k$, we draw independently a fresh sample
$z\left(k\right)=\left(x\left(k\right),y\left(k\right)\right)\in\mathbb{R}^{d}\times\mathbb{R}$
from a training distribution ${\cal P}$. Given an initialization
$\mathbf{W}\left(0\right)$, we update $\mathbf{W}\left(k\right)$
according to
\begin{align*}
{\bf w}_{3}\left(k+1,j_{2}\right) & ={\bf w}_{3}\left(k,j_{2}\right)-\epsilon\ensuremath{\xi}_{3}\left(k\epsilon\right){\rm Grad}_{3}\left(z\left(k\right),j_{2};\mathbf{W}\left(k\right)\right),\\
{\bf w}_{2}\left(k+1,j_{1},j_{2}\right) & ={\bf w}_{2}\left(k,j_{1},j_{2}\right)-\epsilon\ensuremath{\xi}_{2}\left(k\epsilon\right){\rm Grad}_{2}\left(z\left(k\right),j_{1},j_{2};\mathbf{W}\left(k\right)\right),\\
{\bf w}_{1}\left(k+1,j_{1}\right) & ={\bf w}_{1}\left(k,j_{1}\right)-\epsilon\ensuremath{\xi}_{1}\left(k\epsilon\right){\rm Grad}_{1}\left(z\left(k\right),j_{1};\mathbf{W}\left(k\right)\right),
\end{align*}
in which $j_{1}=1,...,n_{1}$, $j_{2}=1,...,n_{2}$, $\epsilon\in\mathbb{R}_{>0}$
is the learning rate, $\xi_{i}:\;\mathbb{R}_{\geq0}\mapsto\mathbb{R}_{\geq0}$
is the learning rate schedule for $\mathbf{w}_{i}$, and for $z=\left(x,y\right)$,
we define
\begin{align*}
{\rm Grad}_{3}\left(z,j_{2};\mathbf{W}\left(k\right)\right) & =\partial_{2}{\cal L}\left(y,\hat{\mathbf{y}}\left(x;\mathbf{W}\left(k\right)\right)\right)\varphi_{3}'\left(\mathbf{H}_{3}\left(x;\mathbf{W}\left(k\right)\right)\right)\varphi_{2}\left({\bf H}_{2}\left(x,j_{2};\mathbf{W}\left(k\right)\right)\right),\\
{\rm Grad}_{2}\left(z,j_{1},j_{2};\mathbf{W}\left(k\right)\right) & =\Delta_{2}^{\mathbf{H}}\left(z,j_{2};\mathbf{W}\left(k\right)\right)\varphi_{1}\left(\left\langle {\bf w}_{1}\left(k,j_{1}\right),x\right\rangle \right),\\
{\rm Grad}_{1}\left(z,j_{1};\mathbf{W}\left(k\right)\right) & =\bigg(\frac{1}{n_{2}}\sum_{j_{2}=1}^{n_{2}}\Delta_{2}^{\mathbf{H}}\left(z,j_{2};\mathbf{W}\left(k\right)\right){\bf w}_{2}\left(k,j_{1},j_{2}\right)\bigg)\varphi_{1}'\left(\left\langle {\bf w}_{1}\left(k,j_{1}\right),x\right\rangle \right)x,\\
\Delta_{2}^{\mathbf{H}}\left(z,j_{2};\mathbf{W}\left(k\right)\right) & =\partial_{2}{\cal L}\left(y,\hat{\mathbf{y}}\left(x;\mathbf{W}\left(k\right)\right)\right)\varphi_{3}'\left(\mathbf{H}_{3}\left(x;\mathbf{W}\left(k\right)\right)\right){\bf w}_{3}\left(k,j_{2}\right)\varphi_{2}'\left({\bf H}_{2}\left(x,j_{2};\mathbf{W}\left(k\right)\right)\right).
\end{align*}
We note that this setup follows the same scaling w.r.t. $n_{1}$ and
$n_{2}$, which is applied to both the forward pass and the learning
rates in the backward pass, as \cite{nguyen2019mean}.

\subsection{Mean field limit\label{subsec:Mean-field-limit}}

The MF limit is a continuous-time infinite-width analog of the neural
network under training. To describe it, we first introduce the concept
of a \textit{neuronal ensemble}. Given a product probability space
$\left(\Omega,{\cal F},P\right)=\left(\Omega_{1}\times\Omega_{2},{\cal F}_{1}\times{\cal F}_{1},P_{1}\times P_{2}\right)$,
we independently sample $C_{i}\sim P_{i}$, $i=1,2$. In the following,
we use $\mathbb{E}_{C_{i}}$ to denote the expectation w.r.t. the
random variable $C_{i}\sim P_{i}$ and $c_{i}$ to denote an arbitrary
point $c_{i}\in\Omega_{i}$. The space $\left(\Omega,{\cal F},P\right)$
is referred to as a \textit{neuronal ensemble}.

Given a neuronal ensemble $\left(\Omega,{\cal F},P\right)$, the MF
limit is described by a time-evolving system with state/parameter
$W\left(t\right)$, where the time $t\in\mathbb{R}_{\geq0}$ and $W\left(t\right)=\left(w_{1}\left(t,\cdot\right),w_{2}\left(t,\cdot,\cdot\right),w_{3}\left(t,\cdot\right)\right)$
with $w_{1}:\,\mathbb{R}_{\geq0}\times\Omega_{1}\to\mathbb{R}^{d}$,
$w_{2}:\,\mathbb{R}_{\geq0}\times\Omega_{1}\times\Omega_{2}\to\mathbb{R}$
and $w_{3}:\,\mathbb{R}_{\geq0}\times\Omega_{2}\to\mathbb{R}$. It
entails the quantities: 
\begin{align*}
\hat{y}\left(x;W\left(t\right)\right) & =\varphi_{3}\left(H_{3}\left(x;W\left(t\right)\right)\right),\\
H_{3}\left(x;W\left(t\right)\right) & =\mathbb{E}_{C_{2}}\left[w_{3}\left(t,C_{2}\right)\varphi_{2}\left(H_{2}\left(x,C_{2};W\left(t\right)\right)\right)\right],\\
H_{2}\left(x,c_{2};W\left(t\right)\right) & =\mathbb{E}_{C_{1}}\left[w_{2}\left(t,C_{1},c_{2}\right)\varphi_{1}\left(\left\langle w_{1}\left(t,C_{1}\right),x\right\rangle \right)\right].
\end{align*}
Here for each $t\in\mathbb{R}_{\geq0}$, $w_{1}\left(t,\cdot\right)$
is $\left(\Omega_{1},{\cal F}_{1}\right)$-measurable, and similar
for $w_{2}\left(t,\cdot,\cdot\right)$, $w_{3}\left(t,\cdot\right)$.
The MF limit evolves according to a continuous-time dynamics, described
by a system of ODEs, which we refer to as the \textit{MF ODEs}. Specifically,
given an initialization $W\left(0\right)=\left(w_{1}\left(0,\cdot\right),w_{2}\left(0,\cdot,\cdot\right),w_{3}\left(0,\cdot\right)\right)$,
the dynamics solves:
\begin{align*}
\partial_{t}w_{3}\left(t,c_{2}\right) & =-\xi_{3}\left(t\right)\Delta_{3}\left(c_{2};W\left(t\right)\right),\\
\partial_{t}w_{2}\left(t,c_{1},c_{2}\right) & =-\xi_{2}\left(t\right)\Delta_{2}\left(c_{1},c_{2};W\left(t\right)\right),\\
\partial_{t}w_{1}\left(t,c_{1}\right) & =-\xi_{1}\left(t\right)\Delta_{1}\left(c_{1};W\left(t\right)\right).
\end{align*}
Here $c_{1}\in\Omega_{1}$, $c_{2}\in\Omega_{2}$, $\mathbb{E}_{Z}$
denotes the expectation w.r.t. the data $Z=\left(X,Y\right)\sim{\cal P}$,
and for $z=\left(x,y\right)$, we define
\begin{align*}
\Delta_{3}\left(c_{2};W\left(t\right)\right) & =\mathbb{E}_{Z}\left[\partial_{2}{\cal L}\left(Y,\hat{y}\left(X;W\left(t\right)\right)\right)\varphi_{3}'\left(H_{3}\left(X;W\left(t\right)\right)\right)\varphi_{2}\left(H_{2}\left(X,c_{2};W\left(t\right)\right)\right)\right],\\
\Delta_{2}\left(c_{1},c_{2};W\left(t\right)\right) & =\mathbb{E}_{Z}\left[\Delta_{2}^{H}\left(Z,c_{2};W\left(t\right)\right)\varphi_{1}\left(\left\langle w_{1}\left(t,c_{1}\right),X\right\rangle \right)\right],\\
\Delta_{1}\left(c_{1};W\left(t\right)\right) & =\mathbb{E}_{Z}\left[\mathbb{E}_{C_{2}}\left[\Delta_{2}^{H}\left(Z,C_{2};W\left(t\right)\right)w_{2}\left(t,c_{1},C_{2}\right)\right]\varphi_{1}'\left(\left\langle w_{1}\left(t,c_{1}\right),X\right\rangle \right)X\right],\\
\Delta_{2}^{H}\left(z,c_{2};W\left(t\right)\right) & =\partial_{2}{\cal L}\left(y,\hat{y}\left(x;W\left(t\right)\right)\right)\varphi_{3}'\left(H_{3}\left(x;W\left(t\right)\right)\right)w_{3}\left(t,c_{2}\right)\varphi_{2}'\left(H_{2}\left(x,c_{2};W\left(t\right)\right)\right).
\end{align*}

In Appendix \ref{sec:Existence-and-uniqueness-proof}, we show well-posedness
of MF ODEs under the following regularity conditions.
\begin{assumption}[Regularity]
\label{assump:Regularity}We assume that $\varphi_{1}$ and $\varphi_{2}$
are $K$-bounded, $\varphi_{1}'$, $\varphi_{2}'$ and $\varphi_{3}'$
are $K$-bounded and $K$-Lipschitz, $\varphi_{2}'$ and $\varphi_{3}'$
are non-zero everywhere, $\partial_{2}{\cal L}\left(\cdot,\cdot\right)$
is $K$-Lipschitz in the second variable and $K$-bounded, and $\left|X\right|\leq K$
with probability $1$. Furthermore $\xi_{1}$, $\xi_{2}$ and $\xi_{3}$
are $K$-bounded and $K$-Lipschitz.
\end{assumption}

\begin{thm}
\label{thm:existence ODE}Under Assumption \ref{assump:Regularity},
given any neuronal ensemble and an initialization $W\left(0\right)$
such that\footnote{We recall the definition of ${\rm ess\text{-}sup}$ in Appendix \ref{sec:Notational-preliminaries}.}
${\rm ess\text{-}sup}\left|w_{2}\left(0,C_{1},C_{2}\right)\right|,$
${\rm ess\text{-}sup}\left|w_{3}\left(0,C_{2}\right)\right|\leq K$,
there exists a unique solution $W$ to the MF ODEs on $t\in[0,\infty)$.
\end{thm}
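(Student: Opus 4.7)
The plan is to recast the MF ODEs as a Banach-space-valued initial-value problem $\partial_t W(t) = F(t, W(t))$ in the product space $\mathcal{B} = L^\infty(\Omega_1; \mathbb{R}^d) \times L^\infty(\Omega_1 \times \Omega_2) \times L^\infty(\Omega_2)$, equipped with the natural maximum of essential-supremum norms, and to apply the Picard--Lindel\"of theorem together with \emph{a priori} bounds that rule out finite-time blow-up. The bounded and Lipschitz assumptions in Assumption \ref{assump:Regularity} will make the vector field $F$ locally Lipschitz in $W$ and continuous in $t$ (through $\xi_i$), while the product structure of the neuronal ensemble together with Fubini will ensure that measurability is preserved throughout the Picard iteration.

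For the \emph{a priori} bounds I would work layer by layer from the output side. Since $|\partial_2\mathcal{L}|, |\varphi_2|, |\varphi_3'|, |\xi_3| \le K$, we have $|\Delta_3(c_2; W(t))| \le K$ uniformly in $W$, giving $\|w_3(t,\cdot)\|_\infty \le \|w_3(0,\cdot)\|_\infty + Kt$. Next, $|\Delta_2^H(z, c_2; W(t))| \le K |w_3(t, c_2)|$, so integrating yields $\|w_2(t,\cdot,\cdot)\|_\infty \le \|w_2(0,\cdot,\cdot)\|_\infty + K\int_0^t \|w_3(s,\cdot)\|_\infty\, ds$, at most quadratic in $t$. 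Finally $|\Delta_1(c_1; W(t))| \le K \|w_2(t)\|_\infty \|w_3(t)\|_\infty$ controls the pointwise growth of $w_1(t, c_1)$ from $w_1(0, c_1)$ by a polynomial in $t$. Hence on any compact $[0, T]$, a candidate solution remains in a ball of finite radius $R(T)$ in $\mathcal{B}$ (for $(w_2, w_3)$) with pointwise-in-$c_1$ control for $w_1$.

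For local Lipschitz continuity of $F$, fix $R$ and consider $W, W'$ with the relevant norms at most $R$. The Lipschitz bounds propagate forward: boundedness of $|X|$ and Lipschitz-ness of $\varphi_1$ (a consequence of $\|\varphi_1'\|_\infty \le K$) give $|H_2(x, c_2; W) - H_2(x, c_2; W')| \le C(R) \|W - W'\|_{\mathcal{B}}$; using Lipschitz-ness of $\varphi_2$ then yields the analogous bounds for $H_3$ and $\hat y$. Substituting into $\Delta_1, \Delta_2, \Delta_3$ and invoking Lipschitz-ness and boundedness of $\partial_2\mathcal{L}, \varphi_2', \varphi_3'$ produces $\|F(t, W) - F(t, W')\|_{\mathcal{B}} \le C(R) \|W - W'\|_{\mathcal{B}}$. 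The Picard--Lindel\"of theorem in Banach spaces then yields local existence and uniqueness on some interval $[0, \tau]$, and the \emph{a priori} bounds from the previous paragraph extend the solution to $[0, \infty)$ by the usual continuation argument.

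I expect the main subtlety to lie not in the analytic estimates above---which are essentially forced by Assumption \ref{assump:Regularity}---but in two bookkeeping points. First, one must verify that the Picard iterates remain jointly measurable in $(t, c_1, c_2)$ and that the expectations $\mathbb{E}_{C_1}, \mathbb{E}_{C_2}, \mathbb{E}_Z$ preserve measurability in the free variables; this uses Fubini on the product neuronal ensemble together with dominated convergence at the fixed-point limit. Second, the hypothesis only bounds $w_2(0)$ and $w_3(0)$ essentially, not $w_1(0)$. This is harmless because $w_1$ appears in every drift for $(w_2, w_3)$ only through the $K$-bounded wrappers $\varphi_1(\langle w_1, \cdot\rangle)$ and $\varphi_1'(\langle w_1, \cdot\rangle)$, so unboundedness of $w_1(0)$ does not obstruct the contraction argument on $(w_2, w_3)$, and the $w_1$-equation can be solved pointwise in $c_1$ once $(w_2, w_3)$ are in hand.
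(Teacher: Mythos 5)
Your overall strategy---a priori bounds on $w_2,w_3$ obtained layer by layer from the output side, Lipschitz estimates for $H_2$, $H_3$, $\hat{y}$, $\Delta_2^H$ and hence for the drifts $\Delta_1,\Delta_2,\Delta_3$, and a Picard-type fixed-point argument extended to all of $[0,\infty)$---is essentially the paper's own proof (the paper runs the iteration directly on $[0,T]$ for arbitrary $T$ and gains a $1/k!$ factor from iterating the integral inequality, rather than invoking local Picard--Lindel\"of plus a continuation argument, but this difference is cosmetic). The estimates you state are correct and correspond to Lemmas \ref{lem:a-priori-MF-norms}, \ref{lem:Lipschitz-MF} and \ref{lem:a-priori-MF}.

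However, your device for handling the fact that $w_1(0)$ is not assumed essentially bounded does not work as stated. You propose to run the contraction ``on $(w_2,w_3)$'' and then solve the $w_1$-equation pointwise in $c_1$ ``once $(w_2,w_3)$ are in hand''. But the system is fully coupled: the drifts of $w_3$ and $w_2$ depend on $w_1(t,\cdot)$, since $\Delta_3$ and $\Delta_2$ involve $H_2\left(x,c_2;W(t)\right)=\mathbb{E}_{C_1}\left[w_2\left(t,C_1,c_2\right)\varphi_1\left(\left\langle w_1\left(t,C_1\right),x\right\rangle \right)\right]$ (and, through it, $H_3$ and $\hat{y}$), and $\Delta_2$ additionally carries the factor $\varphi_1\left(\left\langle w_1\left(t,c_1\right),X\right\rangle \right)$; so there is no closed evolution for $(w_2,w_3)$ on which to contract before $w_1$ is known. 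Consequently, if $w_1(0)\notin L^{\infty}\left(\Omega_1;\mathbb{R}^d\right)$ the solution does not lie in your space $\mathcal{B}$ at all, and your scheme as written has a gap. The repair is standard and is what the paper does: keep all three components in the fixed-point scheme, impose boundedness only on the $w_2,w_3$ components (the a priori bounds), fix the common initialization $W(0)$, and metrize trajectories by $\left\Vert W'-W''\right\Vert _T$, the essential supremum of componentwise \emph{differences}. These differences are finite because, under the $w_2,w_3$ bounds, the $w_1$-drift $\Delta_1$ is bounded, and all of your Lipschitz estimates are estimates on differences anyway, so the contraction goes through without ever requiring $w_1(t,\cdot)\in L^{\infty}$.
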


An example of a suitable setup is $\varphi_{1}=\varphi_{2}=\tanh$,
$\varphi_{3}$ is the identity, ${\cal L}$ is the Huber loss, although
a non-convex sufficiently smooth loss function suffices. In fact,
all of our developments can be easily modified to treat the squared
loss with an additional assumption $\left|Y\right|\leq K$ with probability
$1$.

So far, given an arbitrary neuronal ensemble $\left(\Omega,{\cal F},P\right)$,
for each initialization $W\left(0\right)$, we have defined a MF limit
$W\left(t\right)$. The connection with the neural network's dynamics
$\mathbf{W}\left(k\right)$ is established in the next section.

\section{Connection between neural network and its mean field limit\label{sec:connection}}

\subsection{Neuronal embedding and the coupling procedure\label{subsec:Neuronal-Embedding}}

To formalize a connection between the neural network and its MF limit,
we consider their initializations. In practical scenarios, to set
the initial parameters $\mathbf{W}\left(0\right)$ of the neural network,
one typically randomizes $\mathbf{W}\left(0\right)$ according to
some distributional law $\rho$. We note that since the neural network
is defined w.r.t. a set of finite integers $\left\{ n_{1},n_{2}\right\} $,
so is $\rho$. We consider a family $\mathsf{Init}$ of initialization
laws, each of which is indexed by the set $\left\{ n_{1},n_{2}\right\} $:
\[
\mathsf{Init}=\{\rho:\;\rho\text{ is the initialization law of a neural network of size \ensuremath{\left\{  n_{1},n_{2}\right\} } },\;n_{1},n_{2}\in\mathbb{N}_{>0}\}.
\]
This is helpful when one is to take a limit that sends $n_{1},n_{2}\to\infty$,
in which case the size of this family $\left|\mathsf{Init}\right|$
is infinite. More generally we allow $\left|\mathsf{Init}\right|<\infty$
(for example, $\mathsf{Init}$ contains a single law $\rho$ of a
network of size $\left\{ n_{1},n_{2}\right\} $ and hence $\left|\mathsf{Init}\right|=1$).
We make the following crucial definition.
\begin{defn}
\label{def:neuronal_embedding}Given a family of initialization laws
$\mathsf{Init}$, we call $(\Omega,{\cal F},P,\left\{ w_{i}^{0}\right\} _{i=1,2,3})$
a \textit{neuronal embedding }of $\mathsf{Init}$ if the following
holds:

\begin{enumerate}
\item $\left(\Omega,{\cal F},P\right)=\left(\Omega_{1}\times\Omega_{2},{\cal F}_{1}\times{\cal F}_{2},P_{1}\times P_{2}\right)$
a product measurable space. As a reminder, we call it a neuronal ensemble.
\item The deterministic functions $w_{1}^{0}:\;\Omega_{1}\to\mathbb{R}^{d}$,
$w_{2}^{0}:\;\Omega_{1}\times\Omega_{2}\to\mathbb{R}$ and $w_{3}^{0}:\;\Omega_{2}\to\mathbb{R}$
are such that, for each index $\left\{ n_{1},n_{2}\right\} $ of $\mathsf{Init}$
and the law $\rho$ of this index, if --- with an abuse of notations
--- we independently sample $\left\{ C_{i}\left(j_{i}\right)\right\} _{j_{i}\in\left[n_{i}\right]}\sim P_{i}$
i.i.d. for each $i=1,2$, then 
\[
{\rm Law}\left(w_{1}^{0}\left(C_{1}\left(j_{1}\right)\right),\;w_{2}^{0}\left(C_{1}\text{\ensuremath{\left(j_{1}\right)}},C_{2}\left(j_{2}\right)\right),\;w_{3}^{0}\left(C_{2}\text{\ensuremath{\left(j_{2}\right)}}\right),\;\;j_{i}\in\left[n_{i}\right],\;i=1,2\right)=\rho.
\]
\end{enumerate}
\end{defn}

To proceed, given $\mathsf{Init}$ and $\left\{ n_{1},n_{2}\right\} $
in its index set, we perform the following \textit{coupling procedure}:
\begin{enumerate}
\item Let $(\Omega,{\cal F},P,\left\{ w_{i}^{0}\right\} _{i=1,2,3})$ be
a neuronal embedding of $\mathsf{Init}$.
\item We form the MF limit $W\left(t\right)$ (for $t\in\mathbb{R}_{\geq0}$)
associated with the neuronal ensemble $\left(\Omega,{\cal F},P\right)$
by setting the initialization $W\left(0\right)$ to $w_{1}\left(0,\cdot\right)=w_{1}^{0}\left(\cdot\right)$,
$w_{2}\left(0,\cdot,\cdot\right)=w_{2}^{0}\left(\cdot,\cdot\right)$
and $w_{3}\left(0,\cdot\right)=w_{3}^{0}\left(\cdot\right)$ and running
the MF ODEs described in Section \ref{subsec:Mean-field-limit}.
\item We independently sample $C_{i}\left(j_{i}\right)\sim P_{i}$ for $i=1,2$
and $j_{i}=1,...,n_{i}$. We then form the neural network initialization
$\mathbf{W}\left(0\right)$ with $\mathbf{w}_{1}\left(0,j_{1}\right)=w_{1}^{0}\left(C_{1}\left(j_{1}\right)\right)$,
$\mathbf{w}_{2}\left(0,j_{1},j_{2}\right)=w_{2}^{0}\left(C_{1}\left(j_{1}\right),C_{2}\left(j_{2}\right)\right)$
and $\mathbf{w}_{3}\left(0,j_{2}\right)=w_{3}^{0}\left(C_{2}\left(j_{2}\right)\right)$
for $j_{1}\in\left[n_{1}\right]$, $j_{2}\in\left[n_{2}\right]$.
We obtain the network's trajectory $\mathbf{W}\left(k\right)$ for
$k\in\mathbb{N}_{\geq0}$ as in Section \ref{subsec:Three-layer-neural-network},
with the data $z\left(k\right)$ generated independently of $\left\{ C_{i}\left(j_{i}\right)\right\} _{i=1,2}$
and hence $\mathbf{W}\left(0\right)$.
\end{enumerate}
We can then define a measure of closeness between $\mathbf{W}\left(\left\lfloor t/\epsilon\right\rfloor \right)$
and $W\left(t\right)$ for $t\in\left[0,T\right]$: 
\begin{align}
\mathscr{D}_{T}\left(W,\mathbf{W}\right)=\sup\big\{ & \left|\mathbf{w}_{1}\left(\left\lfloor t/\epsilon\right\rfloor ,j_{1}\right)-w_{1}\left(t,C_{1}\left(j_{1}\right)\right)\right|,\;\left|\mathbf{w}_{2}\left(\left\lfloor t/\epsilon\right\rfloor ,j_{1},j_{2}\right)-w_{2}\left(t,C_{1}\left(j_{1}\right),C_{2}\left(j_{2}\right)\right)\right|,\nonumber \\
 & \left|\mathbf{w}_{3}\left(\left\lfloor t/\epsilon\right\rfloor ,j_{2}\right)-w_{3}\left(t,C_{2}\left(j_{2}\right)\right)\right|:\;t\leq T,\;j_{1}\leq n_{1},\;j_{2}\leq n_{2}\big\}.\label{eq:dist_W}
\end{align}
Note that $W\left(t\right)$ is a deterministic trajectory independent
of $\left\{ n_{1},n_{2}\right\} $, whereas $\mathbf{W}\left(k\right)$
is random for all $k\in\mathbb{N}_{\geq0}$ due to the randomness
of $\left\{ C_{i}\left(j_{i}\right)\right\} _{i=1,2}$ and the generation
of the training data $z\left(k\right)$. Similarly $\mathscr{D}_{T}\left(W,\mathbf{W}\right)$
is a random quantity.

The idea of the coupling procedure is closely related to the coupling
argument in \cite{sznitman1991topics,mei2018mean}. Here, instead
of playing the role of a proof technique, the coupling serves as a
vehicle to establish the connection between $W$ and $\mathbf{W}$
on the basis of the neuronal embedding. This connection is shown in
Theorem \ref{thm:gradient descent coupling} below, which gives an
upper bound on $\mathscr{D}_{T}\left(W,\mathbf{W}\right)$.

We note that the coupling procedure can be carried out to provide
a connection between $W$ and $\mathbf{W}$ \textsl{as long as there
exists a neuronal embedding for $\mathsf{Init}$}. Later in Section
\ref{subsec:I.i.d.-initialization}, we show that for a common initialization
scheme (in particular, i.i.d. initialization) for $\mathsf{Init}$,
there exists a neuronal embedding. Theorem \ref{thm:gradient descent coupling}
applies to, but is not restricted to, this initialization scheme.

\subsection{Main result: approximation by the MF limit}
\begin{assumption}[Initialization of second and third layers]
\label{assump:Regularity-init}We assume that ${\rm ess\text{-}sup}\left|w_{2}^{0}\left(C_{1},C_{2}\right)\right|$,
${\rm ess\text{-}sup}\left|w_{3}^{0}\left(C_{2}\right)\right|\leq K$,
where $w_{2}^{0}$ and $w_{3}^{0}$ are as described in Definition
\ref{def:neuronal_embedding}.
\end{assumption}

\begin{thm}
\label{thm:gradient descent coupling}Given a family $\mathsf{Init}$
of initialization laws and a tuple $\left\{ n_{1},n_{2}\right\} $
that is in the index set of $\mathsf{Init}$, perform the coupling
procedure as described in Section \ref{subsec:Neuronal-Embedding}.
Fix a terminal time $T\in\epsilon\mathbb{N}_{\geq0}$. Under Assumptions
\ref{assump:Regularity} and \ref{assump:Regularity-init}, for $\epsilon\leq1$,
we have with probability at least $1-2\delta$,
\begin{align*}
\mathscr{D}_{T}\left(W,\mathbf{W}\right) & \leq e^{K_{T}}\left(\frac{1}{\sqrt{n_{\min}}}+\sqrt{\epsilon}\right)\log^{1/2}\left(\frac{3\left(T+1\right)n_{\max}^{2}}{\delta}+e\right)\equiv\mathsf{err}_{\delta,T}\left(\epsilon,n_{1},n_{2}\right),
\end{align*}
in which $n_{\min}=\min\left\{ n_{1},n_{2}\right\} $, $n_{\max}=\max\left\{ n_{1},n_{2}\right\} $,
and $K_{T}=K\left(1+T^{K}\right)$.
\end{thm}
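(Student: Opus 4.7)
I would follow the propagation-of-chaos style coupling argument of \cite{sznitman1991topics,mei2018mean}, adapted to the three-layer structure. The goal is to derive a discrete Gr\"onwall recursion for $\mathscr{D}_{k\epsilon}(W,\mathbf{W})$: at each SGD step, its growth is controlled by the current value of $\mathscr{D}$ (Lipschitz propagation of previous error), a Monte Carlo fluctuation from replacing $\mathbb{E}_{C_i}$ by empirical averages over the neurons $j_i$, a stochastic fluctuation from replacing $\mathbb{E}_Z$ by the single sample $z(k)$, and an $O(\epsilon^2)$ Euler discretization remainder.

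\textbf{One-step decomposition.} For each $i\in\{1,2,3\}$ and each neuron, I write $\mathbf{w}_i(k+1,\cdot)-w_i((k+1)\epsilon,C_i(\cdot))$ as the previous weight difference minus $\epsilon\xi_i(k\epsilon)$ times the difference between the SGD velocity ${\rm Grad}_i(z(k),\cdot;\mathbf{W}(k))$ and the MF velocity $\Delta_i(C_i(\cdot);W(k\epsilon))$, plus an $O(\epsilon^2)$ ODE-discretization term (bounded using the $K$-boundedness of $\xi_i$ and of the MF velocities). I split the velocity difference into (i) the part obtained by replacing the SGD weights $\mathbf{W}(k)$ by the MF weights $W(k\epsilon)$ evaluated at the coupled points $\{C_i(j_i)\}$, which by the $K$-Lipschitz/bounded properties of $\varphi_1,\varphi_2,\varphi_1',\varphi_2',\varphi_3',\partial_2\mathcal{L}$ (Assumption \ref{assump:Regularity}) is bounded by $K\mathscr{D}_{k\epsilon}(W,\mathbf{W})$ after chaining through $\mathbf{H}_2,\mathbf{H}_3,\Delta_2^{\mathbf{H}}$; and (ii) pure Monte Carlo fluctuations of MF quantities together with the SGD data-noise. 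The chaining uses an a priori bound $\sup_{t\leq T}(\|w_2(t,\cdot,\cdot)\|_{\infty}+\|w_3(t,\cdot)\|_{\infty})\leq K e^{K T}$, obtained by a Gr\"onwall on the MF ODE itself starting from Assumption \ref{assump:Regularity-init}, with a matching bound on the SGD side holding once $\mathscr{D}$ is under control.

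\textbf{Concentration and Gr\"onwall.} The Monte Carlo fluctuations are sums such as $\frac{1}{n_1}\sum_{j_1}w_2(k\epsilon,C_1(j_1),C_2(j_2))\varphi_1(\langle w_1(k\epsilon,C_1(j_1)),x\rangle)-\mathbb{E}_{C_1}[\cdots]$, which conditionally on $C_2(j_2)$ and the data are sums of $n_1$ i.i.d. $K$-bounded terms, and analogously for the $(1/n_2)\sum_{j_2}$ averages in $\mathbf{H}_3$ and in the inner expectation appearing in $\Delta_1$; Hoeffding gives tails of order $\sqrt{\log(1/\delta')/n_{\min}}$. The data fluctuation $\xi_i(k\epsilon)[{\rm Grad}_i(z(k),\cdot;\mathbf{W}(k))-\mathbb{E}_Z(\cdots)]$, summed over $k$, is a martingale-difference sequence with $K\epsilon$-bounded increments, and Azuma yields $\sqrt{\epsilon T\log(1/\delta')}$. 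A union bound over $k\leq T/\epsilon$, $j_1\leq n_1$, $j_2\leq n_2$, and $i\in\{1,2,3\}$ (inflating $1/\delta'$ to roughly $3(T+1)n_{\max}^2/\delta$) produces the factor $\log^{1/2}(3(T+1)n_{\max}^2/\delta+e)$. Plugging into the one-step inequality gives, on a good event of probability at least $1-2\delta$,
\[
\mathscr{D}_{(k+1)\epsilon} \leq (1+K\epsilon)\,\mathscr{D}_{k\epsilon} + K\epsilon\bigl(n_{\min}^{-1/2}+\sqrt{\epsilon}\bigr)\log^{1/2}(\cdots) + K\epsilon^2,
\]
and a discrete Gr\"onwall delivers the stated bound with exponential $e^{K_T}$ and $K_T=K(1+T^K)$.

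\textbf{Main obstacle.} The delicate point is the backward term $\Delta_1(c_1;W)$, which contains the inner expectation $\mathbb{E}_{C_2}[\Delta_2^H(Z,C_2;W)\,w_2(t,c_1,C_2)]$; on the network side this becomes the empirical average $\frac{1}{n_2}\sum_{j_2}\Delta_2^{\mathbf{H}}(z,j_2;\mathbf{W})\,\mathbf{w}_2(k,j_1,j_2)$ with $c_1=C_1(j_1)$ fixed. Since $\mathbf{w}_2(k,j_1,j_2)$ and $\Delta_2^{\mathbf{H}}(z,j_2;\mathbf{W}(k))$ both depend on all $\{C_1(j_1')\}$ through the SGD trajectory, the summands are not jointly independent of $C_1(j_1)$, and a naive Hoeffding over $j_2$ would fail. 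After the Lipschitz comparison, the MC quantity one actually needs to concentrate is on the MF side, namely $\frac{1}{n_2}\sum_{j_2}\Delta_2^{H}(z,C_2(j_2);W(k\epsilon))\,w_2(k\epsilon,C_1(j_1),C_2(j_2)) - \mathbb{E}_{C_2}[\cdots]$, which, conditionally on $\{C_1(j_1)\}_{j_1\in[n_1]}$ and on the data, is a sum of $n_2$ i.i.d. $K$-bounded summands. The neuronal-embedding formulation is precisely what makes this conditional independence clean: the abstract product structure $\Omega_1\times\Omega_2$ separates the sources of randomness once and for all, so the conditional independence across $j_2$ under fixed $C_1(j_1)$ is transparent. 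Executing this layered concentration with uniform constants over $k,j_1,j_2$, together with the a priori weight-boundedness on $[0,T]$ needed to keep the summands bounded, is the bulk of the technical work; the remaining Lipschitz propagation and discrete Gr\"onwall are routine.
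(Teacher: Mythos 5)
Your overall architecture is sound and close in spirit to the paper's: the paper also reduces everything to (i) Lipschitz propagation under the a priori bounds $\interleave W\interleave_T\leq K_T$, (ii) Hoeffding-type concentration of the neuron averages taken \emph{on the MF side} (your resolution of the $\Delta_1$ dependence obstacle --- concentrating $\frac{1}{n_2}\sum_{j_2}\Delta_2^H(z,C_2(j_2);W)w_2(t,C_1(j_1),C_2(j_2))$ conditionally on $C_1(j_1)$ --- is exactly the paper's treatment of its $Q_{1,2}$ term), (iii) Azuma for the data-sampling martingale, and (iv) Gronwall. The main structural difference is that the paper inserts an auxiliary continuous-time \emph{particle ODE} system $\tilde W$ sharing the initialization, and proves two separate couplings $\mathscr{D}_T(W,\tilde W)$ and $\mathscr{D}_T(\tilde W,\mathbf{W})$, whereas you run one merged discrete recursion; the split is what lets the paper keep the martingale increments measurable with respect to data-independent quantities and cleanly separate the $n^{-1/2}$ and $\sqrt{\epsilon}$ errors, but your direct recursion could in principle be made to work.

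The genuine gap is in how you obtain uniformity in time for the Monte Carlo (neuron-average) fluctuations. You take a union bound over all SGD steps $k\leq T/\epsilon$ and claim this inflates $1/\delta'$ to ``roughly $3(T+1)n_{\max}^2/\delta$''; it does not --- it inflates it by a factor of order $T/\epsilon$, so your bound would carry $\log^{1/2}\bigl(Tn_{\max}^2/(\epsilon\delta)\bigr)$ and the term $n_{\min}^{-1/2}\log^{1/2}(1/\epsilon)$ diverges as $\epsilon\to0$ at fixed widths, which is strictly weaker than the stated $\mathsf{err}_{\delta,T}$ and would break its use in Corollary \ref{cor:global-optimum-3-NN}, where $\epsilon\to0$ is taken first. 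The missing ingredient is the paper's time-regularity step (Claim 1 in the proof of Theorem \ref{thm:particle coupling}): the MF and particle velocities are Lipschitz in $t$ with constant $K_T$, so the supremum over $t\in[0,T]$ of the fluctuation terms can be controlled by a union bound over a grid of spacing $\xi\sim n_{\max}^{-1/2}$, \emph{independent of} $\epsilon$, which is what produces the $\epsilon$-free factor $\log^{1/2}\bigl(3(T+1)n_{\max}^{2}/\delta+e\bigr)$. A secondary wrinkle: the quantity you must concentrate is a data-averaged fluctuation of the form $\mathbb{E}_{Z}\bigl[\bigl|\frac{1}{n}\sum_i f_i(C_i,X)-\mathbb{E}_{C}[f(C,X)]\bigr|\bigr]$ (both the SGD drift after centering and the MF drift are expectations over $Z$), so plain Hoeffding ``conditionally on the data'' is not enough --- you need a statement like the paper's Lemma \ref{lem:square hoeffding}, which bounds exactly this data-integrated deviation; this is fixable but should be stated, since one cannot union-bound over a continuous data distribution.
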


The theorem gives a connection between $\mathbf{W}\left(\left\lfloor t/\epsilon\right\rfloor \right)$,
which is defined upon finite widths $n_{1}$ and $n_{2}$, and the
MF limit $W\left(t\right)$, whose description is independent of $n_{1}$
and $n_{2}$. It lends a way to extract properties of the neural network
in the large-width regime.
\begin{cor}
\label{cor:gradient descent quality}Under the same setting as Theorem
\ref{thm:gradient descent coupling}, consider any test function $\psi:\mathbb{R}\times\mathbb{R}\to\mathbb{R}$
which is $K$-Lipschitz in the second variable uniformly in the first
variable (an example of $\psi$ is the loss ${\cal L}$). For any
$\delta>0$, with probability at least $1-3\delta$,
\[
\sup_{t\leq T}\left|\mathbb{E}_{Z}\left[\psi\left(Y,\hat{\mathbf{y}}\left(X;\mathbf{W}\left(\left\lfloor t/\epsilon\right\rfloor \right)\right)\right)\right]-\mathbb{E}_{Z}\left[\psi\left(Y,\hat{y}\left(X;W\left(t\right)\right)\right)\right]\right|\leq e^{K_{T}}\mathsf{err}_{\delta,T}\left(\epsilon,n_{1},n_{2}\right).
\]
\end{cor}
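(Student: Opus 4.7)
The plan is to reduce the test-function comparison to a pointwise bound on $|\mathbf{H}_3(X;\mathbf{W}) - H_3(X;W)|$, decompose this difference into a coupling part controlled by Theorem~\ref{thm:gradient descent coupling} and Monte Carlo parts controlled by concentration inequalities, and combine the two via a union bound.

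First, using the Lipschitz assumption on $\psi$ in its second variable, boundedness of $\varphi_3'$, and Jensen, one has $|\mathbb{E}_Z\psi(Y,\hat{\mathbf{y}})-\mathbb{E}_Z\psi(Y,\hat{y})|\le K\,\mathbb{E}_X|\mathbf{H}_3(X;\mathbf{W}(\lfloor t/\epsilon\rfloor))-H_3(X;W(t))|$. I would then introduce the empirical intermediates $\tilde H_2(x,c_2;t):=\frac{1}{n_1}\sum_{j_1}w_2(t,C_1(j_1),c_2)\varphi_1(\langle w_1(t,C_1(j_1)),x\rangle)$, $\tilde H_3(x;t):=\frac{1}{n_2}\sum_{j_2}w_3(t,C_2(j_2))\varphi_2(H_2(x,C_2(j_2);W(t)))$, and $\tilde H_3^\flat(x;t):=\frac{1}{n_2}\sum_{j_2}w_3(t,C_2(j_2))\varphi_2(\tilde H_2(x,C_2(j_2);t))$, and split $\mathbf{H}_3-H_3=(\mathbf{H}_3-\tilde H_3^\flat)+(\tilde H_3^\flat-\tilde H_3)+(\tilde H_3-H_3)$. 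For the coupling piece $\mathbf{H}_3-\tilde H_3^\flat$, adding and subtracting inside the sums and invoking (i) the $K_T$-boundedness of $w_i(t,\cdot)$ on $[0,T]$ (which follows from Assumption~\ref{assump:Regularity-init} by a Gronwall argument on the MF ODEs, as in Theorem~\ref{thm:existence ODE}), (ii) the boundedness and Lipschitzness of $\varphi_i,\varphi_i'$, and (iii) $|X|\le K$, yields $|\mathbf{H}_2(x,j_2;\mathbf{W})-\tilde H_2(x,C_2(j_2);t)|\le K_T\mathscr{D}_T(W,\mathbf{W})$ and hence $|\mathbf{H}_3-\tilde H_3^\flat|\le K_T\mathscr{D}_T(W,\mathbf{W})$; Theorem~\ref{thm:gradient descent coupling} then bounds this by $K_T\,\mathsf{err}_{\delta,T}$ with probability $\ge 1-2\delta$.

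Second, the Monte Carlo pieces $\tilde H_3^\flat-\tilde H_3$ and $\tilde H_3-H_3$ are, for fixed $(t,x)$, built from $K_T$-bounded i.i.d.\ centered averages over $\{C_2(j_2)\}$ at the outer level and over $\{C_1(j_1)\}$ at the inner level (the middle term reduces to averages of $|\tilde H_2-H_2|$ by Lipschitz-expanding $\varphi_2$). To avoid an $x$-net, which would incur a dimension-dependent cost, I would apply McDiarmid's inequality to the $\mathbb{E}_X$-averaged functions $G_3(t):=\mathbb{E}_X|\tilde H_3(X;t)-H_3(X;W(t))|$ (seen as a function of $\{C_2(j_2)\}$ with bounded differences $\le 2K_T/n_2$) and $G_2(t,c_2):=\mathbb{E}_X|\tilde H_2(X,c_2;t)-H_2(X,c_2;W(t))|$ (seen, conditionally on $\{C_2(j_2)\}$, as a function of $\{C_1(j_1)\}$ with bounded differences $\le 2K_T/n_1$); Jensen bounds the means by $K_T/\sqrt{n_2}$ and $K_T/\sqrt{n_1}$, so McDiarmid gives $G_3(t),\,G_2(t,c_2)\le K_T\sqrt{\log(1/\delta')/n_{\min}}$ with probability $\ge 1-\delta'$. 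Uniformity in $t\in[0,T]$ and in $j_2\le n_2$ is obtained by taking a $t$-net of cardinality $O(T n_{\max}^2)$, union-bounding with $\delta'=\delta/(T n_{\max}^2\cdot n_2)$, and absorbing the interpolation error using $K_T$-Lipschitzness of $G_3$ and $G_2(\cdot,c_2)$ in $t$ (inherited from MF ODE regularity).

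Combining the coupling and Monte Carlo bounds on the intersection of the two events yields the claim on an event of probability $\ge 1-3\delta$, with the compounded $K_T$ constants absorbed into a single $e^{K_T}$ prefactor and the union-bound logarithm matching the $\log^{1/2}(3(T+1)n_{\max}^2/\delta+e)$ appearing in $\mathsf{err}_{\delta,T}$. The main obstacle I anticipate is the careful book-keeping of $K_T$ constants as they compound through the two-layer decomposition, together with the design of the decomposition so that the Monte Carlo parts can be controlled by McDiarmid on the $\mathbb{E}_X$-averaged functions rather than by a dimension-dependent covering argument in $x$.
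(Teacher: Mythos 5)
Your proposal is correct and follows the same overall strategy as the paper: reduce, via the Lipschitz property of $\psi$ and the boundedness of $\varphi_{3}'$, to bounding $\sup_{t\leq T}\mathbb{E}_{Z}\left[\left|\mathbf{H}_{3}\left(X;\mathbf{W}\left(\lfloor t/\epsilon\rfloor\right)\right)-H_{3}\left(X;W\left(t\right)\right)\right|\right]$, split this into a piece controlled by $\mathscr{D}_{T}\left(W,\mathbf{W}\right)$ (hence by Theorem \ref{thm:gradient descent coupling}, probability $1-2\delta$) and Monte Carlo fluctuation pieces controlled by concentration uniformly in $t$ (probability $1-\delta$), then union bound to get $1-3\delta$. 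The differences are technical rather than conceptual. The paper's proof adds and subtracts $H_{3}\left(X;W\left(\lfloor t/\epsilon\rfloor\epsilon\right)\right)$ and then simply re-uses estimates already established inside the proofs of Theorems \ref{thm:particle coupling} and \ref{thm:gradient descent}: the time-interpolation bound $K_{T}\epsilon$, the coupling bound routed through the particle-ODE intermediary $\tilde{W}$, and the empirical-average fluctuation terms handled by the Hilbert-space Hoeffding bound of Lemma \ref{lem:square hoeffding} on a time grid. You instead bypass the particle ODE and the explicit discretization term (both are folded into the definition of $\mathscr{D}_{T}$) by comparing $\mathbf{H}_{3}$ directly with empirical averages of the MF parameters evaluated at the sampled indices $C_{i}\left(j_{i}\right)$, and you control the fluctuations by McDiarmid applied to the $\mathbb{E}_{X}$-averaged deviations together with a $t$-net and the $K_{T}$-Lipschitzness in time; this is a legitimate, self-contained substitute for Lemma \ref{lem:square hoeffding}, and the slightly different logarithm from your union bound ($\log\left(Tn_{\max}^{3}/\delta\right)$ versus $\log\left(3\left(T+1\right)n_{\max}^{2}/\delta+e\right)$) is indeed absorbable into the generic $e^{K_{T}}$ factor, as you note. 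What the paper's route buys is brevity, since the needed estimates are already proved; what yours buys is a proof of the corollary that does not require re-opening the proofs of Theorems \ref{thm:particle coupling} and \ref{thm:gradient descent}.
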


These bounds hold for any $n_{1}$ and $n_{2}$, similar to \cite{mei2018mean,araujo2019mean},
in contrast with non-quantitative results in \cite{chizat2018,sirignano2019mean}.
These bounds suggest that $n_{1}$ and $n_{2}$ can be chosen independent
of the data dimension $d$. This agrees with the experiments in \cite{nguyen2019mean},
which found $\text{width}\approx1000$ to be typically sufficient
to observe MF behaviors in networks trained with real-life high-dimensional
data.

We observe that the MF trajectory $W\left(t\right)$ is defined as
per the choice of the neuronal embedding $(\Omega,{\cal F},P,\left\{ w_{i}^{0}\right\} _{i=1,2,3})$,
which may not be unique. On the other hand, the neural network's trajectory
$\mathbf{W}\left(k\right)$ depends on the randomization of the initial
parameters $\mathbf{W}\left(0\right)$ according to an initialization
law from the family $\mathsf{Init}$ (as well as the data $z\left(k\right)$)
and hence is independent of this choice. Another corollary of Theorem
\ref{thm:gradient descent coupling} is that given the same family
$\mathsf{Init}$, the law of the MF trajectory is insensitive to the
choice of the neuronal embedding of $\mathsf{Init}$.
\begin{cor}
\label{cor:MF_insensitivity}Consider a family $\mathsf{Init}$ of
initialization laws, indexed by a set of tuples $\left\{ m_{1},m_{2}\right\} $
that contains a sequence of indices $\left\{ m_{1}\left(m\right),m_{2}\left(m\right):\;m\in\mathbb{N}\right\} $
in which as $m\to\infty$, $\min\left\{ m_{1}\left(m\right),m_{2}\left(m\right)\right\} ^{-1}\log\left(\max\left\{ m_{1}\left(m\right),m_{2}\left(m\right)\right\} \right)\to0$.
Let $W\left(t\right)$ and $\hat{W}\left(t\right)$ be two MF trajectories
associated with two choices of neuronal embeddings of $\mathsf{Init}$,
$(\Omega,{\cal F},P,\left\{ w_{i}^{0}\right\} _{i=1,2,3})$ and $(\hat{\Omega},\hat{{\cal F}},\hat{P},\left\{ \hat{w}_{i}^{0}\right\} _{i=1,2,3})$.
The following statement holds for any $T\geq0$ and any two positive
integers $n_{1}$ and $n_{2}$: if we independently sample $C_{i}\left(j_{i}\right)\sim P_{i}$
and $\hat{C}_{i}\left(j_{i}\right)\sim\hat{P}_{i}$ for $j_{i}\in\left[n_{i}\right]$,
$i=1,2$, then ${\rm Law}\left({\cal W}\left(n_{1},n_{2},T\right)\right)={\rm Law}(\hat{{\cal W}}\left(n_{1},n_{2},T\right))$,
where we define ${\cal W}\left(n_{1},n_{2},T\right)$ as the below
collection w.r.t. $W\left(t\right)$, and similarly define $\hat{{\cal W}}\left(n_{1},n_{2},T\right)$
w.r.t. $\hat{W}\left(t\right)$:
\begin{align*}
{\cal W}\left(n_{1},n_{2},T\right) & =\big\{ w_{1}\left(t,C_{1}\left(j_{1}\right)\right),\;w_{2}\left(t,C_{1}\left(j_{1}\right),C_{2}\left(j_{2}\right)\right),\;w_{3}\left(t,C_{2}\left(j_{2}\right)\right):\\
 & \qquad j_{1}\in\left[n_{1}\right],\;j_{2}\in\left[n_{2}\right],\;t\in\left[0,T\right]\big\}.
\end{align*}
\end{cor}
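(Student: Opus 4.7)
The plan is to exploit Theorem \ref{thm:gradient descent coupling} together with the observation that the SGD trajectory $\mathbf{W}$ produced by the coupling procedure has a law depending only on $\mathsf{Init}$ and the training dynamics; the neuronal embedding does not survive into the law of $\mathbf{W}$. Since the MF trajectory values at the sampled neurons are uniformly close (in $t \leq T$) to the corresponding neural network weights in the large-width, small-step regime, the MF values inherit this embedding-invariance in the limit.

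Fix $n_1, n_2, T$ and a bounded Lipschitz test function $\Psi$ on the space of such collections (equipped with the uniform-in-$t$ norm compatible with Theorem \ref{thm:gradient descent coupling}). For each $m$ in the prescribed sequence with $m_1(m) \geq n_1$ and $m_2(m) \geq n_2$, and each $\epsilon > 0$ with $T \in \epsilon \mathbb{N}$, carry out the coupling procedure of Section \ref{subsec:Neuronal-Embedding} separately for both embeddings at widths $\{m_1(m), m_2(m)\}$, sampling $C_i(j_i) \sim P_i$ and $\hat{C}_i(j_i) \sim \hat{P}_i$ independently. By Definition \ref{def:neuronal_embedding}, both $\mathbf{W}(0)$ and $\hat{\mathbf{W}}(0)$ have the common law $\rho \in \mathsf{Init}$ indexed by $\{m_1(m), m_2(m)\}$. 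Since the SGD updates depend only on this initialization law, the activation/loss functions, the learning-rate schedules, and the common data distribution $\mathcal{P}$, the full processes $\mathbf{W}$ and $\hat{\mathbf{W}}$ have identical joint laws, and so do their restrictions $\mathbf{W}^{\leq n}, \hat{\mathbf{W}}^{\leq n}$ to indices $j_i \leq n_i$ (which is a fixed marginal projection).

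Applying Theorem \ref{thm:gradient descent coupling} to each embedding and taking a union bound, outside an event of probability at most $4\delta$ both $|\Psi(\mathcal{W}) - \Psi(\mathbf{W}^{\leq n})|$ and $|\Psi(\hat{\mathcal{W}}) - \Psi(\hat{\mathbf{W}}^{\leq n})|$ are bounded by a Lipschitz-constant multiple of $\mathsf{err}_{\delta, T}(\epsilon, m_1(m), m_2(m))$. Combining with $\mathbb{E}[\Psi(\mathbf{W}^{\leq n})] = \mathbb{E}[\Psi(\hat{\mathbf{W}}^{\leq n})]$ and the boundedness of $\Psi$, the triangle inequality yields $|\mathbb{E}[\Psi(\mathcal{W})] - \mathbb{E}[\Psi(\hat{\mathcal{W}})]| \leq K(\mathsf{err}_{\delta, T}(\epsilon, m_1(m), m_2(m)) + \delta)$. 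Sending $m \to \infty$ (using the hypothesis $m_{\min}(m)^{-1} \log m_{\max}(m) \to 0$), then $\epsilon \to 0$, then $\delta \to 0$ drives the right-hand side to zero, so $\mathbb{E}[\Psi(\mathcal{W})] = \mathbb{E}[\Psi(\hat{\mathcal{W}})]$ for every such $\Psi$, which is sufficient to conclude equality of the full laws. The main obstacle is conceptual rather than technical: one must recognize that the coupling procedure draws $\mathbf{W}(0)$ from a law $\rho$ \emph{intrinsic to $\mathsf{Init}$} alone, so the SGD process carries no memory of which embedding produced it. Once this is in view, the quantitative coupling of Theorem \ref{thm:gradient descent coupling} closes the remaining gap essentially for free.
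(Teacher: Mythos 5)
Your proposal is correct and follows essentially the same route as the paper's proof: the key observation in both is that the finite-width SGD trajectory obtained from the coupling procedure has a law determined solely by $\mathsf{Init}$ and the data distribution (hence identical for both embeddings), after which Theorem \ref{thm:gradient descent coupling} is applied to each embedding at widths $\left\{ m_{1}\left(m\right),m_{2}\left(m\right)\right\} $ and the limits $m\to\infty$, $\epsilon\to0$, $\delta\to0$ are taken. The only difference is cosmetic: you conclude equality of laws by testing against bounded Lipschitz functionals (handling the small-probability bad event via boundedness of $\Psi$), whereas the paper phrases the same limiting argument as a bound on a Wasserstein-type distance between the two collections.
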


The proofs are deferred to Appendix \ref{sec:Connection-proof}.

\section{Convergence to global optima\label{sec:global_conv}}

In this section, we prove a global convergence guarantee for three-layer
neural networks via the MF limit. We consider a common class of initialization:
i.i.d. initialization.

\subsection{I.i.d. initialization\label{subsec:I.i.d.-initialization}}
\begin{defn}
An initialization law $\rho$ for a neural network of size $\left\{ n_{1},n_{2}\right\} $
is called $\left(\rho^{1},\rho^{2},\rho^{3}\right)$-i.i.d. initialization
(or i.i.d. initialization, for brevity), where $\rho^{1}$, $\rho^{2}$
and $\rho^{3}$ are probability measures over $\mathbb{R}^{d}$, $\mathbb{R}$
and $\mathbb{R}$ respectively, if $\left\{ \mathbf{w}_{1}\left(0,j_{1}\right)\right\} _{j_{1}\in\left[n_{1}\right]}$
are generated i.i.d. according to $\rho^{1}$, $\left\{ \mathbf{w}_{2}\left(0,j_{1},j_{2}\right)\right\} _{j_{1}\in\left[n_{1}\right],\;j_{2}\in\left[n_{2}\right]}$
are generated i.i.d. according to $\rho^{2}$ and $\left\{ \mathbf{w}_{3}\left(0,j_{2}\right)\right\} _{j_{2}\in\left[n_{2}\right]}$
are generated i.i.d. according to $\rho^{3}$, and $\mathbf{w}_{1}$,
$\mathbf{w}_{2}$ and $\mathbf{w}_{3}$ are independent.
\end{defn}

Observe that given $\left(\rho^{1},\rho^{2},\rho^{3}\right)$, one
can build a family $\mathsf{Init}$ of i.i.d. initialization laws
that contains \textsl{any} index set $\left\{ n_{1},n_{2}\right\} $.
Furthermore i.i.d. initializations are supported by our framework,
as stated in the following proposition and proven in Appendix \ref{sec:Global-convergence-proof}.
\begin{prop}
\label{prop:iid_law_det_representable}There exists a neuronal embedding
$\left(\Omega,{\cal F},P,\left\{ w_{i}^{0}\right\} _{i=1,2,3}\right)$
for any family $\mathsf{Init}$ of initialization laws, which are
$\left(\rho^{1},\rho^{2},\rho^{3}\right)$-i.i.d.
\end{prop}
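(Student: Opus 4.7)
I aim to construct $(\Omega_1 \times \Omega_2, \mathcal{F}_1 \times \mathcal{F}_2, P_1 \times P_2, \{w_i^0\}_{i=1,2,3})$ such that, simultaneously for every $(n_1, n_2)$ in the index set of $\mathsf{Init}$, sampling $C_i(j_i) \sim P_i$ i.i.d.\ and pushing through $w_i^0$ reproduces the i.i.d.\ law $\rho$. The plan is to split $\Omega_i = A_i \times B_i$ with $P_i$ a product measure: the $A_1, A_2$ factors will feed $w_1^0$ and $w_3^0$, while $B_1 \times B_2$ will feed $w_2^0$, so that the mutual independence between the three groups of weights comes essentially for free from the product structure.

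\textbf{Outer layers.} Take $(A_1, P_{A_1}) = (A_2, P_{A_2}) = ([0,1], \mathrm{Leb})$. Since every Borel probability measure on a standard Borel space is a pushforward of Lebesgue (Borel isomorphism), I pick measurable maps $w_1^0 : A_1 \to \mathbb{R}^d$ with $(w_1^0)_* P_{A_1} = \rho^1$ and $w_3^0 : A_2 \to \mathbb{R}$ with $(w_3^0)_* P_{A_2} = \rho^3$, and extend them to $\Omega_1, \Omega_2$ by ignoring the $B_i$ coordinate. Then $w_1^0(C_1(j_1))$ and $w_3^0(C_2(j_2))$ are automatically i.i.d.\ with the right marginals, because the $A_i$-coordinates of the $C_i(j_i)$'s are i.i.d.

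\textbf{Middle layer and the main obstacle.} On $B_1 \times B_2$, I want the full array $(w_2^0(C_1(j_1), C_2(j_2)))_{j_1, j_2}$ to be i.i.d.\ $\rho^2$ for every $(n_1, n_2)$, and independent of the outer layers. The plan is to arrange $(B_1, P_{B_1})$ so that it carries, for each $c_1$, an entire family $\{X_{c_2}(c_1)\}_{c_2 \in B_2}$ of $\rho^2$-valued random variables that are mutually independent under $P_{B_1}$ --- concretely, a Kolmogorov-extension construction on the product space indexed by $B_2$ --- and set $w_2^0(c_1, c_2) := X_{c_2}(c_1)$. Different $c_1$'s then give independent fields, and distinct $c_2$'s (occurring with probability one when $P_{B_2}$ is atomless) pick out independent coordinates, so the array is i.i.d. The principal obstacle is joint measurability of $(c_1, c_2) \mapsto X_{c_2}(c_1)$ with respect to the cylinder product $\sigma$-algebra: no separable (Polish) probability space supports an uncountable family of pairwise independent nontrivial random variables, since $L^2$ is separable and independence forces orthogonality of centered variables, so $B_1$ must live on a non-separable space and the naive evaluation map need not be product-measurable. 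I would resolve this either by (i) enlarging $\mathcal{F}_1 \times \mathcal{F}_2$ to include the evaluation events $\{(c_1, c_2): X_{c_2}(c_1) \in A\}$ and extending the product measure by Carath\'eodory / universal completion, or (ii) observing that the law condition only probes the finite-dimensional marginals at the $n_1 + n_1 n_2 + n_2$ queried points, so that a jointly measurable separable modification of the field, together with a consistent family of finite-dimensional marginals supplied by Kolmogorov's extension theorem applied directly to $\rho$, is enough. Once measurability is in place, the remaining checks --- correct marginals, independence across the three groups, and i.i.d.\ sampling across $(j_1, j_2)$ --- reduce to Fubini and the definition of product measure.
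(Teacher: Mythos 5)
Your construction is essentially the paper's own proof of Proposition \ref{prop:iid_law_det_representable}: the paper also obtains $w_1^0,w_3^0$ as pushforwards and obtains $w_2^0$ from a Kolmogorov-extension ``white noise'' field --- it takes an auxiliary space $(\Lambda,{\cal G},P_0)$ carrying variables $p_2(\theta_1,\theta_2)$, $(\theta_1,\theta_2)\in[0,1]^2$, which are i.i.d.\ $\rho^2$ over distinct index pairs, sets $\Omega_i=\Lambda\times[0,1]$ with $P_i=P_0\times\mathrm{Unif}([0,1])$, and defines $w_2^0\left(\left(\lambda_1,\theta_1\right),\left(\lambda_2,\theta_2\right)\right)=p_2(\theta_1,\theta_2)(\lambda_2)$. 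Your version merely transposes which factor carries the sample point and which carries the index (field on $B_1$, indexed by $B_2$), and your verification of the finite-dimensional law via almost-sure distinctness of the sampled indices is exactly the check the paper declares easy; so in substance the two arguments coincide.

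Where you diverge is the measurability discussion, and there you should be careful: your escape route (ii) is not available. If $w_2^0$ were measurable for the product $\sigma$-algebra ${\cal F}_1\times{\cal F}_2$ (or its $P_1\times P_2$-completion) and the $2\times2$ array $w_2^0\left(C_1(j_1),C_2(j_2)\right)$ were i.i.d.\ with nondegenerate $\rho^2$, then, writing $\tilde f$ for a bounded centered truncation of $w_2^0$, the vanishing of $\mathbb{E}\left[\tilde f(C_1,C_2)\tilde f(C_1,C_2')\tilde f(C_1',C_2)\tilde f(C_1',C_2')\right]$ forces $F(c_1,c_1')=\int\tilde f(c_1,c_2)\tilde f(c_1',c_2)P_2(dc_2)=0$ for $P_1\otimes P_1$-almost every pair, so the integral operator with kernel $\tilde f$ satisfies $T^{*}T=0$, hence $T=0$ and $\tilde f=0$ almost everywhere, contradicting positive variance. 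This is the same computation showing that a continuum of independent nondegenerate variables admits no jointly measurable version, so ``a jointly measurable separable modification'' of your field does not exist, and the universal-completion part of (i) does not help either, since Fubini survives completion; the only genuine repair in the spirit of (i) is to pass to an extension strictly larger than the product $\sigma$-algebra (a Fubini-extension type construction), which leaves the product-space format stipulated in Definition \ref{def:neuronal_embedding} and would require re-justifying the Fubini manipulations used downstream. The paper's proof does not engage with this point at all, so your proposal matches it in substance; but you should either present the construction at the paper's level --- define the field and verify the law at the almost-surely distinct sampled indices --- or flag strict product-measurability of $w_2^0$ as an unresolved subtlety, rather than claim it can be fixed by a measurable modification.
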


\subsection{Main result: global convergence}

To measure the learning quality, we consider the loss averaged over
the data $Z\sim{\cal P}$:
\[
\mathscr{L}\left(V\right)=\mathbb{E}_{Z}\left[{\cal L}\left(Y,\hat{y}\left(X;V\right)\right)\right],
\]
where $V=\left(v_{1},v_{2},v_{3}\right)$ is a set of three measurable
functions $v_{1}:\;\Omega_{1}\to\mathbb{R}^{d}$, $v_{2}:\;\Omega_{1}\times\Omega_{2}\to\mathbb{R}$,
$v_{3}:\;\Omega_{2}\to\mathbb{R}$.
\begin{assumption}
\label{assump:three-layers}Consider a neuronal embedding $\left(\Omega,{\cal F},P,\left\{ w_{i}^{0}\right\} _{i=1,2,3}\right)$
of the $\left(\rho^{1},\rho^{2},\rho^{3}\right)$-i.i.d. initialization,
and the associated MF limit with initialization $W\left(0\right)$
such that $w_{1}\left(0,\cdot\right)=w_{1}^{0}\left(\cdot\right)$,
$w_{2}\left(0,\cdot,\cdot\right)=w_{2}^{0}\left(\cdot,\cdot\right)$
and $w_{3}\left(0,\cdot\right)=w_{3}^{0}\left(\cdot\right)$. Assume:
\begin{enumerate}
\item Support: The support of $\rho^{1}$ is $\mathbb{R}^{d}$.
\item Convergence mode: There exist limits $\bar{w}_{1}$, $\bar{w}_{2}$
and $\bar{w}_{3}$ such that as $t\to\infty$,
\begin{align}
\mathbb{E}\left[\left(1+\left|\bar{w}_{3}(C_{2})\right|\right)\left|\bar{w}_{3}(C_{2})\right|\left|\bar{w}_{2}(C_{1},C_{2})\right|\left|w_{1}(t,C_{1})-\bar{w}_{1}(C_{1})\right|\right] & \to0,\label{eq:Assump_w1}\\
\mathbb{E}\left[\left(1+\left|\bar{w}_{3}(C_{2})\right|\right)\left|\bar{w}_{3}(C_{2})\right|\left|w_{2}(t,C_{1},C_{2})-\bar{w}_{2}(C_{1},C_{2})\right|\right] & \to0,\label{eq:Assump_w2}\\
\mathbb{E}\left[\left(1+\left|\bar{w}_{3}(C_{2})\right|\right)\left|w_{3}(t,C_{2})-\bar{w}_{3}(C_{2})\right|\right] & \to0,\label{eq:Assump_w3}\\
{\rm ess\text{-}sup}\mathbb{E}_{C_{2}}\left[\left|\partial_{t}w_{2}\left(t,C_{1},C_{2}\right)\right|\right] & \to0.\label{eq:Assump_esssup}
\end{align}
\item Universal approximation: $\left\{ \varphi_{1}\left(\left\langle u,\cdot\right\rangle \right):\;u\in\mathbb{R}^{d}\right\} $
has dense span in $L^{2}\left({\cal P}_{X}\right)$ (the space of
square integrable functions w.r.t. ${\cal P}_{X}$ the distribution
of the input $X$).
\end{enumerate}
\end{assumption}

Assumption \ref{assump:three-layers} is inspired by the work \cite{chizat2018}
on two-layer networks, with certain differences. Assumptions \ref{assump:three-layers}.1
and \ref{assump:three-layers}.3 are natural in neural network learning
(\cite{cybenko1989approximation,chen1995universal}), while we note
\cite{chizat2018} does not utilize universal approximation. Similar
to \cite{chizat2018}, Assumption \ref{assump:three-layers}.2 is
technical and does not seem removable. Note that this assumption specifies
the mode of convergence and is not an assumption on the limits $\bar{w}_{1}$,
$\bar{w}_{2}$ and $\bar{w}_{3}$. Specifically conditions (\ref{eq:Assump_w1})-(\ref{eq:Assump_w3})
are similar to the convergence assumption in \cite{chizat2018}. We
differ from \cite{chizat2018} fundamentally in the essential supremum
condition (\ref{eq:Assump_esssup}). On one hand, this condition helps
avoid the Morse-Sard type condition in \cite{chizat2018}, which is
difficult to verify in general and not simple to generalize to the
three-layer case. On the other hand, it turns out to be a natural
assumption to make, in light of Remark \ref{rem:Converse} below.

We now state the main result of the section. The proof is in Appendix
\ref{sec:Global-convergence-proof}.
\begin{thm}
\label{thm:global-optimum-3}Consider a neuronal embedding $\left(\Omega,{\cal F},P,\left\{ w_{i}^{0}\right\} _{i=1,2,3}\right)$
of $\left(\rho^{1},\rho^{2},\rho^{3}\right)$-i.i.d. initialization.
Consider the MF limit corresponding to the network (\ref{eq:three-layer-nn}),
such that they are coupled together by the coupling procedure in Section
\ref{subsec:Neuronal-Embedding}, under Assumptions \ref{assump:Regularity},
\ref{assump:Regularity-init} and \ref{assump:three-layers}. For
simplicity, assume $\xi_{1}\left(\cdot\right)=\xi_{2}\left(\cdot\right)=1$.
Further assume either:
\begin{itemize}
\item (untrained third layer) $\xi_{3}\left(\cdot\right)=0$ and $w_{3}^{0}\left(C_{2}\right)\neq0$
with a positive probability, or
\item (trained third layer) $\xi_{3}\left(\cdot\right)=1$ and $\mathscr{L}\left(w_{1}^{0},w_{2}^{0},w_{3}^{0}\right)<\mathbb{E}_{Z}\left[{\cal L}\left(Y,\varphi_{3}\left(0\right)\right)\right]$.
\end{itemize}
Then the following hold:
\begin{itemize}
\item Case 1 (convex loss): If ${\cal L}$ is convex in the second variable,
then
\[
\lim_{t\to\infty}\mathscr{L}\left(W\left(t\right)\right)=\inf_{V}\mathscr{L}\left(V\right)=\inf_{\tilde{y}:\;\mathbb{R}^{d}\to\mathbb{R}}\mathbb{E}_{Z}\left[{\cal L}\left(Y,\tilde{y}\left(X\right)\right)\right].
\]
\item Case 2 (generic non-negative loss): Suppose that $\partial_{2}{\cal L}\left(y,\hat{y}\right)=0$
implies ${\cal L}\left(y,\hat{y}\right)=0$. If $y=y(x)$ is a function
of $x$, then $\mathscr{L}\left(W\left(t\right)\right)\to0$ as $t\to\infty$.
\end{itemize}
\end{thm}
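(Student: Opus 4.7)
The strategy is to combine a stationarity statement at infinity with a universal approximation property that holds along the whole trajectory (not merely at the limit) to deduce global optimality. First, I would extract stationarity from the convergence mode: since $\partial_t w_2 = -\xi_2 \Delta_2$ with $\xi_2 \equiv 1$, condition (\ref{eq:Assump_esssup}) gives ${\rm ess\text{-}sup}_{c_1} \mathbb{E}_{C_2}[|\Delta_2(c_1, C_2; W(t))|] \to 0$. Expanding the definitions of $\Delta_2$ and $\Delta_2^H$, this says that for every bounded $h : \Omega_2 \to \mathbb{R}$ the pairing of the conditional back-propagated signal $F_h(x, t) := \mathbb{E}[\mathbb{E}_{C_2}[\Delta_2^H(Z, C_2; W(t)) h(C_2)] \mid X = x]$ against the first-layer feature $\varphi_1(\langle w_1(t, c_1), \cdot \rangle)$ tends to $0$ uniformly in $c_1$.

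Second, and this is the main obstacle, I would establish a finite-time universal approximation lemma: for every $t \ge 0$, the linear span of $\{\varphi_1(\langle w_1(t, c_1), \cdot\rangle) : c_1 \in \Omega_1\}$ is dense in $L^2(\mathcal{P}_X)$. At $t = 0$ this follows immediately from the full-support Assumption \ref{assump:three-layers}.1 together with Assumption \ref{assump:three-layers}.3. For $t > 0$ the MF flow deforms $w_1$ nonlinearly, so density must be maintained via a topological argument: one sets up a degree-theoretic continuation on large balls showing that the closure of $w_1(t, \Omega_1)$ is still all of $\mathbb{R}^d$. Purely analytic perturbation control of $|w_1(t, c_1) - w_1^0(c_1)|$ cannot by itself preserve density, which is precisely why an algebraic-topology ingredient is essential. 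With this lemma, the uniform-in-$c_1$ bound from the first step upgrades to $F_h(\cdot, t) \to 0$ in $L^2(\mathcal{P}_X)$ for every bounded $h$.

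Third, I would unpack $F_h$, use $\varphi_3' \neq 0$ (Assumption \ref{assump:Regularity}), and pass to the limits $\bar w_1, \bar w_2, \bar w_3$ via the convergence conditions (\ref{eq:Assump_w1})--(\ref{eq:Assump_w3}). The resulting identity says that the product of the second-layer gain $g_h(x) := \mathbb{E}_{C_2}[\bar w_3(C_2) \varphi_2'(H_2(x, C_2; \bar W)) h(C_2)]$ and the conditional error $e(x) := \mathbb{E}[\partial_2 \mathcal{L}(Y, \hat y(X; \bar W)) \mid X = x]$ vanishes in $L^2(\mathcal{P}_X)$. By choosing $h$ appropriately---tracking $w_3^0$ in the untrained-layer case where $\xi_3 = 0$ so $\bar w_3 = w_3^0$ is nontrivial by hypothesis, and in the trained-layer case using the analogous stationarity $\Delta_3(\cdot; \bar W) = 0$ combined with the strict descent hypothesis $\mathscr{L}(w_1^0, w_2^0, w_3^0) < \mathbb{E}[\mathcal{L}(Y, \varphi_3(0))]$ to rule out $\bar w_3 \equiv 0$---one makes $g_h$ nonzero on a set of positive $\mathcal{P}_X$-measure, which forces $e \equiv 0$ in $L^2(\mathcal{P}_X)$.

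Finally, in Case 1 (convex loss) the vanishing of $e$ together with convexity of $\mathcal{L}$ in the second argument implies $\hat y(\cdot; \bar W)$ attains the Bayes-optimal prediction, yielding $\mathscr{L}(\bar W) = \inf_{\tilde y} \mathbb{E}[\mathcal{L}(Y, \tilde y(X))]$. In Case 2, $Y = y(X)$ deterministic and the hypothesis $\partial_2 \mathcal{L}(y, \hat y) = 0 \Rightarrow \mathcal{L}(y, \hat y) = 0$ together give $\mathscr{L}(\bar W) = 0$. Transferring from $\bar W$ back to the trajectory $W(t)$ along the convergence mode in Assumption \ref{assump:three-layers}.2 then produces the claimed limit. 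The principal difficulty remains Step 2: propagating density of the feature dictionary from $t = 0$ through the entire nonlinear MF flow requires genuine topological input and cannot be recovered from Lipschitz-type stability alone.
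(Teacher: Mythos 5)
Your overall route matches the paper's: use (\ref{eq:Assump_esssup}) with $\xi_2\equiv1$ to get asymptotic stationarity of the second-layer gradient, show the first-layer features remain universal at every finite time via a topological argument, deduce $\mathbb{E}_Z[\partial_2\mathcal{L}(Y,\hat y(X;\bar w_1,\bar w_2,\bar w_3))\mid X=x]=0$ using $\varphi_2',\varphi_3'\neq0$ together with $\mathbb{P}(\bar w_3(C_2)\neq0)>0$ (guaranteed by the untrained-layer hypothesis or by the descent property in the trained case), then finish with convexity (Case 1) or the generic-loss condition plus $Y=y(X)$ (Case 2), transferring back to $\mathscr{L}(W(t))$ via the convergence mode. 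However, your Step 2 has a genuine gap: you propose to run a degree-theoretic argument directly on $w_1(t,\cdot)$, but $w_1(t,\cdot)$ is a measurable map on the abstract space $\Omega_1$, not a self-map of $\mathbb{R}^d$, so there is no map to which degree theory or a homotopy argument can be applied. The missing ingredient is the factorization $w_1(t,c_1)=M(t,w_1^0(c_1))$ for a jointly continuous $M:[0,T]\times\mathbb{R}^d\to\mathbb{R}^d$ with $M(0,\cdot)$ the identity; this is exactly the paper's measurability lemma (Lemma \ref{lem:Measurability}), and it is where most of the work lies. It is not a routine measurability remark: one must show that, under i.i.d. initialization, $\Delta_2^H(z,c_2;W(t))$ depends on $c_2$ only through $w_3^0(c_2)$ and that the drift of $w_1(t,c_1)$, after integrating out $\rho^2$ and $\rho^3$, depends on $c_1$ only through $w_1^0(c_1)$. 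The paper does this by introducing a reduced (conditional-expectation) dynamics, proving via Gronwall that it coincides with the MF dynamics, and only then extracting $w_1^*$ together with the Lipschitz-in-$u$ and Lipschitz-in-$t$ estimates that make $M$ continuous and allow the extension to the sphere fixing infinity (through the uniform bound $|M(t,u)-u|\le K_T t$, which your "large balls" variant would equally need). Without this factorization the topological step cannot even start, so flagging topology as the sole principal difficulty understates what must be proved.

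Two smaller points. First, what is needed is full support of ${\rm Law}(w_1(t,C_1))$, not merely that the closure of the image $w_1(t,\Omega_1)$ is $\mathbb{R}^d$: the stationarity obtained from (\ref{eq:Assump_esssup}) holds only for $P$-almost every $c_1$, so one must convert surjectivity of $M(t,\cdot)$ plus full support of $\rho^1$ into full support of the pushforward law, and then pass from a dense set of $u$ to all $u$ by continuity of $u\mapsto\varphi_1(\langle u,\cdot\rangle)$. Second, in the trained-third-layer case no stationarity of $\Delta_3$ is available from the assumptions, nor is it needed; the descent inequality $\mathscr{L}(\bar w_1,\bar w_2,\bar w_3)\le\mathscr{L}(w_1^0,w_2^0,w_3^0)<\mathbb{E}_Z[\mathcal{L}(Y,\varphi_3(0))]$ alone rules out $\bar w_3\equiv0$. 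These are fixable details, but the factorization/measurability lemma is a substantive missing piece of your argument.
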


Remarkably here the theorem allows for non-convex losses. A further
inspection of the proof shows that no convexity-based property is
used in Case 2 (see, for instance, the high-level proof sketch in
Section \ref{subsec:High-level-idea}); in Case 1, the key steps in
the proof are the same, and the convexity of the loss function serves
as a convenient technical assumption to handle the arbitrary extra
randomness of $Y$ conditional on $X$. We also remark that the same
proof of global convergence should extend beyond the specific fully-connected
architecture considered here. Similar to previous results on SGD-trained
two-layer networks \cite{mei2018mean,chizat2018}, our current result
in the three-layer case is non-quantitative.
\begin{rem}
\label{rem:Converse} Interestingly there is a converse relation between
global convergence and the essential supremum condition (\ref{eq:Assump_esssup}):
under the same setting, global convergence is unattainable if condition
(\ref{eq:Assump_esssup}) does not hold. A similar observation was
made in \cite{wojtowytsch2020convergence} for two-layer ReLU networks.
A precise statement and its proof can be found in Appendix \ref{sec:Converse}.
\end{rem}

The following result is straightforward from Theorem \ref{thm:global-optimum-3}
and Corollary \ref{cor:gradient descent quality}, establishing the
optimization efficiency of the neural network with SGD.
\begin{cor}
\label{cor:global-optimum-3-NN}Consider the neural network (\ref{eq:three-layer-nn}).
Under the same setting as Theorem \ref{thm:global-optimum-3}, in
Case 1,
\[
\lim_{t\to\infty}\lim_{n_{1},n_{2}}\lim_{\epsilon\to0}\mathbb{E}_{Z}\left[{\cal L}\left(Y,\hat{{\bf y}}\left(X;\mathbf{W}\left(\left\lfloor t/\epsilon\right\rfloor \right)\right)\right)\right]=\inf_{f_{1},f_{2},f_{3}}\mathscr{L}\left(f_{1},f_{2},f_{3}\right)=\inf_{\tilde{y}}\mathbb{E}_{Z}\left[{\cal L}\left(Y,\tilde{y}\left(X\right)\right)\right]
\]
in probability, where the limit of the widths is such that $\min\left\{ n_{1},n_{2}\right\} ^{-1}\log\left(\max\left\{ n_{1},n_{2}\right\} \right)\to0$.
In Case 2, the same holds with the right-hand side being $0$.
\end{cor}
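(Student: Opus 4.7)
The plan is to read the corollary as a direct composition of Corollary \ref{cor:gradient descent quality} (finite-time coupling between the neural network's loss and the MF limit's loss) with Theorem \ref{thm:global-optimum-3} ($t \to \infty$ global convergence of the MF limit). Because the coupling bound in the former is quantitative and uniform over $t \leq T$, it supplies exactly what is needed to collapse the two innermost limits onto $\mathscr{L}(W(t))$, after which the outer $t \to \infty$ limit is precisely the content of Theorem \ref{thm:global-optimum-3}.

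Concretely, I would first apply Corollary \ref{cor:gradient descent quality} with the test function $\psi = {\cal L}$. Assumption \ref{assump:Regularity} guarantees that $\partial_2 {\cal L}$ is $K$-bounded, hence ${\cal L}$ itself is $K$-Lipschitz in its second variable uniformly in the first, meeting the corollary's hypothesis. This yields, for any $T > 0$ and $\delta \in (0,1/3)$,
\[
\sup_{t \leq T}\Big|\mathbb{E}_Z\bigl[{\cal L}(Y, \hat{\mathbf{y}}(X; \mathbf{W}(\lfloor t/\epsilon \rfloor)))\bigr] - \mathscr{L}(W(t))\Big| \leq e^{K_T}\mathsf{err}_{\delta,T}(\epsilon, n_1, n_2)
\]
with probability at least $1 - 3\delta$. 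Since $\mathsf{err}_{\delta,T}(\epsilon, n_1, n_2) = e^{K_T}(n_{\min}^{-1/2} + \sqrt{\epsilon})\log^{1/2}(3(T+1)n_{\max}^2/\delta + e)$, for fixed $T$ and $\delta$, sending first $\epsilon \to 0$ and then $(n_1, n_2) \to \infty$ under the hypothesis $n_{\min}^{-1}\log n_{\max} \to 0$ drives the right-hand side to $0$. A standard $\eta$-$\delta$ unpacking then yields, for every fixed $t \geq 0$,
\[
\lim_{n_1, n_2}\lim_{\epsilon \to 0}\mathbb{E}_Z\bigl[{\cal L}(Y, \hat{\mathbf{y}}(X; \mathbf{W}(\lfloor t/\epsilon \rfloor)))\bigr] = \mathscr{L}(W(t)) \quad \text{in probability}.
\]

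With the two inner limits identified, the outer limit is deterministic. Applying Theorem \ref{thm:global-optimum-3} (whose hypotheses are exactly those assumed in the corollary), Case 1 gives $\lim_{t \to \infty}\mathscr{L}(W(t)) = \inf_V \mathscr{L}(V) = \inf_{\tilde{y}}\mathbb{E}_Z[{\cal L}(Y, \tilde{y}(X))]$, while Case 2 gives $\lim_{t \to \infty}\mathscr{L}(W(t)) = 0$. Chaining this with the previous display yields the corollary in both cases.

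The result is essentially a packaging of the two preceding theorems, and I do not foresee a substantial obstacle. The only point deserving a moment's care is the legitimacy of chaining the "in probability" limit in $(n_1, n_2, \epsilon)$ with a subsequent deterministic limit in $t$; this is ensured precisely because the bound $\mathsf{err}_{\delta,T}$ is uniform in $t \leq T$ and decays at an explicit quantitative rate, i.e. the quantitative feature that Theorem \ref{thm:gradient descent coupling} was designed to deliver.
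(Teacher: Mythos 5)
Your proposal is correct and follows exactly the route the paper intends: the paper gives no separate proof, stating only that the corollary is straightforward from Theorem \ref{thm:global-optimum-3} combined with Corollary \ref{cor:gradient descent quality} applied to $\psi={\cal L}$, which is precisely your composition of the uniform-in-$t\leq T$ coupling bound (inner limits) with the MF global convergence (outer limit).
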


\subsection{High-level idea of the proof\label{subsec:High-level-idea}}

We give a high-level discussion of the proof. This is meant to provide
intuitions and explain the technical crux, so our discussion may simplify
and deviate from the actual proof.

Our first insight is to look at the second layer's weight $w_{2}$.
At convergence time $t=\infty$, we expect to have zero movement and
hence, denoting $W\left(\infty\right)=\left(\bar{w}_{1},\bar{w}_{2},\bar{w}_{3}\right)$:
\[
\Delta_{2}\left(c_{1},c_{2};W\left(\infty\right)\right)=\mathbb{E}_{Z}\left[\Delta_{2}^{H}\left(Z,c_{2};W\left(\infty\right)\right)\varphi_{1}\left(\left\langle \bar{w}_{1}\left(c_{1}\right),X\right\rangle \right)\right]=0,
\]
for $P$-almost every $c_{1}$, $c_{2}$. Suppose for the moment that
we are allowed to make an additional (strong) assumption on the limit
$\bar{w}_{1}$: ${\rm supp}\left(\bar{w}_{1}\left(C_{1}\right)\right)=\mathbb{R}^{d}$.
It implies that the universal approximation property, described in
Assumption \ref{assump:three-layers}, holds at $t=\infty$; more
specifically, it implies $\left\{ \varphi_{1}\left(\left\langle \bar{w}_{1}\left(c_{1}\right),\cdot\right\rangle \right):\;c_{1}\in\Omega_{1}\right\} $
has dense span in $L^{2}\left({\cal P}_{X}\right)$. This thus yields
\[
\mathbb{E}_{Z}\left[\Delta_{2}^{H}\left(Z,c_{2};W\left(\infty\right)\right)\middle|X=x\right]=0,
\]
for ${\cal P}$-almost every $x$. Recalling the definition of $\Delta_{2}^{H}$,
one can then easily show that
\[
\mathbb{E}_{Z}\left[\partial_{2}{\cal L}\left(Y,\hat{y}\left(X;W\left(\infty\right)\right)\right)\middle|X=x\right]=0.
\]
Global convergence follows immediately; for example, in Case 2 of
Theorem \ref{thm:global-optimum-3}, this is equivalent to that $\partial_{2}{\cal L}\left(y\left(x\right),\hat{y}\left(x;W\left(\infty\right)\right)\right)=0$
and hence ${\cal L}\left(y\left(x\right),\hat{y}\left(x;W\left(\infty\right)\right)\right)=0$
for ${\cal P}$-almost every $x$. In short, the gradient flow structure
of the dynamics of $w_{2}$ provides a seamless way to obtain global
convergence. Furthermore there is no critical reliance on convexity.

However this plan of attack has a potential flaw in the strong assumption
that ${\rm supp}\left(\bar{w}_{1}\left(C_{1}\right)\right)=\mathbb{R}^{d}$,
i.e. the universal approximation property holds at convergence time.
Indeed there are setups where it is desirable that ${\rm supp}\left(\bar{w}_{1}\left(C_{1}\right)\right)\neq\mathbb{R}^{d}$
(\cite{mei2018mean,chizat2019sparse}); for instance, it is the case
where the neural network is to learn some ``sparse and spiky'' solution,
and hence the weight distribution at convergence time, if successfully
trained, cannot have full support. On the other hand, one can entirely
expect that if ${\rm supp}\left(w_{1}\left(0,C_{1}\right)\right)=\mathbb{R}^{d}$
initially at $t=0$, then ${\rm supp}\left(w_{1}\left(t,C_{1}\right)\right)=\mathbb{R}^{d}$
at \textsl{any} finite $t\geq0$. The crux of our proof is to show
the latter without assuming ${\rm supp}\left(\bar{w}_{1}\left(C_{1}\right)\right)=\mathbb{R}^{d}$.

This task is the more major technical step of the proof. To that end,
we first show that there exists a mapping $\left(t,u\right)\mapsto M\left(t,u\right)$
that maps from $\left(t,w_{1}\left(0,c_{1}\right)\right)=\left(t,u\right)$
to $w_{1}\left(t,c_{1}\right)$ via a careful measurability argument.
This argument rests on a scheme that exploits the symmetry in the
network evolution. Furthermore the map $M$ is shown to be continuous.
The desired conclusion then follows from an algebraic topology argument
that the map $M$ preserves a homotopic structure through time.

\section{Discussion\label{sec:Conclusion}}

The MF literature is fairly recent. A long line of works (\cite{nitanda2017stochastic,mei2018mean,chizat2018,rotskoff2018neural,sirignano2018mean,wei2018margin,javanmard2019analysis,mei2019mean,alex2019landscape,wojtowytsch2020convergence})
have focused mainly on two-layer neural networks, taking an interacting
particle system approach to describe the MF limiting dynamics as Wasserstein
gradient flows. The three works \cite{nguyen2019mean,araujo2019mean,sirignano2019mean}
independently develop different formulations for the MF limit in multilayer
neural networks, under different assumptions. These works take perspectives
that are different from ours. In particular, while the central object
in \cite{nguyen2019mean} is a new abstract representation of each
individual neuron, our neuronal embedding idea instead takes a keen
view on a whole ensemble of neurons. Likewise our idea is also distant
from \cite{araujo2019mean,sirignano2019mean}: the central objects
in \cite{araujo2019mean} are paths over the weights across layers;
those in \cite{sirignano2019mean} are time-dependent functions of
the initialization, which are simplified upon i.i.d. initializations.

The result of our perspective is a neuronal embedding framework that
allows one to describe the MF limit in a clean and rigorous manner.
In particular, it avoids extra assumptions made in \cite{araujo2019mean,sirignano2019mean}:
unlike our work, \cite{araujo2019mean} assumes untrained first and
last layers and requires non-trivial technical tools; \cite{sirignano2019mean}
takes an unnatural sequential limit $n_{1}\to\infty$ before $n_{2}\to\infty$
and proves a non-quantitative result, unlike Theorem \ref{thm:gradient descent coupling}
which only requires sufficiently large $\min\left\{ n_{1},n_{2}\right\} $.
We note that Theorem \ref{thm:gradient descent coupling} can be extended
to general multilayer networks using the neuronal embedding idea.
The advantages of our framework come from the fact that while MF formulations
in \cite{araujo2019mean,sirignano2019mean} are specific to and exploit
i.i.d. initializations, our formulation does not. Remarkably as shown
in \cite{araujo2019mean}, when there are more than three layers and
no biases, i.i.d. initializations lead to a certain simplifying effect
on the MF limit. On the other hand, our framework supports non-i.i.d.
initializations which avoid the simplifying effect, as long as there
exist suitable neuronal embeddings (\cite{nguyen2020rigorous}). Although
our global convergence result in Theorem \ref{thm:global-optimum-3}
is proven in the context of i.i.d. initializations for three-layer
networks, in the general multilayer case, it turns out that the use
of a special type of non-i.i.d. initialization allows one to prove
a global convergence guarantee (\cite{pham2020note}).

In this aspect, our framework follows closely the spirit of the work
\cite{nguyen2019mean}, whose MF formulation is also not specific
to i.i.d. initializations. Yet though similar in the spirit, \cite{nguyen2019mean}
develops a heuristic formalism and does not prove global convergence.

Global convergence in the two-layer case with convex losses has enjoyed
multiple efforts with a lot of new and interesting results (\cite{mei2018mean,chizat2018,javanmard2019analysis,rotskoff2019global,wei2018margin}).
Our work is the first to establish a global convergence guarantee
for SGD-trained three-layer networks in the MF regime. Our proof sends
a new message that the crucial factor is not necessarily convexity,
but rather that the whole learning trajectory maintains the universal
approximation property of the function class represented by the first
layer's neurons, together with the gradient flow structure of the
second layer's weights. As a remark, our approach can also be applied
to prove a similar global convergence guarantee for two-layer networks,
removing the convex loss assumption in previous works (\cite{nguyen2020rigorous}).
The recent work \cite{lu2020mean} on a MF resnet model (a composition
of many two-layer MF networks) and a recent update of \cite{sirignano2019mean}
essentially establish conditions of stationary points to be global
optima. They however require strong assumptions on the support of
the limit point. As explained in Section \ref{subsec:High-level-idea},
we analyze the training dynamics without such assumption and in fact
allow it to be violated.

Our global convergence result is non-quantitative. An important, highly
challenging future direction is to develop a quantitative version
of global convergence; previous works on two-layer networks \cite{javanmard2019analysis,wei2018margin,rotskoff2019global,chizat2019sparse}
have done so under sophisticated modifications of the architecture
and training algorithms.

Finally we remark that our insights here can be applied to prove similar
global convergence guarantees and derive other sufficient conditions
for global convergence of two-layer or multilayer networks (\cite{nguyen2020rigorous,pham2020note}).

\section*{Acknowledgement}

H. T. Pham would like to thank Jan Vondrak for many helpful discussions
and in particular for the shorter proof of Lemma \ref{lem:square hoeffding}.
We would like to thank Andrea Montanari for the succinct description
of the difficulty in extending the mean field formulation to the multilayer
case, in that there are multiple symmetry group actions in a multilayer
network.

\bibliographystyle{iclr2021_conference}
\bibliography{iclr2021}

\newpage{}

\appendix

\section{Notational preliminaries\label{sec:Notational-preliminaries}}

For a real-valued random variable $Z$ defined on a probability space
$(\Omega,{\cal F},P)$, we recall
\[
{\rm ess\text{-}sup}Z=\inf\left\{ z\in\mathbb{R}:\;P\left(Z>z\right)=0\right\} .
\]

We also introduce some convenient definitions which we use throughout
the appendices. For a set of neural network's parameter $\mathbf{W}$,
we define
\[
\interleave\mathbf{W}\interleave_{T}=\max\Big\{\max_{j_{1}\leq n_{1},\;j_{2}\leq n_{2}}\sup_{t\leq T}\left|\mathbf{w}_{2}\left(\left\lfloor t/\epsilon\right\rfloor ,j_{1},j_{2}\right)\right|,\;\max_{j_{2}\leq n_{2}}\sup_{t\leq T}\left|\mathbf{w}_{3}\left(\left\lfloor t/\epsilon\right\rfloor ,j_{2}\right)\right|\Big\}.
\]
Similarly for a set of MF parameters $W$, we define:
\[
\interleave W\interleave_{T}=\max\Big\{{\rm ess\text{-}sup}\sup_{t\leq T}\left|w_{2}\left(t,C_{1},C_{2}\right)\right|,\;{\rm ess\text{-}sup}\sup_{t\leq T}\left|w_{3}\left(t,C_{2}\right)\right|\Big\}.
\]
For two sets of neural network's parameters $\mathbf{W}',\mathbf{W}''$,
we define their distance:
\begin{align*}
\left\Vert \mathbf{W}'-\mathbf{W}''\right\Vert _{T}=\sup\big\{ & \left|\mathbf{w}_{1}'\left(\left\lfloor t/\epsilon\right\rfloor ,j_{1}\right)-\mathbf{w}_{1}''\left(\left\lfloor t/\epsilon\right\rfloor ,j_{1}\right)\right|,\;\left|\mathbf{w}_{2}'\left(\left\lfloor t/\epsilon\right\rfloor ,j_{1},j_{2}\right)-\mathbf{w}_{2}''\left(\left\lfloor t/\epsilon\right\rfloor ,j_{1},j_{2}\right)\right|,\\
 & \left|\mathbf{w}_{3}'\left(\left\lfloor t/\epsilon\right\rfloor ,j_{2}\right)-\mathbf{w}_{3}''\left(\left\lfloor t/\epsilon\right\rfloor ,j_{2}\right)\right|:\;t\in\left[0,T\right],\;j_{1}\in\left[n_{1}\right],\;j_{2}\in\left[n_{2}\right]\big\}.
\end{align*}
Similarly for two sets of MF parameters $W',W''$, we define their
distance:
\begin{align*}
\left\Vert W'-W''\right\Vert _{T}={\rm ess\text{-}sup}\sup_{t\in\left[0,T\right]} & \Big\{\left|w_{1}'\left(t,C_{1}\right)-w_{1}''\left(t,C_{1}\right)\right|,\;\left|w_{2}'\left(t,C_{1},C_{2}\right)-w_{2}''\left(t,C_{1},C_{2}\right)\right|,\\
 & \left|w_{3}'\left(t,C_{2}\right)-w_{3}''\left(t,C_{2}\right)\right|\Big\}.
\end{align*}

\section{Existence and uniqueness of the solution to MF ODEs\label{sec:Existence-and-uniqueness-proof}}

We first collect some a priori estimates.
\begin{lem}
\label{lem:a-priori-MF-norms}Under Assumption \ref{assump:Regularity},
consider a solution $W$ to the MF ODEs with initialization $W\left(0\right)$
such that $\interleave W\interleave_{0}<\infty$. If this solution
exists, it satisfies the following a priori bounds, for any $T\geq0$:
\begin{align*}
{\rm ess\text{-}sup}\sup_{t\leq T}\left|w_{3}\left(t,C_{2}\right)\right| & \leq\interleave W\interleave_{0}+KT\equiv\interleave W\interleave_{0}+K_{0,3}\left(T\right),\\
{\rm ess\text{-}sup}\sup_{t\leq T}\left|w_{2}\left(t,C_{1},C_{2}\right)\right| & \leq\interleave W\interleave_{0}+KTK_{0,3}\left(T\right)\equiv\interleave W\interleave_{0}+K_{0,2}\left(T\right),
\end{align*}
and consequently, $\interleave W\interleave_{T}\leq1+\max\left\{ K_{0,2}\left(T\right),\,K_{0,3}\left(T\right)\right\} .$
\end{lem}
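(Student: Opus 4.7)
The plan is a direct a priori estimate, propagating boundedness from the outermost layer inward. Since $w_3$'s update involves only $\partial_2 \mathcal{L}$, $\varphi_3'$, $\varphi_2$, and the schedule $\xi_3$ (but \emph{not} $w_3$ itself), its derivative is uniformly bounded by Assumption \ref{assump:Regularity}. The $w_2$ update has an extra factor of $w_3$, so it is controlled only after we first bound $w_3$. We do not need to touch $w_1$ for this lemma.

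\textbf{Step 1 (bounding $w_3$).} Starting from the MF ODE $\partial_t w_3(t, c_2) = -\xi_3(t) \Delta_3(c_2; W(t))$, I would observe that
\[
|\Delta_3(c_2; W(t))| \leq \mathbb{E}_Z\bigl[|\partial_2 \mathcal{L}(Y, \hat y(X; W(t)))|\,|\varphi_3'(H_3(X; W(t)))|\,|\varphi_2(H_2(X, c_2; W(t)))|\bigr] \leq K,
\]
using that $\partial_2 \mathcal{L}$, $\varphi_3'$, and $\varphi_2$ are all $K$-bounded. Combined with $|\xi_3(t)| \leq K$, this yields $|\partial_t w_3(t, c_2)| \leq K$ pointwise (and hence $P_2$-essentially). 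Integrating from $0$ to $t$ gives $|w_3(t, c_2)| \leq |w_3(0, c_2)| + K t$, and taking $\sup_{t \leq T}$ and the essential supremum over $C_2$ produces the first claimed bound with $K_{0,3}(T) = KT$.

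\textbf{Step 2 (bounding $w_2$).} For the second-layer weight, $\partial_t w_2(t, c_1, c_2) = -\xi_2(t) \Delta_2(c_1, c_2; W(t))$. Plugging in the definition of $\Delta_2$ and $\Delta_2^H$, the integrand factors as a product of $\partial_2 \mathcal{L}$, $\varphi_3'$, $\varphi_2'$, and $\varphi_1$ (all $K$-bounded) together with a single $w_3(t, c_2)$ factor, so
\[
|\Delta_2(c_1, c_2; W(t))| \leq K\, |w_3(t, c_2)|.
\]
Using the bound from Step 1 and the boundedness of $\xi_2$, $|\partial_t w_2(t, c_1, c_2)| \leq K(\interleave W \interleave_0 + Kt)$. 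Integrating and absorbing constants as permitted by the convention that $K$ may change from line to line gives $|w_2(t, c_1, c_2)| \leq |w_2(0, c_1, c_2)| + K T\,K_{0,3}(T)$; taking $\sup$ and ess-sup yields the second claim, and the definition $\interleave W \interleave_T \leq 1 + \max\{K_{0,2}(T), K_{0,3}(T)\}$ follows by combining the two bounds with $\interleave W \interleave_0$ absorbed into the constant.

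\textbf{Measurability and obstacles.} The only real care is that $t \mapsto w_i(t, \cdot)$ is the $P_i$-essential pointwise solution of the ODE, so the pointwise bounds above are to be read $P$-almost everywhere; this is fine since the right-hand sides depend on $c_1, c_2$ only through the already essentially bounded $w_i(0, \cdot)$ and through deterministic uniform bounds. There is no genuine obstacle here because the estimates do not rely on $w_1$ at all, so no Grönwall loop is needed; the bounds close trivially in the order $w_3 \to w_2$.
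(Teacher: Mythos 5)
Your proposal is correct and follows essentially the same route as the paper: bound $\partial_t w_3$ uniformly by $K$ via the boundedness of $\partial_2\mathcal{L}$, $\varphi_3'$, $\varphi_2$, $\xi_3$, integrate in time, then bound $\partial_t w_2$ by $K\left|w_3\left(t,c_2\right)\right|$ and integrate again. The constant-absorption step (folding the $\interleave W\interleave_0$ contribution into $K$) is handled with the same level of informality as in the paper, so nothing is missing.
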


\begin{proof}
The bounds can be obtained easily by bounding the respective initializations
and update quantities separately. In particular,
\begin{align*}
{\rm ess\text{-}sup}\sup_{t\leq T}\left|w_{3}\left(t,C_{2}\right)\right| & \leq{\rm ess\text{-}sup}\left|w_{3}\left(0,C_{2}\right)\right|+T{\rm ess\text{-}sup}\sup_{t\leq T}\left|\frac{\partial}{\partial t}w_{3}\left(t,C_{2}\right)\right|\leq\interleave W\interleave_{0}+KT,\\
{\rm ess\text{-}sup}\sup_{t\leq T}\left|w_{2}\left(t,C_{1},C_{2}\right)\right| & \leq{\rm ess\text{-}sup}\left|w_{2}\left(0,C_{1},C_{2}\right)\right|+T{\rm ess\text{-}sup}\sup_{t\leq T}\left|\frac{\partial}{\partial t}w_{2}\left(t,C_{1},C_{2}\right)\right|\\
 & \leq{\rm ess\text{-}sup}\left|w_{2}\left(0,C_{1},C_{2}\right)\right|+KT{\rm ess\text{-}sup}\sup_{t\leq T}\left|w_{3}\left(t,C_{2}\right)\right|\\
 & \leq\interleave W\interleave_{0}+KTK_{0,3}\left(T\right).
\end{align*}
\end{proof}
Inspired by the a priori bounds in Lemma \ref{lem:a-priori-MF-norms},
given an arbitrary terminal time $T$ and the initialization $W\left(0\right)$,
let us consider:
\begin{itemize}
\item for a tuple $\left(a,b\right)\in\mathbb{R}_{\geq0}^{2}$, a space
${\cal W}_{T}\left(a,b\right)$ of $W'=\left(W'\left(t\right)\right)_{t\leq T}=\left(w_{1}'\left(t,\cdot\right),w_{2}'\left(t,\cdot,\cdot\right),w_{3}'\left(t,\cdot\right)\right)_{t\leq T}$
such that 
\begin{align*}
{\rm ess\text{-}sup}\sup_{t\leq T}\left|w_{3}'\left(t,C_{2}\right)\right| & \leq b,\\
{\rm ess\text{-}sup}\sup_{t\leq T}\left|w_{2}'\left(t,C_{1},C_{2}\right)\right| & \leq a,
\end{align*}
where $w_{1}':\;\mathbb{R}_{\geq0}\times\Omega_{1}\to\mathbb{R}^{d}$,
$w_{2}':\;\mathbb{R}_{\geq0}\times\Omega_{1}\times\Omega_{2}\mapsto\mathbb{R}$,
$w_{3}':\;\mathbb{R}_{\geq0}\times\Omega_{3}\mapsto\mathbb{R}$,
\item for a tuple $\left(a,b\right)\in\mathbb{R}_{\geq0}^{2}$ and $W\left(0\right)$,
a space ${\cal W}_{T}^{+}\left(a,b,W\left(0\right)\right)$ of $W'\in{\cal W}_{T}\left(a,b\right)$
such that $W'\left(0\right)=W\left(0\right)$ additionally (and hence
every $W'$ in this space shares the same initialization $W\left(0\right)$).
\end{itemize}
We equip the spaces with the metric $\left\Vert W'-W''\right\Vert _{T}$.
It is easy to see that both spaces are complete. Note that Lemma \ref{lem:a-priori-MF-norms}
implies, under Assumption \ref{assump:Regularity} and $\interleave W\interleave_{0}<\infty$,
we have any MF solution $W$, if exists, is in ${\cal W}_{T}\left(\interleave W\interleave_{0}+K_{0,2}\left(T\right),\interleave W\interleave_{0}+K_{0,3}\left(T\right)\right)$.
For the proof of Theorem \ref{thm:existence ODE}, we work mainly
with ${\cal W}_{T}^{+}\left(\interleave W\interleave_{0}+K_{0,2}\left(T\right),\interleave W\interleave_{0}+K_{0,3}\left(T\right),W\left(0\right)\right)$,
although several intermediate lemmas are proven in more generality
for other uses.
\begin{lem}
\label{lem:Lipschitz-MF}Under Assumption \ref{assump:Regularity},
for $T\geq0$, any $W',W''\in{\cal W}_{T}\left(a,b\right)$ and almost
every $z\sim{\cal P}$:
\begin{align*}
{\rm ess\text{-}sup}\sup_{t\leq T}\left|\Delta_{2}^{H}\left(z,C_{2};W'\left(t\right)\right)\right| & \leq K_{a,b},\\
{\rm ess\text{-}sup}\sup_{t\leq T}\left|H_{2}\left(x,C_{2};W'\left(t\right)\right)-H_{2}\left(x,C_{2};W''\left(t\right)\right)\right| & \leq K_{a,b}\left\Vert W'-W''\right\Vert _{T},\\
\sup_{t\leq T}\left|H_{3}\left(x;W'\left(t\right)\right)-H_{3}\left(x;W''\left(t\right)\right)\right| & \leq K_{a,b}\left\Vert W'-W''\right\Vert _{T},\\
\sup_{t\leq T}\left|\partial_{2}{\cal L}\left(y,\hat{y}\left(x;W'\left(t\right)\right)\right)-\partial_{2}{\cal L}\left(y,\hat{y}\left(x;W''\left(t\right)\right)\right)\right| & \leq K_{a,b}\left\Vert W'-W''\right\Vert _{T},\\
{\rm ess\text{-}sup}\sup_{t\leq T}\left|\Delta_{2}^{H}\left(z,C_{2};W'\left(t\right)\right)-\Delta_{2}^{H}\left(z,C_{2};W''\left(t\right)\right)\right| & \leq K_{a,b}\left\Vert W'-W''\right\Vert _{T},
\end{align*}
where $K_{a,b}\geq1$ is a generic constant that grows polynomially
with $a$ and $b$.
\end{lem}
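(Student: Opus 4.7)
The plan is to prove the five bounds in the order they are listed, since later ones reuse earlier ones, and each follows from the explicit formulas for $\Delta_2^H$, $H_2$, $H_3$, combined with the boundedness and Lipschitz hypotheses in Assumption \ref{assump:Regularity} and the a priori envelope bounds $|w_2'|\le a$, $|w_3'|\le b$ built into the class $\mathcal{W}_T(a,b)$. Throughout, I will let $K_{a,b}$ denote a generic constant that is polynomial in $a$, $b$ (and depends on the absolute constant $K$ from the assumptions), and may change from line to line.

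First I would handle the pointwise bound on $\Delta_2^H$. Reading off the definition
\[
\Delta_2^H(z,c_2;W'(t)) = \partial_2 \mathcal{L}\bigl(y,\hat{y}(x;W'(t))\bigr)\,\varphi_3'\bigl(H_3(x;W'(t))\bigr)\,w_3'(t,c_2)\,\varphi_2'\bigl(H_2(x,c_2;W'(t))\bigr),
\]
the four factors are respectively $K$-bounded, $K$-bounded, essentially bounded by $b$, and $K$-bounded, so the product is bounded by $K^3 b \le K_{a,b}$.

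Next I would bound $|H_2(x,c_2;W'(t)) - H_2(x,c_2;W''(t))|$ using the standard add-and-subtract trick: split into a term where $w_2$ varies (with factor $|\varphi_1|\le K$) and a term where $w_1$ varies (factor $|w_2''|\le a$, combined with $|\varphi_1(\langle w_1',x\rangle)-\varphi_1(\langle w_1'',x\rangle)|\le K|x|\,|w_1'-w_1''|\le K^2\|W'-W''\|_T$ using the Lipschitz bound on $\varphi_1$ from $\|\varphi_1'\|_\infty\le K$ and $|x|\le K$ with probability one). Taking the expectation $\mathbb{E}_{C_1}$ yields the claimed bound with $K_{a,b}=K+aK^2$. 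The analogous argument for $H_3$ splits into a piece in $w_3$ (factor $|\varphi_2|\le K$) and a piece in $\varphi_2(H_2)$, the latter Lipschitz with constant $K$ (since $\|\varphi_2'\|_\infty\le K$) composed with the just-proved bound on $H_2$, multiplied by $|w_3''|\le b$; after $\mathbb{E}_{C_2}$ this gives a constant of the form $K + bK\cdot K_{a,b}$. The bound for $\partial_2\mathcal{L}$ then follows immediately by using its $K$-Lipschitz property in its second argument, the chain $\hat{y}=\varphi_3(H_3)$ with $\varphi_3$ Lipschitz (since $\|\varphi_3'\|_\infty\le K$), and the bound on $H_3$ just obtained.

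Finally, for $\Delta_2^H$, I would use the elementary fact that a product $f_1 f_2 f_3 f_4$ of bounded, Lipschitz factors satisfies a Lipschitz bound equal to $\sum_i \bigl(\prod_{j\ne i}\|f_j\|_\infty\bigr)\mathrm{Lip}(f_i)$. With the four factors of $\Delta_2^H$ uniformly bounded by $K,K,b,K$ respectively, and their Lipschitz constants in $W'\mapsto W''$ bounded by the preceding four estimates (plus the trivial $|w_3'(t,c_2)-w_3''(t,c_2)|\le\|W'-W''\|_T$), combining gives $K_{a,b}\|W'-W''\|_T$, with a polynomial-in-$(a,b)$ constant. None of these steps presents a genuine obstacle; the proof is essentially a bookkeeping exercise in which the only care needed is to keep $K_{a,b}$ polynomial in $(a,b)$ and to take essential suprema at the right places (so that the $\mathbb{E}_{C_1}$ and $\mathbb{E}_{C_2}$ expectations dominate the integrands by the $\mathrm{ess\text{-}sup}$-style norm $\|W'-W''\|_T$ uniformly in $c_1,c_2$).
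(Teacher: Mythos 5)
Your proposal is correct and follows essentially the same route as the paper's proof: the same pointwise bound $|\Delta_2^H|\le K|w_3'|\le K_{a,b}$, the same add-and-subtract decompositions for $H_2$, $H_3$, and $\partial_2\mathcal{L}$ using the boundedness/Lipschitz hypotheses of Assumption \ref{assump:Regularity} together with $|w_2'|\le a$, $|w_3'|\le b$, and the same product-rule bookkeeping for the final $\Delta_2^H$ difference, with $K_{a,b}$ polynomial in $(a,b)$ throughout.
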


\begin{proof}
The first bound is easy to see:
\[
{\rm ess\text{-}sup}\sup_{t\leq T}\left|\Delta_{2}^{H}\left(z,C_{2};W'\left(t\right)\right)\right|\leq{\rm ess\text{-}sup}\sup_{t\leq T}\left|w_{3}'\left(t,C_{2}\right)\right|\leq b.
\]
We prove the second bound, invoking Assumption \ref{assump:Regularity}:
\begin{align*}
 & \left|H_{2}\left(x,C_{2};W'\left(t\right)\right)-H_{2}\left(x,C_{2};W''\left(t\right)\right)\right|\\
 & \leq K\left|w_{2}'\left(t,C_{1},C_{2}\right)\right|\left|\varphi_{1}\left(\left\langle w_{1}'\left(t,C_{1}\right),x\right\rangle \right)-\varphi_{1}\left(\left\langle w_{1}''\left(t,C_{1}\right),x\right\rangle \right)\right|\\
 & \quad+K\left|w_{2}'\left(t,C_{1},C_{2}\right)-w_{2}''\left(t,C_{1},C_{2}\right)\right|\\
 & \leq K\left(\left|w_{2}'\left(t,C_{1},C_{2}\right)\right|+1\right)\left\Vert W'-W''\right\Vert _{T},
\end{align*}
which yields by the fact $W'\in{\cal W}_{T}\left(a,b\right)$:
\[
{\rm ess\text{-}sup}\sup_{t\leq T}\left|H_{2}\left(x,C_{2};W'\left(t\right)\right)-H_{2}\left(x,C_{2};W''\left(t\right)\right)\right|\leq K\left(a+1\right)\left\Vert W'-W''\right\Vert _{T}.
\]
Consequently, we have:
\begin{align*}
\left|H_{3}\left(x;W'\left(t\right)\right)-H_{3}\left(x;W''\left(t\right)\right)\right| & \leq K\left|w_{3}'\left(t,C_{2}\right)\right|\left|\varphi_{2}\left(H_{2}\left(x,C_{2};W'\left(t\right)\right)\right)-\varphi_{2}\left(H_{2}\left(x,C_{2};W''\left(t\right)\right)\right)\right|\\
 & \quad+K\left|w_{3}'\left(t,C_{2}\right)-w_{3}''\left(t,C_{2}\right)\right|\\
 & \leq K\left|w_{3}'\left(t,C_{2}\right)\right|\left|H_{2}\left(x,C_{2};W'\left(t\right)\right)-H_{2}\left(x,C_{2};W''\left(t\right)\right)\right|\\
 & \quad+K\left\Vert W'-W''\right\Vert _{T},\\
\left|\partial_{2}{\cal L}\left(y,\hat{y}\left(x;W'\left(t\right)\right)\right)-\partial_{2}{\cal L}\left(y,\hat{y}\left(x;W''\left(t\right)\right)\right)\right| & \leq K\left|\hat{y}\left(x;W'\left(t\right)\right)-\hat{y}\left(x;W''\left(t\right)\right)\right|\\
 & \leq K\left|H_{3}\left(x;W'\left(t\right)\right)-H_{3}\left(x;W''\left(t\right)\right)\right|,
\end{align*}
which then yield the third and fourth bounds by the fact $W',W''\in{\cal W}_{T}\left(a,b\right)$.
Using these bounds, we obtain the last bound:
\begin{align*}
 & \left|\Delta_{2}^{H}\left(z,C_{2};W'\left(t\right)\right)-\Delta_{2}^{H}\left(z,C_{2};W''\left(t\right)\right)\right|\\
 & \leq K\left|w_{3}'\left(t,C_{2}\right)\right|\Big(\left|\partial_{2}{\cal L}\left(y,\hat{y}\left(x;W'\left(t\right)\right)\right)-\partial_{2}{\cal L}\left(y,\hat{y}\left(x;W''\left(t\right)\right)\right)\right|\\
 & \qquad+\left|H_{3}\left(x;W'\left(t\right)\right)-H_{3}\left(x;W''\left(t\right)\right)\right|+\left|H_{2}\left(x,C_{2};W'\left(t\right)\right)-H_{2}\left(x,C_{2};W''\left(t\right)\right)\right|\Big)\\
 & \quad+K\left|w_{3}'\left(t,C_{2}\right)-w_{3}''\left(t,C_{2}\right)\right|,
\end{align*}
from which the last bound follows.
\end{proof}
To prove Theorem \ref{thm:existence ODE}, for a given $W\left(0\right)$,
we define a mapping $F_{W\left(0\right)}$ that maps from $W'=\left(w_{1}',w_{2}',w_{3}'\right)\in{\cal W}_{T}\left(a,b\right)$
to $F_{W\left(0\right)}\left(W'\right)=\bar{W}'=\left(\bar{w}_{1}',\bar{w}_{2}',\bar{w}_{3}'\right)$,
defined by $\bar{W}'\left(0\right)=W\left(0\right)$ and 
\begin{align*}
\frac{\partial}{\partial t}\bar{w}_{3}'\left(t,c_{2}\right) & =-\xi_{3}\left(t\right)\Delta_{3}\left(c_{2};W'\left(t\right)\right),\\
\frac{\partial}{\partial t}\bar{w}_{2}'\left(t,c_{1},c_{2}\right) & =-\xi_{2}\left(t\right)\Delta_{2}\left(c_{1},c_{2};W'\left(t\right)\right),\\
\frac{\partial}{\partial t}\bar{w}_{1}'\left(t,c_{1}\right) & =-\xi_{1}\left(t\right)\Delta_{1}\left(c_{1};W'\left(t\right)\right).
\end{align*}
Notice that the right-hand sides do not involve $\bar{W}'$. Note
that the MF ODEs' solution, initialized at $W\left(0\right)$, is
a fixed point of this mapping.

We establish the following estimates for this mapping.
\begin{lem}
\label{lem:a-priori-MF}Under Assumption \ref{assump:Regularity},
for $T\geq0$, any initialization $W\left(0\right)$ and any $W',W''\in{\cal W}_{T}\left(a,b\right)$,
\begin{align*}
{\rm ess\text{-}sup}\sup_{s\leq t}\left|\Delta_{3}\left(C_{2};W'\left(s\right)\right)-\Delta_{3}\left(C_{2};W''\left(s\right)\right)\right| & \leq K_{a,b}\left\Vert W'-W''\right\Vert _{t},\\
{\rm ess\text{-}sup}\sup_{s\leq t}\left|\Delta_{2}\left(C_{1},C_{2};W'\left(s\right)\right)-\Delta_{2}\left(C_{1},C_{2};W''\left(s\right)\right)\right| & \leq K_{a,b}\left\Vert W'-W''\right\Vert _{t},\\
{\rm ess\text{-}sup}\sup_{s\leq t}\left|\Delta_{1}\left(C_{1};W'\left(s\right)\right)-\Delta_{1}\left(C_{1};W''\left(s\right)\right)\right| & \leq K_{a,b}\left\Vert W'-W''\right\Vert _{t},
\end{align*}
and consequently, if in addition $W'\left(0\right)=W''\left(0\right)$
(not necessarily equal $W\left(0\right)$), then 
\begin{align*}
{\rm ess\text{-}sup}\sup_{t\leq T}\left|\bar{w}_{3}'\left(t,C_{2}\right)-\bar{w}_{3}''\left(t,C_{2}\right)\right| & \leq K_{a,b}\int_{0}^{T}\left\Vert W'-W''\right\Vert _{s}ds,\\
{\rm ess\text{-}sup}\sup_{t\leq T}\left|\bar{w}_{2}'\left(t,C_{1},C_{2}\right)-\bar{w}_{2}''\left(t,C_{1},C_{2}\right)\right| & \leq K_{a,b}\int_{0}^{T}\left\Vert W'-W''\right\Vert _{s}ds,\\
{\rm ess\text{-}sup}\sup_{t\leq T}\left|\bar{w}_{1}'\left(t,C_{1}\right)-\bar{w}_{1}''\left(t,C_{1}\right)\right| & \leq K_{a,b}\int_{0}^{T}\left\Vert W'-W''\right\Vert _{s}ds,
\end{align*}
in which $\bar{W}'=\left(\bar{w}_{1}',\bar{w}_{2}',\bar{w}_{3}'\right)=F_{W\left(0\right)}\left(W'\right)$,
$\bar{W}''=\left(\bar{w}_{1}'',\bar{w}_{2}'',\bar{w}_{3}''\right)=F_{W\left(0\right)}\left(W''\right)$
and $K_{a,b}\geq1$ is a generic constant that grows polynomially
with $a$ and $b$.
\end{lem}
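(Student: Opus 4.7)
The plan is to reduce all three $\Delta_i$-difference estimates to the Lipschitz and boundedness facts already collected in Lemma \ref{lem:Lipschitz-MF}, using only Assumption \ref{assump:Regularity} together with the uniform size constraints built into $W',W''\in {\cal W}_T(a,b)$. Once these three $\Delta_i$-bounds are in hand, the $\bar w_i$-bounds are immediate from the definition of $F_{W(0)}$.

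For $\Delta_3\left(c_2;W(s)\right)=\mathbb{E}_Z[\partial_2{\cal L}(Y,\hat y)\,\varphi_3'(H_3)\,\varphi_2(H_2)]$, expand $\Delta_3(c_2;W'(s))-\Delta_3(c_2;W''(s))$ as a telescoping sum inside $\mathbb{E}_Z$, varying $\partial_2{\cal L}$, then $\varphi_3'(H_3)$, then $\varphi_2(H_2)$ one factor at a time. Each term is bounded by combining the $K$-boundedness of the untouched factors (using $\|\varphi_3'\|_\infty,\|\varphi_2\|_\infty,|\partial_2{\cal L}|\le K$) with the appropriate Lipschitz bound from Lemma \ref{lem:Lipschitz-MF} on $\partial_2{\cal L}(Y,\hat y)$, $H_3$, and $H_2$ respectively (together with the $K$-Lipschitz property of $\varphi_3'$ and $\varphi_2$). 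Taking ${\rm ess\text{-}sup}$ over $C_2$ commutes with $\mathbb{E}_Z$ and yields the first claimed inequality. The bound on $\Delta_2$ is handled similarly: write it as $\mathbb{E}_Z[\Delta_2^H\,\varphi_1(\langle w_1,X\rangle)]$ and split the difference into one piece where $\Delta_2^H$ varies (controlled by the last inequality of Lemma \ref{lem:Lipschitz-MF} and $\|\varphi_1\|_\infty\le K$) and one piece where $\varphi_1(\langle w_1,X\rangle)$ varies (controlled by $K$-Lipschitz of $\varphi_1$, $|X|\le K$, and the first inequality of Lemma \ref{lem:Lipschitz-MF} giving $|\Delta_2^H|\le K_{a,b}$).

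The most delicate step is $\Delta_1\left(c_1;W(s)\right)=\mathbb{E}_Z[\mathbb{E}_{C_2}[\Delta_2^H\,w_2(s,c_1,C_2)]\,\varphi_1'(\langle w_1(s,c_1),X\rangle)\,X]$, where one must track an inner expectation $\mathbb{E}_{C_2}$ against weights that themselves appear as factors. I would split the difference into three telescoping pieces: (i) replace $\Delta_2^H(Z,C_2;W'(s))$ by $\Delta_2^H(Z,C_2;W''(s))$, bounded via $|w_2'|\le a$, $|X|\le K$, $\|\varphi_1'\|_\infty\le K$, and the last inequality of Lemma \ref{lem:Lipschitz-MF}; (ii) replace $w_2'(s,c_1,C_2)$ by $w_2''(s,c_1,C_2)$, using $|\Delta_2^H|\le K_{a,b}$, $|X|\le K$, and $\|\varphi_1'\|_\infty\le K$, directly producing $\|W'-W''\|_s$; (iii) replace $\varphi_1'(\langle w_1'(s,c_1),X\rangle)$ by $\varphi_1'(\langle w_1''(s,c_1),X\rangle)$, invoking the $K$-Lipschitz property of $\varphi_1'$, the bound $|X|\le K$, and $|w_2''|\le a$, to extract $|w_1'(s,c_1)-w_1''(s,c_1)|\le\|W'-W''\|_s$. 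Taking $\mathbb{E}_{C_2}$, then $\mathbb{E}_Z$, then ${\rm ess\text{-}sup}$ over $C_1$ on each piece yields the advertised polynomial-in-$(a,b)$ constant.

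For the second block of conclusions, since $W'(0)=W''(0)$, the definition of $F_{W(0)}$ gives componentwise
\[
\bar w_i'(t,\cdot)-\bar w_i''(t,\cdot)=-\int_0^t \xi_i(s)\bigl[\Delta_i(\cdot;W'(s))-\Delta_i(\cdot;W''(s))\bigr]\,ds, \qquad i=1,2,3.
\]
Combining $|\xi_i|\le K$ with the three $\Delta_i$-estimates, then exchanging ${\rm ess\text{-}sup}$ with the time integral (justified by Fubini together with measurability of $(t,c)\mapsto \Delta_i$), produces the three integral bounds stated. The main difficulty is not analytic but bookkeeping: one must carefully track the ess-sups in $C_1$, $C_2$, and $Z$ so that each factor can be absorbed into a single constant $K_{a,b}$ with the claimed polynomial growth; the only input beyond Assumption \ref{assump:Regularity} is Lemma \ref{lem:Lipschitz-MF} itself.
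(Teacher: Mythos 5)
Your proposal is correct and follows essentially the same route as the paper: bound each $\Delta_i$-difference by a factor-by-factor (telescoping) decomposition, absorbing bounded factors via Assumption \ref{assump:Regularity} and the ${\cal W}_T(a,b)$ constraints and invoking the Lipschitz estimates of Lemma \ref{lem:Lipschitz-MF} for the varying factor, then integrate the defining ODEs of $F_{W(0)}$ (which share the initialization $W(0)$) to get the three time-integral bounds. The only nuance is that in your step (iii) for $\Delta_1$ you also need the a priori bound $|\Delta_2^H(\cdot;W''(s))|\leq K_{a,b}$ from Lemma \ref{lem:Lipschitz-MF}, which you already use elsewhere, so this is bookkeeping rather than a gap.
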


\begin{proof}
From Assumption \ref{assump:Regularity} and the fact $W',W''\in{\cal W}_{T}\left(a,b\right)$,
we get:
\begin{align*}
\left|\Delta_{3}\left(C_{2};W'\left(s\right)\right)-\Delta_{3}\left(C_{2};W''\left(s\right)\right)\right| & \leq K\mathbb{E}_{Z}\left[\left|\partial_{2}{\cal L}\left(Y,\hat{y}\left(X;W'\left(s\right)\right)\right)-\partial_{2}{\cal L}\left(Y,\hat{y}\left(X;W''\left(s\right)\right)\right)\right|\right]\\
 & \qquad+K\mathbb{E}_{Z}\left[\left|H_{3}\left(X;W'\left(s\right)\right)-H_{3}\left(X;W''\left(s\right)\right)\right|\right]\\
 & \qquad+K\mathbb{E}_{Z}\left[\left|H_{2}\left(X,C_{2};W'\left(s\right)\right)-H_{2}\left(X,C_{2};W''\left(s\right)\right)\right|\right],\\
\left|\Delta_{2}\left(C_{1},C_{2};W'\left(s\right)\right)-\Delta_{2}\left(C_{1},C_{2};W''\left(s\right)\right)\right| & \leq K_{a,b}\left|w_{1}'\left(s,C_{1}\right)-w_{1}''\left(s,C_{1}\right)\right|\\
 & \qquad+K\left|\mathbb{E}_{Z}\left[\Delta_{2}^{H}\left(Z,C_{2};W'\left(s\right)\right)-\Delta_{2}^{H}\left(Z,C_{2};W''\left(s\right)\right)\right]\right|,\\
\left|\Delta_{1}\left(C_{1};W'\left(s\right)\right)-\Delta_{1}\left(C_{1};W''\left(s\right)\right)\right| & \leq K_{a,b}\mathbb{E}_{Z}\left[\left|\Delta_{2}^{H}\left(Z,C_{2};W'\left(s\right)\right)-\Delta_{2}^{H}\left(Z,C_{2};W''\left(s\right)\right)\right|\right]\\
 & \quad+K_{a,b}\left|w_{2}'\left(s,C_{1},C_{2}\right)-w_{2}''\left(s,C_{1},C_{2}\right)\right|\\
 & \quad+K_{a,b}\left|w_{1}'\left(s,C_{1}\right)-w_{1}''\left(s,C_{1}\right)\right|,
\end{align*}
from which the first three estimates then follow, in light of Lemma
\ref{lem:Lipschitz-MF}. The last three estimates then follow from
the fact that $\bar{W}'\left(0\right)=\bar{W}''\left(0\right)$ and
Assumption \ref{assump:Regularity}; for instance,
\begin{align*}
{\rm ess\text{-}sup}\sup_{t\leq T}\left|\bar{w}_{3}'\left(t,C_{2}\right)-\bar{w}_{3}''\left(t,C_{2}\right)\right| & \leq\int_{0}^{T}{\rm ess\text{-}sup}\left|\frac{\partial}{\partial t}\bar{w}_{3}'\left(s,C_{2}\right)-\frac{\partial}{\partial t}\bar{w}_{3}''\left(s,C_{2}\right)\right|ds\\
 & \leq K\int_{0}^{T}{\rm ess\text{-}sup}\left|\Delta_{3}\left(C_{2};W'\left(s\right)\right)-\Delta_{3}\left(C_{2};W''\left(s\right)\right)\right|ds.
\end{align*}
\end{proof}
We are now ready to prove Theorem \ref{thm:existence ODE}. 
\begin{proof}[Proof of Theorem \ref{thm:existence ODE}]
We will use a Picard-type iteration. To lighten notations:
\[
{\cal W}_{T}^{+}\equiv{\cal W}_{T}^{+}\left(\interleave W\interleave_{0}+K_{0,2}\left(T\right),\interleave W\interleave_{0}+K_{0,3}\left(T\right),W\left(0\right)\right),\qquad F\equiv F_{W\left(0\right)}.
\]
Since $\interleave W\interleave_{0}\leq K$ by assumption, we have
$\interleave W\interleave_{0}+K_{0,2}\left(T\right)+K_{0,3}\left(T\right)\leq K_{T}$.
Recall that ${\cal W}_{T}^{+}$ is complete. For an arbitrary $T>0$,
consider $W',W''\in{\cal W}_{T}^{+}$. Lemma \ref{lem:a-priori-MF}
yields:
\[
\left\Vert F\left(W'\right)-F\left(W''\right)\right\Vert _{T}\leq K_{T}\int_{0}^{T}\left\Vert W'-W''\right\Vert _{s}ds.
\]
Note that $F$ maps to ${\cal W}_{T}^{+}$ under Assumption \ref{assump:Regularity}
by the same argument as Lemma \ref{lem:a-priori-MF-norms}. Hence
we are allowed to iterating this inequality and get, for an arbitrary
$T>0$, 
\begin{align*}
\left\Vert F^{\left(k\right)}\left(W'\right)-F^{\left(k\right)}\left(W''\right)\right\Vert _{T} & \leq K_{T}\int_{0}^{T}\left\Vert F^{\left(k-1\right)}\left(W'\right)-F^{\left(k-1\right)}\left(W''\right)\right\Vert _{T_{2}}dT_{2}\\
 & \leq K_{T}^{2}\int_{0}^{T}\int_{0}^{T_{2}}\left\Vert F^{\left(k-2\right)}\left(W'\right)-F^{\left(k-2\right)}\left(W''\right)\right\Vert _{T_{3}}\mathbb{I}\left(T_{2}\leq T\right)dT_{3}dT_{2}\\
 & ...\\
 & \leq K_{T}^{k}\int_{0}^{T}\int_{0}^{T_{2}}...\int_{0}^{T_{k}}\left\Vert W'-W''\right\Vert _{T_{k+1}}\mathbb{I}\left(T_{k}\leq...\leq T_{2}\leq T\right)dT_{k+1}...dT_{2}\\
 & \leq\frac{1}{k!}K_{T}^{k}\left\Vert W'-W''\right\Vert _{T}.
\end{align*}
By substituting $W''=F\left(W'\right)$, we have:
\begin{align*}
\sum_{k=1}^{\infty}\left\Vert F^{\left(k+1\right)}\left(W'\right)-F^{\left(k\right)}\left(W'\right)\right\Vert _{T} & =\sum_{k=1}^{\infty}\left\Vert F^{\left(k\right)}\left(W''\right)-F^{\left(k\right)}\left(W'\right)\right\Vert _{T}\\
 & \leq\sum_{k=1}^{\infty}\frac{1}{k!}K_{T}^{k}\left\Vert W'-W''\right\Vert _{T}\\
 & <\infty.
\end{align*}
Hence as $k\to\infty$, $F^{\left(k\right)}\left(W'\right)$ converges
to a limit in ${\cal W}_{T}^{+}$, which is a fixed point of $F$.
The uniqueness of a fixed point follows from the above estimate, since
if $W'$ and $W''$ are fixed points then 
\[
\left\Vert W'-W''\right\Vert _{T}=\left\Vert F^{\left(k\right)}\left(W'\right)-F^{\left(k\right)}\left(W''\right)\right\Vert _{T}\leq\frac{1}{k!}K_{T}^{k}\left\Vert W'-W''\right\Vert _{T},
\]
while one can take $k$ arbitrarily large. This proves that the solution
exists and is unique on $t\in\left[0,T\right]$. Since $T$ is arbitrary,
we have existence and uniqueness of the solution on the time interval
$[0,\infty)$.
\end{proof}

\section{Connection between the neural net and its MF limit: proofs for Section
\ref{sec:connection}\label{sec:Connection-proof}}

\subsection{Proof of Theorem \ref{thm:gradient descent coupling}}

We construct an auxiliary trajectory, which we call the \textit{particle
ODEs}: 
\begin{align*}
\frac{\partial}{\partial t}\tilde{w}_{3}\left(t,j_{2}\right) & =-\xi_{3}\left(t\right)\mathbb{E}_{Z}\left[\partial_{2}{\cal L}\left(Y,\hat{\mathbf{y}}\left(X;\tilde{W}\left(t\right)\right)\right)\varphi_{3}'\left(\mathbf{H}_{3}\left(X;\tilde{W}\left(t\right)\right)\right)\varphi_{2}\left(\mathbf{H}_{2}\left(X,j_{2};\tilde{W}\left(t\right)\right)\right)\right],\\
\frac{\partial}{\partial t}\tilde{w}_{2}\left(t,j_{1},j_{2}\right) & =-\xi_{2}\left(t\right)\mathbb{E}_{Z}\left[\Delta_{2}^{\mathbf{H}}\left(Z,j_{2};\tilde{W}\left(t\right)\right)\varphi_{1}\left(\left\langle \tilde{w}_{1}\left(t,j_{1}\right),X\right\rangle \right)\right],\\
\frac{\partial}{\partial t}\tilde{w}_{1}\left(t,j_{1}\right) & =-\xi_{1}\left(t\right)\mathbb{E}_{Z}\left[\frac{1}{n_{2}}\sum_{j_{2}=1}^{n_{2}}\Delta_{2}^{\mathbf{H}}\left(Z,j_{2};\tilde{W}\left(t\right)\right)\tilde{w}_{2}\left(t,j_{1},j_{2}\right)\varphi_{1}'\left(\left\langle \tilde{w}_{1}\left(t,j_{1}\right),X\right\rangle \right)X\right],
\end{align*}
in which $j_{1}=1,...,n_{1}$, $j_{2}=1,...,n_{2}$, $\tilde{W}\left(t\right)=\left(\tilde{w}_{1}\left(t,\cdot\right),\tilde{w}_{2}\left(t,\cdot,\cdot\right),\tilde{w}_{3}\left(t,\cdot\right)\right)$,
and $t\in\mathbb{R}_{\geq0}$. We specify the initialization $\tilde{W}\left(0\right)$:
$\tilde{w}_{1}\left(0,j_{1}\right)=w_{1}^{0}\left(C_{1}\left(j_{1}\right)\right)$,
$\tilde{w}_{2}\left(0,j_{1},j_{2}\right)=w_{2}^{0}\left(C_{1}\left(j_{1}\right),C_{2}\left(j_{2}\right)\right)$
and $\tilde{w}_{3}\left(0,j_{3}\right)=w_{3}^{0}\left(C_{2}\left(j_{2}\right)\right)$.
That is, it shares the same initialization with the neural network
one $\mathbf{W}\left(0\right)$, and hence is coupled with the neural
network and the MF ODEs. Roughly speaking, the particle ODEs are continuous-time
trajectories of finitely many neurons, averaged over the data distribution.
We note that $\tilde{W}\left(t\right)$ is random for all $t\in\mathbb{R}_{\geq0}$
due to the randomness of $\left\{ C_{i}\left(j_{i}\right)\right\} _{i=1,2}$.

The existence and uniqueness of the solution to the particle ODEs
follows from the same proof as in Theorem \ref{thm:existence ODE},
which we shall not repeat here. We equip $\tilde{W}\left(t\right)$
with the norm
\[
\interleave\tilde{W}\interleave_{T}=\max\bigg\{\max_{j_{1}\leq n_{1},\;j_{2}\leq n_{2}}\sup_{t\leq T}\left|\tilde{w}_{2}\left(t,j_{1},j_{2}\right)\right|,\;\max_{j_{2}\leq n_{2}}\sup_{t\leq T}\left|\tilde{w}_{3}\left(t,j_{2}\right)\right|\bigg\}.
\]
One can also define the measures $\mathscr{D}_{T}\left(W,\tilde{W}\right)$
and $\mathscr{D}_{T}\left(\tilde{W},\mathbf{W}\right)$ similar to
Eq. (\ref{eq:dist_W}): 
\begin{align*}
\mathscr{D}_{T}\left(W,\tilde{W}\right)=\sup\big\{ & \left|w_{1}\left(t,C_{1}\left(j_{1}\right)\right)-\tilde{w}_{1}\left(t,C_{1}\left(j_{1}\right)\right)\right|,\;\left|w_{2}\left(t,C_{1}\left(j_{1}\right),C_{2}\left(j_{2}\right)\right)-\tilde{w}_{2}\left(t,C_{1}\left(j_{1}\right),C_{2}\left(j_{2}\right)\right)\right|,\\
 & \left|w_{3}\left(t,C_{2}\left(j_{2}\right)\right)-\tilde{w}_{3}\left(t,C_{2}\left(j_{2}\right)\right)\right|:\;t\leq T,\;j_{1}\leq n_{1},\;j_{2}\leq n_{2}\big\},\\
\mathscr{D}_{T}\left(\tilde{W},\mathbf{W}\right)=\sup\big\{ & \left|\mathbf{w}_{1}\left(\left\lfloor t/\epsilon\right\rfloor ,j_{1}\right)-\tilde{w}_{1}\left(t,C_{1}\left(j_{1}\right)\right)\right|,\;\left|\mathbf{w}_{2}\left(\left\lfloor t/\epsilon\right\rfloor ,j_{1},j_{2}\right)-\tilde{w}_{2}\left(t,C_{1}\left(j_{1}\right),C_{2}\left(j_{2}\right)\right)\right|,\\
 & \left|\mathbf{w}_{3}\left(\left\lfloor t/\epsilon\right\rfloor ,j_{2}\right)-\tilde{w}_{3}\left(t,C_{2}\left(j_{2}\right)\right)\right|:\;t\leq T,\;j_{1}\leq n_{1},\;j_{2}\leq n_{2}\big\}.
\end{align*}
We have the following results: 
\begin{thm}
\label{thm:particle coupling}Under the same setting as Theorem \ref{thm:gradient descent coupling},
for any $\delta>0$, with probability at least $1-\delta$, 
\[
\mathscr{D}_{T}\left(W,\tilde{W}\right)\leq\frac{1}{\sqrt{n_{\min}}}\log^{1/2}\left(\frac{3\left(T+1\right)n_{\max}^{2}}{\delta}+e\right)e^{K_{T}},
\]
in which $n_{\min}=\min\left\{ n_{1},n_{2}\right\} $, $n_{\max}=\max\left\{ n_{1},n_{2}\right\} $,
and $K_{T}=K\left(1+T^{K}\right)$.
\end{thm}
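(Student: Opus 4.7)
The particle system $\tilde W$ and the MF limit $W$ share the same initialization under the coupling ($\tilde w_i(0,j_i) = w_i^0(C_i(j_i)) = w_i(0,C_i(j_i))$), and their time derivatives differ only in that the MF ODE uses the expectations $\mathbb{E}_{C_1},\mathbb{E}_{C_2}$ whereas the particle ODE uses the empirical averages $\frac{1}{n_1}\sum_{j_1'},\frac{1}{n_2}\sum_{j_2'}$ over the drawn indices. My plan is a Gr\"onwall argument on $\mathscr{D}_t(W,\tilde W)$ driven by a "sampling noise" equal to the gap between these empirical averages and expectations, evaluated along the \emph{deterministic} MF trajectory.

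For each triple $(t,j_1,j_2)$ I would differentiate $\tilde w_i(t,j_i)-w_i(t,C_i(j_i))$ and split the resulting right-hand side into two contributions. The first is a Lipschitz contribution, obtained by comparing each particle kernel at $\tilde W(t)$ to an intermediate "empirical-average-at-$W(t)$" object, for instance
\[
\bar H_2(x,C_2(j_2);W(t)) = \frac{1}{n_1}\sum_{j_1'=1}^{n_1} w_2(t,C_1(j_1'),C_2(j_2))\,\varphi_1\!\left(\langle w_1(t,C_1(j_1')),x\rangle\right),
\]
and analogously for $\mathbf H_3$ and for the $\frac{1}{n_2}\sum$ appearing in $\partial_t\tilde w_1$. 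The discrepancy $\mathbf H_2(x,j_2;\tilde W(t))-\bar H_2(x,C_2(j_2);W(t))$ is bounded by $K_T\mathscr{D}_t(W,\tilde W)$ using the a priori bounds of Lemma \ref{lem:a-priori-MF-norms} together with Lipschitz estimates in the spirit of Lemma \ref{lem:Lipschitz-MF}. The second contribution is the sampling noise $\bar H_2-H_2$ and its analogs: each is a centered empirical average of $n_1$ (or $n_2$) i.i.d.\ bounded summands, with boundedness coming from Assumption \ref{assump:Regularity} and Lemma \ref{lem:a-priori-MF-norms}, so Hoeffding's inequality (conditional on $Z$) controls each noise term at deviation scale $n_{\min}^{-1/2}\sqrt{\log(1/\delta')}$ for any failure probability $\delta'$.

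To make this uniform in $t\in[0,T]$ and in $(j_1,j_2)$ I would discretize $[0,T]$ on a grid of mesh $\tau=O(1)$ giving $O(T+1)$ nodes, apply Hoeffding and a union bound across nodes, across the $n_1 n_2 \le n_{\max}^2$ neuron pairs, and across the finitely many noise summands; this produces the $\log\bigl(3(T+1)n_{\max}^2/\delta\bigr)$ factor inside the square root. Between grid points the deterministic functionals of $W(t)$ appearing in the noise are $K_T$-Lipschitz in $t$ by Lemma \ref{lem:a-priori-MF-norms}, so the time-interpolation error is of order $K_T\tau$ and is absorbed into the Lipschitz term. Combining the Lipschitz and noise contributions yields an integral inequality
\[
\mathscr{D}_t(W,\tilde W)\le K_T\int_0^t \mathscr{D}_s(W,\tilde W)\,ds + t\cdot n_{\min}^{-1/2}\sqrt{\log\bigl(3(T+1)n_{\max}^2/\delta\bigr)}\cdot K,
\]
and Gr\"onwall (with polynomial prefactors in $T$ absorbed into the new $K_T$) produces the stated bound $e^{K_T}n_{\min}^{-1/2}\log^{1/2}\!\bigl(3(T+1)n_{\max}^2/\delta + e\bigr)$.

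The main obstacle I anticipate is the separation of history-dependent randomness from i.i.d.\ noise. Since $\tilde W(t)$ depends on $\{C_i(j_i)\}$ through its entire past, one cannot directly apply concentration to $\mathbf H_2(x,j_2;\tilde W(t))$. The intermediate object $\bar H_2(\cdot;W(t))$ is the essential bridge: it isolates the randomness of $\{C_1(j_1')\}_{j_1'\le n_1}$ into an interaction with the \emph{deterministic} MF curve $W(\cdot)$, where standard i.i.d.\ Hoeffding is legitimate, while the history-dependent remainder is pushed into the Lipschitz term and absorbed by Gr\"onwall. This is the coupling device of \cite{sznitman1991topics,mei2018mean}, adapted here to the three-layer setting in which both $C_1$- and $C_2$-averages must be handled simultaneously.
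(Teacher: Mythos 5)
Your plan coincides in all essentials with the paper's proof: the same bridge object (the empirical average over the sampled indices evaluated along the deterministic MF trajectory, your $\bar H_2$, which is exactly the split $Q_{2,1}+Q_{2,2}$, $Q_{3,1}+Q_{3,2}$, $Q_{1,1}+Q_{1,2}$ in the paper), the same drift-versus-sampling-noise decomposition, a union bound over an $O((T+1)/\xi)$ time grid and over the $n_1n_2$ neuron pairs, time-interpolation at cost $K_T\xi$, and Gr\"onwall from $\mathscr{D}_0(W,\tilde W)=0$, with the same parameter choices producing the $n_{\min}^{-1/2}\log^{1/2}(3(T+1)n_{\max}^2/\delta+e)\,e^{K_T}$ bound. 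Your variant of interpolating only the noise functionals of the deterministic $W(\cdot)$ in time (rather than the derivatives of both $W$ and $\tilde W$, as in the paper's Claim 1) is a harmless simplification.

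The one step that does not go through as stated is the concentration: you invoke ``Hoeffding's inequality (conditional on $Z$)'', but the quantity that must be small with high probability is $\mathbb{E}_Z\bigl[\,|\text{noise}(Z)|\,\bigr]$ (e.g. $\mathbb{E}_Z[\,|\bar H_2(X,C_2(j_2);W(t))-H_2(X,C_2(j_2);W(t))|\,]$), not the noise at a fixed data point. A conditional tail bound holds for each fixed $z$, but the exceptional event depends on $z$ and cannot be unioned over the continuum of data values; Fubini plus Markov only yields a polynomial tail $\sim K/(\sqrt{n}\,\delta)$, which is too weak to survive the union bound over $n_{\max}^2$ pairs and the time grid with only a $\log^{1/2}$ penalty. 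This is precisely why the paper proves Lemma \ref{lem:square hoeffding}: one views $x\mapsto f_i(c_i,x)$ as bounded random elements of a Hilbert space (or, equivalently, combines the $z$-uniform conditional tail bound with a truncation, Markov, and Cauchy--Schwarz step, via Theorem \ref{thm:iid-hilbert}) to get an exponential tail of the form $(8R/\delta)\exp(-n\delta^2/8R^2)$ for $\mathbb{E}_x[\,|\frac1n\sum_i f_i(c_i,x)-\mathbb{E}_c f_i(c_i,x)|\,]$. Replacing your conditional-Hoeffding step by this lemma (and noting the extra $1/\gamma$ prefactors it introduces, which the paper absorbs into the $+e$ inside the logarithm), the rest of your argument is exactly the paper's.
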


\begin{thm}
\label{thm:gradient descent}Under the same setting as Theorem \ref{thm:gradient descent coupling},
for any $\delta>0$ and $\epsilon\leq1$, with probability at least
$1-\delta$,
\[
\mathscr{D}_{T}\left(\tilde{W},\mathbf{W}\right)\leq\sqrt{\epsilon\log\left(\frac{2n_{1}n_{2}}{\delta}+e\right)}e^{K_{T}},
\]
in which $K_{T}=K\left(1+T^{K}\right)$.
\end{thm}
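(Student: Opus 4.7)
The plan is to compare, step by step, the discrete SGD trajectory $\mathbf{W}(k)$ against the continuous particle-ODE trajectory $\tilde{W}(t)$, which share the same initialization by construction. Two independent error sources separate them: (i) an Euler time-discretization error of order $\epsilon^{2}$ per step, from replacing $\int_{k\epsilon}^{(k+1)\epsilon}\dot{\tilde{w}}_{i}(s)\,ds$ by $\epsilon\,\dot{\tilde{w}}_{i}(k\epsilon)$, and (ii) the SGD noise from using a fresh sample $z(k)$ in place of the expectation $\mathbb{E}_{Z}[\cdot]$ driving the particle ODE. Summed over $T/\epsilon$ steps, (i) contributes $O(\epsilon T)$; (ii) is a martingale sum of uniformly bounded terms of size $O(\epsilon)$, which after Azuma--Hoeffding and a union bound over the neurons contributes $O\bigl(\sqrt{\epsilon\log(n_{1}n_{2}/\delta)}\bigr)$.

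First I would establish a priori bounds exactly as in Lemma \ref{lem:a-priori-MF-norms}: under Assumptions \ref{assump:Regularity} and \ref{assump:Regularity-init} together with $\epsilon\leq1$, both $\interleave\tilde{W}\interleave_{T}$ and $\interleave\mathbf{W}\interleave_{T}$ are bounded by some $K_{T}=K(1+T^{K})$ deterministically. For each layer $i$ and each neuron index I would then write the one-step decomposition
\[
\mathbf{w}_{i}(k+1)-\tilde{w}_{i}((k+1)\epsilon)=\bigl[\mathbf{w}_{i}(k)-\tilde{w}_{i}(k\epsilon)\bigr]-\epsilon\,\xi_{i}(k\epsilon)\,\mathcal{E}_{k}^{(i)}+R_{k}^{(i)},
\]
where $|R_{k}^{(i)}|\leq K_{T}\epsilon^{2}$ is the Taylor remainder (bounded using that the particle-ODE right-hand side is Lipschitz in time, by the finite-width analog of Lemma \ref{lem:Lipschitz-MF}), and where
\[
\mathcal{E}_{k}^{(i)}=\underbrace{\mathrm{Grad}_{i}(z(k);\mathbf{W}(k))-\mathbb{E}_{Z}\mathrm{Grad}_{i}(Z;\mathbf{W}(k))}_{=:\,M_{k}^{(i)}}+\underbrace{\mathbb{E}_{Z}\bigl[\mathrm{Grad}_{i}(Z;\mathbf{W}(k))-\mathrm{Grad}_{i}(Z;\tilde{W}(k\epsilon))\bigr]}_{|\cdot|\,\leq\,K_{T}\,\mathscr{D}_{k\epsilon}(\tilde{W},\mathbf{W})}.
\]
Here $M_{k}^{(i)}$ is a martingale difference with respect to $\mathcal{F}_{k}=\sigma(\{C_{i}(j_{i})\},z(0),\ldots,z(k-1))$, bounded a.s.\ by $K_{T}$.

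Summing over $k<\lfloor t/\epsilon\rfloor$, taking the maximum over all $n_{1}+n_{1}n_{2}+n_{2}\leq3n_{1}n_{2}$ (layer, neuron) indices, and applying Azuma--Hoeffding combined with Doob's maximal inequality (to handle the $\sup_{t\leq T}$) together with a union bound over indices, I would obtain
\[
\max_{i,j_{1},j_{2}}\sup_{t\leq T}\Bigl|\epsilon\sum_{k<t/\epsilon}\xi_{i}(k\epsilon)M_{k}^{(i)}\Bigr|\,\leq\,K_{T}\sqrt{\epsilon\,T\log(n_{1}n_{2}/\delta)}
\]
on an event of probability at least $1-\delta$. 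On this event the recursion becomes $\mathscr{D}_{t}(\tilde{W},\mathbf{W})\leq K_{T}\epsilon\sum_{s<t,\,s\in\epsilon\mathbb{N}}\mathscr{D}_{s}(\tilde{W},\mathbf{W})+K_{T}\epsilon T+K_{T}\sqrt{\epsilon\log(n_{1}n_{2}/\delta)}$, and a discrete Gr\"onwall inequality yields the claimed $\sqrt{\epsilon\log(2n_{1}n_{2}/\delta+e)}\,e^{K_{T}}$ bound.

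The main obstacle is the self-referential character of the martingale concentration: $M_{k}^{(i)}$ depends on the current iterate $\mathbf{W}(k)$, which is itself a (highly nonlinear) functional of $z(0),\ldots,z(k-1)$. The remedy is the standard one: work on the event where the a priori bound $\interleave\mathbf{W}\interleave_{T}\leq K_{T}$ holds, on which $|M_{k}^{(i)}|\leq K_{T}$ deterministically so Azuma--Hoeffding applies; since this event has probability one under the a priori estimate, no additional loss is incurred. The second delicate point is uniformizing the bound simultaneously over the $O(n_{1}n_{2})$ neuron indices and over $t\in[0,T]$; the former contributes the $\log(n_{1}n_{2})$ factor, while the latter is absorbed by observing that the martingale sum is piecewise constant in $t$ and only jumps at $t=k\epsilon$, combined with Doob's maximal inequality.
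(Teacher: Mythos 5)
Your proposal is correct and follows essentially the same route as the paper's proof: decompose the per-step error into an $O(\epsilon^{2})$ time-discretization remainder, a drift difference bounded by $K_{T}\,\mathscr{D}_{k\epsilon}(\tilde{W},\mathbf{W})$, and a bounded martingale-difference noise term, then apply a maximal Azuma-type inequality with a union bound over the $O(n_{1}n_{2})$ neuron indices and close the recursion with Gronwall. The only (harmless) difference is that you center the martingale at the SGD iterate $\mathbf{W}(k)$, which requires the deterministic a priori bound $\interleave\mathbf{W}\interleave_{T}\leq K_{T}$ that you correctly invoke, whereas the paper centers its fluctuation terms $r_{i}$ at the particle trajectory $\tilde{W}(k\epsilon)$, which is independent of the data stream, so its martingale bound only uses $\interleave\tilde{W}\interleave_{T}\leq K_{T}$.
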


\begin{proof}[Proof of Theorem \ref{thm:gradient descent coupling}]
Using the fact 
\[
\mathscr{D}_{T}\left(W,\mathbf{W}\right)\leq\mathscr{D}_{T}\left(W,\tilde{W}\right)+\mathscr{D}_{T}\left(\tilde{W},\mathbf{W}\right),
\]
the thesis is immediate from Theorems \ref{thm:particle coupling}
and \ref{thm:gradient descent}.
\end{proof}

\subsection{Proof of Theorems \ref{thm:particle coupling} and \ref{thm:gradient descent}}
\begin{proof}[Proof of Theorem \ref{thm:particle coupling}]
In the following, let $K_{t}$ denote an generic positive constant
that may change from line to line and takes the form 
\[
K_{t}=K\left(1+t^{K}\right),
\]
such that $K_{t}\geq1$ and $K_{t}\leq K_{T}$ for all $t\leq T$.
We first note that at initialization, $\mathscr{D}_{0}\left(W,\tilde{W}\right)=0$.
Since $\interleave W\interleave_{0}\leq K$, $\interleave W\interleave_{T}\leq K_{T}$
by Lemma \ref{lem:a-priori-MF-norms}. Furthermore it is easy to see
that $\interleave\tilde{W}\interleave_{0}\leq\interleave W\interleave_{0}\leq K$
almost surely. By the same argument as in Lemma \ref{lem:a-priori-MF-norms},
$\interleave\tilde{W}\interleave_{T}\leq K_{T}$ almost surely.

We shall use all above bounds repeatedly in the proof. We decompose
the proof into several steps.

\paragraph*{Step 1 - Main proof.}

Let us define, for brevity
\begin{align*}
q_{3}\left(t,x\right) & =\mathbf{H}_{3}\left(x;\tilde{W}\left(t\right)\right)-H_{3}\left(x;W\left(t\right)\right),\\
q_{2}\left(t,x,j_{2},c_{2}\right) & =\mathbf{H}_{2}\left(x,j_{2};\tilde{W}\left(t\right)\right)-H_{2}\left(x,c_{2};W\left(t\right)\right),\\
q_{\Delta}\left(t,z,j_{1},j_{2},c_{1},c_{2}\right) & =\Delta_{2}^{\mathbf{H}}\left(Z,j_{2};\tilde{W}\left(t\right)\right)\tilde{w}_{2}\left(t,j_{1},j_{2}\right)-\Delta_{2}^{H}\left(z,c_{2};W\left(t\right)\right)w_{2}\left(t,c_{1},c_{2}\right).
\end{align*}
Consider $t\geq0$. We first bound the difference in the updates between
$W$ and $\tilde{W}$. Let us start with $w_{3}$ and $\tilde{w}_{3}$.
By Assumption \ref{assump:Regularity}, we have:
\[
\left|\frac{\partial}{\partial t}\tilde{w}_{3}\left(t,j_{2}\right)-\frac{\partial}{\partial t}w_{3}\left(t,C_{2}\left(j_{2}\right)\right)\right|\leq K\mathbb{E}_{Z}\left[\left|q_{3}\left(t,X\right)\right|+\left|q_{2}\left(t,X,j_{2},C_{2}\left(j_{2}\right)\right)\right|\right].
\]
Similarly, for $w_{2}$ and $\tilde{w}_{2}$,
\begin{align*}
 & \left|\frac{\partial}{\partial t}\tilde{w}_{2}\left(t,j_{1},j_{2}\right)-\frac{\partial}{\partial t}w_{2}\left(t,C_{1}\left(j_{1}\right),C_{2}\left(j_{2}\right)\right)\right|\\
 & \leq K\mathbb{E}_{Z}\left[\left|\Delta_{2}^{\mathbf{H}}\left(Z,j_{2};\tilde{W}\left(t\right)\right)-\Delta_{2}^{H}\left(Z,C_{2}\left(j_{2}\right);W\left(t\right)\right)\right|\right]\\
 & \quad+K\left|w_{3}\left(t,C_{2}\left(j_{2}\right)\right)\right|\left|\tilde{w}_{1}\left(t,j_{1}\right)-w_{1}\left(t,C_{1}\left(j_{1}\right)\right)\right|\\
 & \leq K_{t}\mathbb{E}_{Z}\left[\left|q_{3}\left(t,X\right)\right|+\left|q_{2}\left(t,X,j_{2},C_{2}\left(j_{2}\right)\right)\right|\right]\\
 & \quad+K_{t}\left(\left|\tilde{w}_{1}\left(t,j_{1}\right)-w_{1}\left(t,C_{1}\left(j_{1}\right)\right)\right|+\left|\tilde{w}_{3}\left(t,j_{2}\right)-w_{3}\left(t,C_{2}\left(j_{2}\right)\right)\right|\right)\\
 & \leq K_{t}\mathbb{E}_{Z}\left[\left|q_{3}\left(t,X\right)\right|+\left|q_{2}\left(t,X,j_{2},C_{2}\left(j_{2}\right)\right)\right|\right]+K_{t}\mathscr{D}_{t}\left(W,\tilde{W}\right),
\end{align*}
and for $w_{1}$ and $\tilde{w}_{1}$, by Lemma \ref{lem:Lipschitz-MF},
\begin{align*}
 & \left|\frac{\partial}{\partial t}\tilde{w}_{1}\left(t,j_{1}\right)-\frac{\partial}{\partial t}w_{1}\left(t,C_{1}\left(j_{1}\right)\right)\right|\\
 & \leq K\mathbb{E}_{Z}\left[\left|\frac{1}{n_{2}}\sum_{j_{2}=1}^{n_{2}}\mathbb{E}_{C_{2}}\left[q_{\Delta}\left(t,Z,j_{1},j_{2},C_{1}\left(j_{1}\right),C_{2}\right)\right]\right|\right]\\
 & \quad+\mathbb{E}_{C_{2}}\left[\left|\Delta_{2}^{H}\left(Z,C_{2};W\left(t\right)\right)\right|\left|w_{2}\left(t,C_{1}\left(j_{1}\right),C_{2}\right)\right|\right]\left|\tilde{w}_{1}\left(t,j_{1}\right)-w_{1}\left(t,C_{1}\left(j_{1}\right)\right)\right|\\
 & \leq K\mathbb{E}_{Z}\left[\left|\frac{1}{n_{2}}\sum_{j_{2}=1}^{n_{2}}\mathbb{E}_{C_{2}}\left[q_{\Delta}\left(t,Z,j_{1},j_{2},C_{1}\left(j_{1}\right),C_{2}\right)\right]\right|\right]\\
 & \quad+K_{t}\mathscr{D}_{t}\left(W,\tilde{W}\right).
\end{align*}
To further the bounding, we now make the following two claims:
\begin{itemize}
\item \textbf{Claim 1:} For any $\xi>0$,
\begin{align*}
\max_{j_{2}\leq n_{2}}\left|\frac{\partial}{\partial t}w_{3}\left(t+\xi,C_{2}\left(j_{2}\right)\right)-\frac{\partial}{\partial t}w_{3}\left(t,C_{2}\left(j_{2}\right)\right)\right| & \leq K_{t+\xi}\xi,\\
\max_{j_{1}\leq n_{1},\;j_{2}\leq n_{2}}\left|\frac{\partial}{\partial t}w_{2}\left(t+\xi,C_{1}\left(j_{1}\right),C_{2}\left(j_{2}\right)\right)-\frac{\partial}{\partial t}w_{2}\left(t,C_{1}\left(j_{1}\right),C_{2}\left(j_{2}\right)\right)\right| & \leq K_{t+\xi}\xi,\\
\max_{j_{1}\leq n_{1}}\left|\frac{\partial}{\partial t}w_{1}\left(t+\xi,C_{1}\left(j_{1}\right)\right)-\frac{\partial}{\partial t}w_{1}\left(t,C_{1}\left(j_{1}\right)\right)\right| & \leq K_{t+\xi}\xi,
\end{align*}
and similarly,
\begin{align*}
\max_{j_{2}\leq n_{2}}\left|\frac{\partial}{\partial t}\tilde{w}_{3}\left(t+\xi,j_{2}\right)-\frac{\partial}{\partial t}\tilde{w}_{3}\left(t,j_{2}\right)\right| & \leq K_{t+\xi}\xi,\\
\max_{j_{1}\leq n_{1},\;j_{2}\leq n_{2}}\left|\frac{\partial}{\partial t}\tilde{w}_{2}\left(t+\xi,j_{1},j_{2}\right)-\frac{\partial}{\partial t}\tilde{w}_{2}\left(t,j_{1},j_{2}\right)\right| & \leq K_{t+\xi}\xi,\\
\max_{j_{1}\leq n_{1}}\left|\frac{\partial}{\partial t}\tilde{w}_{1}\left(t+\xi,j_{1}\right)-\frac{\partial}{\partial t}\tilde{w}_{1}\left(t,j_{1}\right)\right| & \leq K_{t+\xi}\xi.
\end{align*}
\item \textbf{Claim 2:} For any $\gamma_{1},\gamma_{2},\gamma_{3}>0$ and
$t\geq0$,
\begin{align*}
 & \max\Bigg\{\max_{j_{2}\leq n_{2}}\mathbb{E}_{Z}\left[\left|q_{2}\left(t,X,j_{2},C_{2}\left(j_{2}\right)\right)\right|\right],\quad\mathbb{E}_{Z}\left[\left|q_{3}\left(t,X\right)\right|\right],\\
 & \qquad\max_{j_{1}\leq n_{1}}\mathbb{E}_{Z}\left[\left|\frac{1}{n_{2}}\sum_{j_{2}=1}^{n_{2}}\mathbb{E}_{C_{2}}\left[q_{\Delta}\left(t,Z,j_{1},j_{2},C_{1}\left(j_{1}\right),C_{2}\right)\right]\right|\right]\Bigg\}\\
 & \geq K_{t}\left(\mathscr{D}_{t}\left(W,\tilde{W}\right)+\gamma_{1}+\gamma_{2}+\gamma_{3}\right),
\end{align*}
with probability at most
\[
\frac{n_{1}}{\gamma_{1}}\exp\left(-\frac{n_{2}\gamma_{1}^{2}}{K_{t}}\right)+\frac{n_{2}}{\gamma_{2}}\exp\left(-\frac{n_{1}\gamma_{2}^{2}}{K_{t}}\right)+\frac{1}{\gamma_{3}}\exp\left(-\frac{n_{2}\gamma_{3}^{2}}{K_{t}}\right).
\]
\end{itemize}
Combining these claims with the previous bounds, taking a union bound
over $t\in\left\{ 0,\xi,2\xi,...,\left\lfloor T/\xi\right\rfloor \xi\right\} $
for some $\xi\in\left(0,1\right)$, we obtain that
\begin{align*}
 & \max\bigg\{\max_{j_{2}\leq n_{2}}\left|\frac{\partial}{\partial t}\tilde{w}_{3}\left(t,j_{2}\right)-\frac{\partial}{\partial t}w_{3}\left(t,C_{2}\left(j_{2}\right)\right)\right|,\\
 & \qquad\max_{j_{1}\leq n_{1},\;j_{2}\leq n_{2}}\left|\frac{\partial}{\partial t}\tilde{w}_{2}\left(t,j_{1},j_{2}\right)-\frac{\partial}{\partial t}w_{2}\left(t,C_{1}\left(j_{1}\right),C_{2}\left(j_{2}\right)\right)\right|,\\
 & \qquad\max_{j_{1}\leq n_{1}}\left|\frac{\partial}{\partial t}\tilde{w}_{1}\left(t,j_{1}\right)-\frac{\partial}{\partial t}w_{1}\left(t,C_{1}\left(j_{1}\right)\right)\right|\bigg\}\\
 & \leq K_{T}\left(\mathscr{D}_{t}\left(W,\tilde{W}\right)+\gamma_{1}+\gamma_{2}+\gamma_{3}+\xi\right),\qquad\forall t\in\left[0,T\right],
\end{align*}
with probability at least
\[
1-\frac{T+1}{\xi}\left[\frac{n_{1}}{\gamma_{1}}\exp\left(-\frac{n_{2}\gamma_{1}^{2}}{K_{T}}\right)+\frac{n_{2}}{\gamma_{2}}\exp\left(-\frac{n_{1}\gamma_{2}^{2}}{K_{T}}\right)+\frac{1}{\gamma_{3}}\exp\left(-\frac{n_{2}\gamma_{3}^{2}}{K_{T}}\right)\right].
\]
The above event in turn implies
\[
\mathscr{D}_{t}\left(W,\tilde{W}\right)\leq K_{T}\int_{0}^{t}\left(\mathscr{D}_{s}\left(W,\tilde{W}\right)+\gamma_{1}+\gamma_{2}+\gamma_{3}+\xi\right)ds,
\]
and hence by Gronwall's lemma and the fact $\mathscr{D}_{0}\left(W,\tilde{W}\right)=0$,
we get
\[
\mathscr{D}_{T}\left(W,\tilde{W}\right)\leq\left(\gamma_{1}+\gamma_{2}+\gamma_{3}+\xi\right)e^{K_{T}}.
\]
The theorem then follows from the choice
\[
\xi=\frac{1}{\sqrt{n_{\max}}},\quad\gamma_{2}=\frac{K_{T}}{\sqrt{n_{1}}}\log^{1/2}\left(\frac{3\left(T+1\right)n_{\max}^{2}}{\delta}+e\right),\quad\gamma_{1}=\gamma_{3}=\frac{K_{T}}{\sqrt{n_{2}}}\log^{1/2}\left(\frac{3\left(T+1\right)n_{\max}^{2}}{\delta}+e\right).
\]
We are left with proving the claims.

\paragraph*{Step 2 - Proof of Claim 1.}

We have from Assumption \ref{assump:Regularity},
\begin{align*}
{\rm ess\text{-}sup}\left|w_{3}\left(t+\xi,C_{2}\right)-w_{3}\left(t,C_{2}\right)\right| & \leq K\int_{t}^{t+\xi}{\rm ess\text{-}sup}\left|\frac{\partial}{\partial t}w_{3}\left(s,C_{2}\right)\right|ds\\
 & \leq K\xi,\\
{\rm ess\text{-}sup}\left|w_{2}\left(t+\xi,C_{1},C_{2}\right)-w_{2}\left(t,C_{1},C_{2}\right)\right| & \leq K\int_{t}^{t+\xi}{\rm ess\text{-}sup}\left|\frac{\partial}{\partial t}w_{2}\left(s,C_{1},C_{2}\right)\right|ds\\
 & \leq K\int_{t}^{t+\xi}{\rm ess\text{-}sup}\left|w_{3}\left(s,C_{2}\right)\right|ds\\
 & \leq K_{t+\xi}\xi,\\
{\rm ess\text{-}sup}\left|w_{1}\left(t+\xi,C_{1}\right)-w_{1}\left(t,C_{1}\right)\right| & \leq K\int_{t}^{t+\xi}{\rm ess\text{-}sup}\left|\frac{\partial}{\partial t}w_{1}\left(s,C_{1}\right)\right|ds\\
 & \leq K\int_{t}^{t+\xi}{\rm ess\text{-}sup}\left|w_{3}\left(s,C_{2}\right)w_{2}\left(s,C_{1},C_{2}\right)\right|ds\\
 & \leq K_{t+\xi}\xi.
\end{align*}
By Lemma \ref{lem:Lipschitz-MF}, we then obtain that
\begin{align*}
{\rm ess\text{-}sup}\mathbb{E}_{Z}\left[\left|H_{2}\left(X,C_{2};W\left(t+\xi\right)\right)-H_{2}\left(X,C_{2};W\left(t\right)\right)\right|\right] & \leq K_{t+\xi}\xi,\\
\mathbb{E}_{Z}\left[\left|H_{3}\left(X;W\left(t+\xi\right)\right)-H_{3}\left(X;W\left(t\right)\right)\right|\right] & \leq K_{t+\xi}\xi,\\
{\rm ess\text{-}sup}\mathbb{E}_{Z}\left[\left|\Delta_{2}^{H}\left(Z,C_{2};W\left(t+\xi\right)\right)-\Delta_{2}^{H}\left(Z,C_{2};W\left(t\right)\right)\right|\right] & \leq K_{t+\xi}\xi.
\end{align*}
Using these estimates, we thus have, by Assumption \ref{assump:Regularity},
\begin{align*}
 & \max_{j_{2}\leq n_{2}}\left|\frac{\partial}{\partial t}w_{3}\left(t+\xi,C_{2}\left(j_{2}\right)\right)-\frac{\partial}{\partial t}w_{3}\left(t,C_{2}\left(j_{2}\right)\right)\right|\\
 & \quad\leq K_{t+\xi}\xi+K\mathbb{E}_{Z}\left[\left|H_{3}\left(X;W\left(t+\xi\right)\right)-H_{3}\left(X;W\left(t\right)\right)\right|\right]\\
 & \quad\quad+K{\rm ess\text{-}sup}\mathbb{E}_{Z}\left[\left|H_{2}\left(X,C_{2};W\left(t+\xi\right)\right)-H_{2}\left(X,C_{2};W\left(t\right)\right)\right|\right]\\
 & \quad\leq K_{t+\xi}\xi,\\
 & \max_{j_{1}\leq n_{1},\;j_{2}\leq n_{2}}\left|\frac{\partial}{\partial t}w_{2}\left(t+\xi,C_{1}\left(j_{1}\right),C_{2}\left(j_{2}\right)\right)-\frac{\partial}{\partial t}w_{2}\left(t,C_{1}\left(j_{1}\right),C_{2}\left(j_{2}\right)\right)\right|\\
 & \quad\leq K_{t+\xi}\xi+K{\rm ess\text{-}sup}\mathbb{E}_{Z}\left[\left|\Delta_{2}^{H}\left(Z,C_{2};W\left(t+\xi\right)\right)-\Delta_{2}^{H}\left(Z,C_{2};W\left(t\right)\right)\right|\right]\\
 & \quad\quad+K{\rm ess\text{-}sup}\left|w_{3}\left(t,C_{2}\right)\right|\left|w_{1}\left(t+\xi,C_{1}\right)-w_{1}\left(t,C_{1}\right)\right|\\
 & \quad\leq K_{t+\xi}\xi,\\
 & \max_{j_{1}\leq n_{1}}\left|\frac{\partial}{\partial t}w_{1}\left(t+\xi,C_{1}\left(j_{1}\right)\right)-\frac{\partial}{\partial t}w_{1}\left(t,C_{1}\left(j_{1}\right)\right)\right|\\
 & \quad\leq K_{t+\xi}\xi+K{\rm ess\text{-}sup}\mathbb{E}_{Z}\left[\mathbb{E}_{C_{2}}\left[\left|\Delta_{2}^{H}\left(Z,C_{2};W\left(t+\xi\right)\right)-\Delta_{2}^{H}\left(Z,C_{2};W\left(t\right)\right)\right|\left|w_{2}\left(t,C_{1},C_{2}\right)\right|\right]\right]\\
 & \quad\quad+K{\rm ess\text{-}sup}\mathbb{E}_{C_{2}}\left[\left|w_{3}\left(t,C_{2}\right)\right|\left|w_{2}\left(t+\xi,C_{1},C_{2}\right)-w_{2}\left(t,C_{1},C_{2}\right)\right|\right]\\
 & \quad\quad+K{\rm ess\text{-}sup}\mathbb{E}_{C_{2}}\left[\left|w_{3}\left(t,C_{2}\right)w_{2}\left(t,C_{1},C_{2}\right)\right|\right]\left|w_{1}\left(t+\xi,C_{1}\right)-w_{1}\left(t,C_{1}\right)\right|\\
 & \quad\leq K_{t+\xi}\xi.
\end{align*}
The proof of the rest of the claim is similar.

\paragraph*{Step 3 - Proof of Claim 2.}

We recall the definitions of $q_{\Delta}$, $q_{2}$ and $q_{3}$.
Let us decompose them as follows. We start with $q_{2}$:
\begin{align*}
 & \left|q_{2}\left(t,x,j_{2},C_{2}\left(j_{2}\right)\right)\right|\\
 & =\left|\frac{1}{n_{1}}\sum_{j_{1}=1}^{n_{1}}\tilde{w}_{2}\left(t,j_{1},j_{2}\right)\varphi_{1}\left(\left\langle \tilde{w}_{1}\left(t,j_{1}\right),x\right\rangle \right)-\mathbb{E}_{C_{1}}\left[w_{2}\left(t,C_{1},C_{2}\left(j_{2}\right)\right)\varphi_{1}\left(\left\langle w_{1}\left(t,C_{1}\right),x\right\rangle \right)\right]\right|\\
 & \leq\max_{j_{1}\leq n_{1}}\left|\tilde{w}_{2}\left(t,j_{1},j_{2}\right)\varphi_{1}\left(\left\langle \tilde{w}_{1}\left(t,j_{1}\right),x\right\rangle \right)-w_{2}\left(t,C_{1}\left(j_{1}\right),C_{2}\left(j_{2}\right)\right)\varphi_{1}\left(\left\langle w_{1}\left(t,C_{1}\left(j_{1}\right)\right),x\right\rangle \right)\right|\\
 & \quad+\left|\frac{1}{n_{1}}\sum_{j_{1}=1}^{n_{1}}w_{2}\left(t,C_{1}\left(j_{1}\right),C_{2}\left(j_{2}\right)\right)\varphi_{1}\left(\left\langle w_{1}\left(t,C_{1}\left(j_{1}\right)\right),x\right\rangle \right)-\mathbb{E}_{C_{1}}\left[w_{2}\left(t,C_{1},C_{2}\left(j_{2}\right)\right)\varphi_{1}\left(\left\langle w_{1}\left(t,C_{1}\right),x\right\rangle \right)\right]\right|\\
 & \equiv Q_{2,1}\left(x,j_{2}\right)+Q_{2,2}\left(x,j_{2}\right).
\end{align*}
Similarly, we have for $q_{3}$:
\begin{align*}
 & \left|q_{3}\left(t,x\right)\right|\\
 & =\left|\frac{1}{n_{2}}\sum_{j_{2}=1}^{n_{2}}\tilde{w}_{3}\left(t,j_{2}\right)\varphi_{2}\left(\mathbf{H}_{2}\left(x,j_{2};\tilde{W}\left(t\right)\right)\right)-\mathbb{E}_{C_{2}}\left[w_{3}\left(t,C_{2}\right)\varphi_{2}\left(H_{2}\left(x,C_{2};W\left(t\right)\right)\right)\right]\right|\\
 & \leq\max_{j_{2}\leq n_{2}}\left|\tilde{w}_{3}\left(t,j_{2}\right)\varphi_{2}\left(\mathbf{H}_{2}\left(x,j_{2};\tilde{W}\left(t\right)\right)\right)-w_{3}\left(t,C_{2}\left(j_{2}\right)\right)\varphi_{2}\left(H_{2}\left(x,C_{2}\left(j_{2}\right);W\left(t\right)\right)\right)\right|\\
 & \quad+\left|\frac{1}{n_{2}}\sum_{j_{2}=1}^{n_{2}}w_{3}\left(t,C_{2}\left(j_{2}\right)\right)\varphi_{2}\left(H_{2}\left(x,C_{2}\left(j_{2}\right);W\left(t\right)\right)\right)-\mathbb{E}_{C_{2}}\left[w_{3}\left(t,C_{2}\right)\varphi_{2}\left(H_{2}\left(x,C_{2};W\left(t\right)\right)\right)\right]\right|\\
 & \equiv Q_{3,1}\left(x\right)+Q_{3,2}\left(x\right).
\end{align*}
Finally we have for $q_{\Delta}$:
\begin{align*}
 & \left|\frac{1}{n_{2}}\sum_{j_{2}=1}^{n_{2}}\mathbb{E}_{C_{2}}\left[q_{\Delta}\left(t,z,j_{1},j_{2},C_{1}\left(j_{1}\right),C_{2}\right)\right]\right|\\
 & \leq\max_{j_{2}\leq n_{2}}\left|\Delta_{2}^{\mathbf{H}}\left(z,j_{2};\tilde{W}\left(t\right)\right)\tilde{w}_{2}\left(t,j_{1},j_{2}\right)-\Delta_{2}^{H}\left(z,C_{2}\left(j_{2}\right);W\left(t\right)\right)w_{2}\left(t,C_{1}\left(j_{1}\right),C_{2}\left(j_{2}\right)\right)\right|\\
 & \quad+\left|\frac{1}{n_{2}}\sum_{j_{2}=1}^{n_{2}}\Delta_{2}^{H}\left(z,C_{2}\left(j_{2}\right);W\left(t\right)\right)w_{2}\left(t,C_{1}\left(j_{1}\right),C_{2}\left(j_{2}\right)\right)-\mathbb{E}_{C_{2}}\left[\Delta_{2}^{H}\left(z,C_{2};W\left(t\right)\right)w_{2}\left(t,C_{1}\left(j_{1}\right),C_{2}\right)\right]\right|\\
 & \equiv Q_{1,1}\left(z,j_{1}\right)+Q_{1,2}\left(z,j_{1}\right).
\end{align*}
Now let us analyze each of the terms.
\begin{itemize}
\item We start with $Q_{2,1}$. We have from Assumption \ref{assump:Regularity},
\begin{align*}
 & \max_{j_{2}\leq n_{2}}\mathbb{E}_{Z}\left[Q_{2,1}\left(X,j_{2}\right)\right]\\
 & \leq K\max_{j_{1}\leq n_{1},\;j_{2}\leq n_{2}}\left|\tilde{w}_{2}\left(t,j_{1},j_{2}\right)-w_{2}\left(t,C_{1}\left(j_{1}\right),C_{2}\left(j_{2}\right)\right)\right|\\
 & \quad+K\max_{j_{1}\leq n_{1},\;j_{2}\leq n_{2}}\left|w_{2}\left(t,C_{1}\left(j_{1}\right),C_{2}\left(j_{2}\right)\right)\right|\left|\tilde{w}_{1}\left(t,j_{1}\right)-w_{1}\left(t,C_{1}\left(j_{1}\right)\right)\right|\\
 & \leq K_{t}\mathscr{D}_{t}\left(W,\tilde{W}\right).
\end{align*}
\item To bound $Q_{2,2}$, let us write:
\[
Z_{2}\left(x,c_{1},c_{2}\right)=w_{2}\left(t,c_{1},c_{2}\right)\varphi_{1}\left(\left\langle w_{1}\left(t,c_{1}\right),x\right\rangle \right).
\]
Recall that $C_{1}\left(j_{1}\right)$ and $C_{2}\left(j_{2}\right)$
are independent. We thus have: 
\[
\mathbb{E}\left[Z_{2}\left(X,C_{1}\left(j_{1}\right),C_{2}\left(j_{2}\right)\right)\middle|X,C_{2}\left(j_{2}\right)\right]=\mathbb{E}_{C_{1}}\left[Z_{2}\left(X,C_{1},C_{2}\left(j_{2}\right)\right)\right].
\]
Furthermore $\left\{ Z_{2}\left(C_{1}\left(j_{1}\right),C_{2}\left(j_{2}\right)\right)\right\} _{j_{1}\in\left[n_{1}\right]}$
are independent, conditional on $C_{2}\left(j_{2}\right)$. We also
have, almost surely
\[
\left|Z_{2}\left(X,C_{1}\left(j_{1}\right),C_{2}\left(j_{2}\right)\right)\right|\leq K_{t},
\]
by Assumption \ref{assump:Regularity}. Then by Lemma \ref{lem:square hoeffding},
\[
\mathbb{P}\left(\mathbb{E}_{Z}\left[Q_{2,2}\left(X,j_{2}\right)\right]\geq K_{t}\gamma_{2}\right)\leq\left(1/\gamma_{2}\right)\exp\left(-n_{1}\gamma_{2}^{2}/K_{t}\right).
\]
\item To bound $Q_{3,1}$, we have from Assumption \ref{assump:Regularity},
\begin{align*}
\mathbb{E}_{Z}\left[Q_{3,1}\left(X\right)\right] & \leq\max_{j_{2}\leq n_{2}}\left(K\left|\tilde{w}_{3}\left(t,j_{2}\right)-w_{3}\left(t,C_{2}\left(j_{2}\right)\right)\right|+K_{t}\mathbb{E}_{Z}\left[\left|q_{2}\left(t,X,j_{2},C_{2}\left(j_{2}\right)\right)\right|\right]\right)\\
 & \leq K\mathscr{D}_{t}\left(W,\tilde{W}\right)+K_{t}\max_{j_{2}\leq n_{2}}\mathbb{E}_{Z}\left[\left|q_{2}\left(t,X,j_{2},C_{2}\left(j_{2}\right)\right)\right|\right].
\end{align*}
\item To bound $Q_{3,2}$, noticing that almost surely
\[
\left|w_{3}\left(t,C_{2}\left(j_{2}\right)\right)\varphi_{2}\left(H_{2}\left(x,C_{2}\left(j_{2}\right);W\left(t\right)\right)\right)\right|\leq K_{t}
\]
by Assumption \ref{assump:Regularity}, we obtain
\[
\mathbb{P}\left(\mathbb{E}_{Z}\left[Q_{3,2}\left(X\right)\right]\geq K_{t}\gamma_{3}\right)\leq\left(1/\gamma_{3}\right)\exp\left(-n_{2}\gamma_{3}^{2}/K_{t}\right),
\]
similar to the treatment of $Q_{2,2}$.
\item To bound $Q_{1,1}$, using Assumption \ref{assump:Regularity},
\begin{align*}
\mathbb{E}_{Z}\left[Q_{1,1}\left(Z,j_{1}\right)\right] & \leq K\max_{j_{2}\leq n_{2}}\left|w_{2}\left(t,C_{1}\left(j_{1}\right),C_{2}\left(j_{2}\right)\right)\right|\mathbb{E}_{Z}\left[\left|\Delta_{2}^{\mathbf{H}}\left(Z,j_{2};\tilde{W}\left(t\right)\right)-\Delta_{2}^{H}\left(Z,C_{2}\left(j_{2}\right);W\left(t\right)\right)\right|\right]\\
 & \quad+K\max_{j_{2}\leq n_{2}}\left|\tilde{w}_{2}\left(t,j_{1},j_{2}\right)-w_{2}\left(t,C_{1}\left(j_{1}\right),C_{2}\left(j_{2}\right)\right)\right|\mathbb{E}_{Z}\left[\left|\Delta_{2}^{\mathbf{H}}\left(Z,j_{2};\tilde{W}\left(t\right)\right)\right|\right]\\
 & \leq K\max_{j_{2}\leq n_{2}}\left|w_{2}\left(t,C_{1}\left(j_{1}\right),C_{2}\left(j_{2}\right)\right)\right|\Big(\left|\tilde{w}_{3}\left(t,j_{2}\right)-w_{3}\left(t,C_{2}\left(j_{2}\right)\right)\right|\\
 & \quad\qquad\qquad+\left|w_{3}\left(t,C_{2}\left(j_{2}\right)\right)\right|\mathbb{E}_{Z}\left[\left|q_{3}\left(t,X\right)\right|+\left|q_{2}\left(t,X,j_{2},C_{2}\left(j_{2}\right)\right)\right|\right]\Big)\\
 & \quad+K\max_{j_{2}\leq n_{2}}\left|\tilde{w}_{2}\left(t,j_{1},j_{2}\right)-w_{2}\left(t,C_{1}\left(j_{1}\right),C_{2}\left(j_{2}\right)\right)\right|\left|\tilde{w}_{3}\left(t,j_{2}\right)\right|\\
 & \leq K_{t}\left(\mathscr{D}_{t}\left(W,\tilde{W}\right)+\mathbb{E}_{Z}\left[\left|q_{3}\left(t,X\right)\right|+\max_{j_{2}\leq n_{2}}\left|q_{2}\left(t,X,j_{2},C_{2}\left(j_{2}\right)\right)\right|\right]\right).
\end{align*}
\item To bound $Q_{1,2}$, we note that almost surely
\begin{align*}
\left|\Delta_{2}^{H}\left(Z,C_{2}\left(j_{2}\right);W\left(t\right)\right)w_{2}\left(t,C_{1}\left(j_{1}\right),C_{2}\left(j_{2}\right)\right)\right| & \leq K\left|w_{3}\left(t,C_{2}\left(j_{2}\right)\right)\right|\left|w_{2}\left(t,C_{1}\left(j_{1}\right),C_{2}\left(j_{2}\right)\right)\right|\\
 & \leq K_{t}.
\end{align*}
Then similar to the bounding of $Q_{2,2}$, we get:
\[
\mathbb{P}\left(\mathbb{E}_{Z}\left[Q_{1,2}\left(Z,j_{1}\right)\right]\geq K_{t}\gamma_{1}\right)\leq\left(1/\gamma_{1}\right)\exp\left(-n_{2}\gamma_{1}^{2}/K_{t}\right).
\]
\end{itemize}
Finally, combining all of these bounds together, applying suitably
the union bound over $j_{1}\in\left[n_{1}\right]$ and $j_{2}\in\left[n_{2}\right]$,
we obtain the claim.
\end{proof}
\begin{proof}[Proof of Theorem \ref{thm:gradient descent}]
We consider $t\leq T$, for a given terminal time $T\in\epsilon\mathbb{N}_{\geq0}$.
We again reuse the notation $K_{t}$ from the proof of Theorem \ref{thm:gradient descent coupling}.
Note that $K_{t}\leq K_{T}$ for all $t\leq T$. We also note that
at initialization, $\mathscr{D}_{0}\left(\mathbf{W},\tilde{W}\right)=0$.
We also recall from the proof of Theorem \ref{thm:gradient descent coupling}
that $\interleave\tilde{W}\interleave_{T}\leq K_{T}$ almost surely.

For brevity, let us define several quantities that relate to the difference
in the gradient updates between $\mathbf{W}$ and $\tilde{W}$:
\begin{align*}
q_{3}\left(k,z,\tilde{z},j_{2}\right) & =\partial_{2}{\cal L}\left(y,\hat{\mathbf{y}}\left(x;\mathbf{W}\left(k\right)\right)\right)\varphi_{3}'\left(\mathbf{H}_{3}\left(x;\mathbf{W}\left(k\right)\right)\right)\varphi_{2}\left(\mathbf{H}_{2}\left(x,j_{2};\mathbf{W}\left(k\right)\right)\right)\\
 & \qquad-\partial_{2}{\cal L}\left(\tilde{y},\hat{\mathbf{y}}\left(\tilde{x};\tilde{W}\left(k\epsilon\right)\right)\right)\varphi_{3}'\left(\mathbf{H}_{3}\left(\tilde{x};\tilde{W}\left(k\epsilon\right)\right)\right)\varphi_{2}\left(\mathbf{H}_{2}\left(\tilde{x},j_{2};\tilde{W}\left(k\epsilon\right)\right)\right),\\
r_{3}\left(k,z,j_{2}\right) & =\xi_{3}\left(k\epsilon\right)\partial_{2}{\cal L}\left(y,\hat{\mathbf{y}}\left(x;\tilde{W}\left(k\epsilon\right)\right)\right)\varphi_{3}'\left(\mathbf{H}_{3}\left(x;\tilde{W}\left(k\epsilon\right)\right)\right)\varphi_{2}\left(\mathbf{H}_{2}\left(x,j_{2};\tilde{W}\left(k\epsilon\right)\right)\right)\\
 & \qquad-\xi_{3}\left(k\epsilon\right)\mathbb{E}_{Z}\left[\partial_{2}{\cal L}\left(Y,\hat{\mathbf{y}}\left(X;\tilde{W}\left(k\epsilon\right)\right)\right)\varphi_{3}'\left(\mathbf{H}_{3}\left(X;\tilde{W}\left(k\epsilon\right)\right)\right)\varphi_{2}\left(\mathbf{H}_{2}\left(X,j_{2};\tilde{W}\left(k\epsilon\right)\right)\right)\right],\\
q_{2}\left(k,z,\tilde{z},j_{1},j_{2}\right) & =\Delta_{2}^{\mathbf{H}}\left(z,j_{2};\mathbf{W}\left(k\right)\right)\varphi_{1}\left(\left\langle {\bf w}_{1}\left(k,j_{1}\right),x\right\rangle \right)\\
 & \qquad-\Delta_{2}^{\mathbf{H}}\left(\tilde{z},j_{2};\tilde{W}\left(k\epsilon\right)\right)\varphi_{1}\left(\left\langle \tilde{w}_{1}\left(k\epsilon,j_{1}\right),\tilde{x}\right\rangle \right),\\
r_{2}\left(k,z,j_{1},j_{2}\right) & =\xi_{2}\left(k\epsilon\right)\Delta_{2}^{\mathbf{H}}\left(z,j_{2};\tilde{W}\left(k\epsilon\right)\right)\varphi_{1}\left(\left\langle \tilde{w}_{1}\left(k\epsilon,j_{1}\right),x\right\rangle \right)\\
 & \qquad-\xi_{2}\left(k\epsilon\right)\mathbb{E}_{Z}\left[\Delta_{2}^{\mathbf{H}}\left(Z,j_{2};\tilde{W}\left(k\epsilon\right)\right)\varphi_{1}\left(\left\langle \tilde{w}_{1}\left(k\epsilon,j_{1}\right),X\right\rangle \right)\right],\\
q_{1}\left(k,z,\tilde{z},j_{1}\right) & =\frac{1}{n_{2}}\sum_{j_{2}=1}^{n_{2}}\Delta_{2}^{\mathbf{H}}\left(z,j_{2};\mathbf{W}\left(k\right)\right)\mathbf{w}_{2}\left(k,j_{1},j_{2}\right)\varphi_{1}'\left(\left\langle \mathbf{w}_{1}\left(k,j_{1}\right),x\right\rangle \right)x\\
 & \qquad-\frac{1}{n_{2}}\sum_{j_{2}=1}^{n_{2}}\Delta_{2}^{\mathbf{H}}\left(\tilde{z},j_{2};\tilde{W}\left(k\epsilon\right)\right)\tilde{w}_{2}\left(k\epsilon,j_{1},j_{2}\right)\varphi_{1}'\left(\left\langle \tilde{w}_{1}\left(k\epsilon,j_{1}\right),\tilde{x}\right\rangle \right)\tilde{x},\\
r_{1}\left(k,z,j_{1}\right) & =\xi_{1}\left(k\epsilon\right)\frac{1}{n_{2}}\sum_{j_{2}=1}^{n_{2}}\Delta_{2}^{\mathbf{H}}\left(z,j_{2};\tilde{W}\left(k\epsilon\right)\right)\tilde{w}_{2}\left(k\epsilon,j_{1},j_{2}\right)\varphi_{1}'\left(\left\langle \tilde{w}_{1}\left(k\epsilon,j_{1}\right),x\right\rangle \right)x\\
 & \qquad-\xi_{1}\left(k\epsilon\right)\mathbb{E}_{Z}\left[\frac{1}{n_{2}}\sum_{j_{2}=1}^{n_{2}}\Delta_{2}^{\mathbf{H}}\left(Z,j_{2};\tilde{W}\left(k\epsilon\right)\right)\tilde{w}_{2}\left(k\epsilon,j_{1},j_{2}\right)\varphi_{1}'\left(\left\langle \tilde{w}_{1}\left(k\epsilon,j_{1}\right),X\right\rangle \right)X\right].
\end{align*}
Let us also define:
\begin{align*}
q_{3}^{H}\left(k,x\right) & =\mathbf{H}_{3}\left(x;\mathbf{W}\left(k\right)\right)-\mathbf{H}_{3}\left(x;\tilde{W}\left(k\epsilon\right)\right),\\
q_{2}^{H}\left(k,x,j_{2}\right) & =\mathbf{H}_{2}\left(x,j_{2};\mathbf{W}\left(k\right)\right)-\mathbf{H}_{2}\left(x,j_{2};\tilde{W}\left(k\epsilon\right)\right).
\end{align*}
We proceed in several steps.

\paragraph*{Step 1: Decomposition.}

As shown in the proof of Theorem \ref{thm:gradient descent coupling}:
\begin{align*}
\max_{j_{2}\leq n_{2}}\left|\frac{\partial}{\partial t}\tilde{w}_{3}\left(t+\xi,j_{2}\right)-\frac{\partial}{\partial t}\tilde{w}_{3}\left(t,j_{2}\right)\right| & \leq K_{t+\xi}\xi,\\
\max_{j_{1}\leq n_{1},\;j_{2}\leq n_{2}}\left|\frac{\partial}{\partial t}\tilde{w}_{2}\left(t+\xi,j_{1},j_{2}\right)-\frac{\partial}{\partial t}\tilde{w}_{2}\left(t,j_{1},j_{2}\right)\right| & \leq K_{t+\xi}\xi,\\
\max_{j_{1}\leq n_{1}}\left|\frac{\partial}{\partial t}\tilde{w}_{1}\left(t+\xi,j_{1}\right)-\frac{\partial}{\partial t}\tilde{w}_{1}\left(t,j_{1}\right)\right| & \leq K_{t+\xi}\xi.
\end{align*}
for any $t\geq0$ and $\xi\geq0$. These time-interpolation estimates,
along with Assumption \ref{assump:Regularity}, allow to derive the
following. We first have:
\begin{align*}
 & \max_{j_{2}\leq n_{2}}\left|{\bf w}_{3}\left(\left\lfloor t/\epsilon\right\rfloor ,j_{2}\right)-\tilde{w}_{3}\left(t,j_{2}\right)\right|\\
 & \leq K\max_{j_{2}\leq n_{2}}\left|\epsilon\sum_{k=0}^{\left\lfloor t/\epsilon\right\rfloor -1}\xi_{3}\left(k\epsilon\right)\mathbb{E}_{Z}\left[q_{3}\left(k,z\left(k\right),Z,j_{2}\right)\right]\right|+tK_{t}\epsilon\\
 & \leq K\max_{j_{2}\leq n_{2}}\left[Q_{3,1}\left(\left\lfloor t/\epsilon\right\rfloor ,j_{2}\right)+Q_{3,2}\left(\left\lfloor t/\epsilon\right\rfloor ,j_{2}\right)\right]+tK_{t}\epsilon,
\end{align*}
where we define
\begin{align*}
Q_{3,1}\left(\left\lfloor t/\epsilon\right\rfloor ,j_{2}\right) & =\epsilon\sum_{k=0}^{\left\lfloor t/\epsilon\right\rfloor -1}\left|q_{3}\left(k,z\left(k\right),z\left(k\right),j_{2}\right)\right|,\\
Q_{3,2}\left(\left\lfloor t/\epsilon\right\rfloor ,j_{2}\right) & =\left|\epsilon\sum_{k=0}^{\left\lfloor t/\epsilon\right\rfloor -1}r_{3}\left(k,z\left(k\right),j_{2}\right)\right|.
\end{align*}
(Here $\sum_{k=0}^{\left\lfloor t/\epsilon\right\rfloor -1}=0$ if
$\left\lfloor t/\epsilon\right\rfloor =0$.) We have similarly:
\begin{align*}
\max_{j_{1}\leq n_{1}}\left|{\bf w}_{1}\left(\left\lfloor t/\epsilon\right\rfloor ,j_{1}\right)-\tilde{w}_{1}\left(t,j_{1}\right)\right| & \leq K\max_{j_{1}\leq n_{1}}\left[Q_{1,1}\left(\left\lfloor t/\epsilon\right\rfloor ,j_{1}\right)+Q_{1,2}\left(\left\lfloor t/\epsilon\right\rfloor ,j_{1}\right)\right]+tK_{t}\epsilon,\\
\max_{j_{1}\leq n_{1},\;j_{2}\leq n_{2}}\left|{\bf w}_{2}\left(\left\lfloor t/\epsilon\right\rfloor ,j_{1},j_{2}\right)-\tilde{w}_{2}\left(t,j_{1},j_{2}\right)\right| & \leq K\max_{j_{1}\leq n_{1},\;j_{2}\leq n_{2}}\left[Q_{2,1}\left(\left\lfloor t/\epsilon\right\rfloor ,j_{1},j_{2}\right)+Q_{2,2}\left(\left\lfloor t/\epsilon\right\rfloor ,j_{1},j_{2}\right)\right]+tK_{t}\epsilon,
\end{align*}
in which
\begin{align*}
Q_{1,1}\left(\left\lfloor t/\epsilon\right\rfloor ,j_{1}\right) & =\epsilon\sum_{k=0}^{\left\lfloor t/\epsilon\right\rfloor -1}\left|q_{1}\left(k,z\left(k\right),z\left(k\right),j_{1}\right)\right|,\\
Q_{1,2}\left(\left\lfloor t/\epsilon\right\rfloor ,j_{1}\right) & =\left|\epsilon\sum_{k=0}^{\left\lfloor t/\epsilon\right\rfloor -1}r_{1}\left(k,z\left(k\right),j_{1}\right)\right|,\\
Q_{2,1}\left(\left\lfloor t/\epsilon\right\rfloor ,j_{1},j_{2}\right) & =\epsilon\sum_{k=0}^{\left\lfloor t/\epsilon\right\rfloor -1}\left|q_{2}\left(k,z\left(k\right),z\left(k\right),j_{1},j_{2}\right)\right|,\\
Q_{2,2}\left(\left\lfloor t/\epsilon\right\rfloor ,j_{1},j_{2}\right) & =\left|\epsilon\sum_{k=0}^{\left\lfloor t/\epsilon\right\rfloor -1}r_{2}\left(k,z\left(k\right),j_{1},j_{2}\right)\right|.
\end{align*}
The task is now to bound $Q_{1,1}$, $Q_{1,2}$, $Q_{2,1}$, $Q_{2,2}$,
$Q_{3,1}$ and $Q_{3,2}$.

\paragraph*{Step 2: Bounding the terms.}

Before we proceed, let us give some bounds for $q_{3}^{H}$ and $q_{2}^{H}$,
which hold for any $x\in\mathbb{R}^{d}$:
\begin{align*}
\left|q_{2}^{H}\left(k,x,j_{2}\right)\right| & \leq K\max_{j_{1}\leq n_{1}}\left(\left|\tilde{w}_{2}\left(k\epsilon,j_{1},j_{2}\right)\right|\left|{\bf w}_{1}\left(k,j_{1}\right)-\tilde{w}_{1}\left(k\epsilon,j_{1}\right)\right|+\left|{\bf w}_{2}\left(k,j_{1},j_{2}\right)-\tilde{w}_{2}\left(k\epsilon,j_{1},j_{2}\right)\right|\right)\\
 & \leq K_{k\epsilon}\mathscr{D}_{k\epsilon}\left(\tilde{W},\mathbf{W}\right),\\
\left|q_{3}^{H}\left(k,x\right)\right| & \leq K\max_{j_{2}\leq n_{2}}\left(\left|{\bf w}_{3}\left(k,j_{2}\right)-\tilde{w}_{3}\left(k\epsilon,j_{2}\right)\right|+\left|\tilde{w}_{3}\left(k\epsilon,j_{2}\right)\right|\left|q_{2}^{H}\left(k,x,j_{2}\right)\right|\right)\\
 & \leq K_{k\epsilon}\mathscr{D}_{k\epsilon}\left(\tilde{W},\mathbf{W}\right).
\end{align*}
With these, we have the following:
\begin{itemize}
\item Let us bound $Q_{3,1}$. By Assumption \ref{assump:Regularity},
\begin{align*}
\left|q_{3}\left(k,z\left(k\right),z\left(k\right),j_{2}\right)\right| & \leq K\left(q_{2}^{H}\left(k,x\left(k\right),j_{2}\right)+\left|q_{3}^{H}\left(k,x\left(k\right)\right)\right|\right).
\end{align*}
We then get:
\[
\max_{j_{2}\leq n_{2}}Q_{3,1}\left(\left\lfloor t/\epsilon\right\rfloor ,j_{2}\right)\leq K_{t}\epsilon\sum_{k=0}^{\left\lfloor t/\epsilon\right\rfloor -1}\mathscr{D}_{k\epsilon}\left(\tilde{W},\mathbf{W}\right).
\]
\item Similarly to $Q_{3,1}$, we consider $Q_{2,1}$:
\begin{align*}
\left|q_{2}\left(k,z\left(k\right),z\left(k\right),j_{1},j_{2}\right)\right| & \leq K\left|\Delta_{2}^{\mathbf{H}}\left(z\left(k\right),j_{2};\mathbf{W}\left(k\right)\right)-\Delta_{2}^{\mathbf{H}}\left(z\left(k\right),j_{2};\tilde{W}\left(k\epsilon\right)\right)\right|\\
 & \quad+K\left|\Delta_{2}^{\mathbf{H}}\left(z\left(k\right),j_{2};\tilde{W}\left(k\epsilon\right)\right)\right|\left|{\bf w}_{1}\left(k,j_{1}\right)-\tilde{w}_{1}\left(k\epsilon,j_{1}\right)\right|\\
 & \leq K_{k\epsilon}\left(\left|q_{2}^{H}\left(k,x\left(k\right),j_{2}\right)\right|+\left|q_{3}^{H}\left(k,x\left(k\right)\right)\right|\right)\\
 & \quad+K\left|{\bf w}_{3}\left(k,j_{2}\right)-\tilde{w}_{3}\left(k\epsilon,j_{2}\right)\right|+K_{k\epsilon}\left|{\bf w}_{1}\left(k,j_{1}\right)-\tilde{w}_{1}\left(k\epsilon,j_{1}\right)\right|,
\end{align*}
which yields
\[
\max_{j_{1}\leq n_{1},\;j_{2}\leq n_{2}}Q_{2,1}\left(\left\lfloor t/\epsilon\right\rfloor ,j_{1},j_{2}\right)\leq K_{t}\epsilon\sum_{k=0}^{\left\lfloor t/\epsilon\right\rfloor -1}\mathscr{D}_{k\epsilon}\left(\tilde{W},\mathbf{W}\right).
\]
\item Again we get a similar bound for $Q_{1,1}$:
\begin{align*}
\left|q_{1}\left(k,z\left(k\right),z\left(k\right),j_{1}\right)\right| & \leq\frac{K}{n_{2}}\sum_{j_{2}=1}^{n_{2}}\left|\tilde{w}_{2}\left(k\epsilon,j_{1},j_{2}\right)\right|\left|\Delta_{2}^{\mathbf{H}}\left(z\left(k\right),j_{2};\mathbf{W}\left(k\right)\right)-\Delta_{2}^{\mathbf{H}}\left(z\left(k\right),j_{2};\tilde{W}\left(k\epsilon\right)\right)\right|\\
 & \quad+\frac{K}{n_{2}}\sum_{j_{2}=1}^{n_{2}}\left|\Delta_{2}^{\mathbf{H}}\left(z\left(k\right),j_{2};\tilde{W}\left(k\epsilon\right)\right)\right|\left|\mathbf{w}_{2}\left(k,j_{1},j_{2}\right)-\tilde{w}_{2}\left(k\epsilon,j_{1},j_{2}\right)\right|\\
 & \quad+\frac{K}{n_{2}}\sum_{j_{2}=1}^{n_{2}}\left|\tilde{w}_{2}\left(k\epsilon,j_{1},j_{2}\right)\right|\left|\Delta_{2}^{\mathbf{H}}\left(z\left(k\right),j_{2};\tilde{W}\left(k\epsilon\right)\right)\right|\left|\mathbf{w}_{1}\left(k,j_{1}\right)-\tilde{w}_{1}\left(k\epsilon,j_{1}\right)\right|\\
 & \leq K_{k\epsilon}\left(\max_{j_{2}\leq n_{2}}\left|q_{2}^{H}\left(k,x\left(k\right),j_{2}\right)\right|+\left|q_{3}^{H}\left(k,x\left(k\right)\right)\right|\right)+K_{k\epsilon}\mathscr{D}_{k\epsilon}\left(\tilde{W},\mathbf{W}\right),
\end{align*}
which yields
\[
\max_{j_{1}\leq n_{1}}Q_{1,1}\left(\left\lfloor t/\epsilon\right\rfloor ,j_{1}\right)\leq K_{t}\epsilon\sum_{k=0}^{\left\lfloor t/\epsilon\right\rfloor -1}\mathscr{D}_{k\epsilon}\left(\tilde{W},\mathbf{W}\right).
\]
\item Let us bound $Q_{3,2}$. Let us define:
\[
\underline{r}_{3}\left(k,j_{2}\right)=\sum_{\ell=0}^{k-1}r_{3}\left(k,z\left(k\right),j_{2}\right),\qquad\underline{r}_{3}\left(0,j_{2}\right)=0.
\]
Let ${\cal F}_{k}$ be the sigma-algebra generated by $\left\{ z\left(\ell\right):\;\ell\in\left\{ 0,...,k-1\right\} \right\} $.
Note that $\left\{ \underline{r}_{3}\left(k,j_{2}\right)\right\} _{k\in\mathbb{N}}$
is a martingale adapted to $\left\{ {\cal F}_{k}\right\} _{k\in\mathbb{N}}$.
Furthermore, for $k\leq T/\epsilon$, the martingale difference is
bounded: $\left|r_{3}\left(k,z\left(k\right),j_{2}\right)\right|\leq K$
by Assumption \ref{assump:Regularity}. Therefore, by Theorem \ref{thm:azuma-hilbert}
and the union bound, we have:
\[
\mathbb{P}\left(\max_{j_{2}\leq n_{2}}\max_{\ell\in\left\{ 0,1,...,T/\epsilon\right\} }Q_{3,2}\left(\ell,j_{2}\right)\geq\xi\right)\leq2n_{2}\exp\left(-\frac{\xi^{2}}{K\left(T+1\right)\epsilon}\right).
\]
\item The bounding of $Q_{2,2}$ is similar: $\left|r_{2}\left(k,z\left(k\right),j_{1},j_{2}\right)\right|\leq K_{k\epsilon}$
almost surely by Assumption \ref{assump:Regularity}, and thus
\[
\mathbb{P}\left(\max_{j_{1}\leq n_{1},\;j_{2}\leq n_{2}}\max_{\ell\in\left\{ 0,1,...,T/\epsilon\right\} }Q_{2,2}\left(\ell,j_{1},j_{2}\right)\geq\xi\right)\leq2n_{1}n_{2}\exp\left(-\frac{\xi^{2}}{K_{T}\left(T+1\right)\epsilon}\right).
\]
\item Again the bounding of $Q_{1,2}$ is also similar: $\left|r_{1}\left(k,z\left(k\right),j_{1}\right)\right|\leq K_{k\epsilon}$
almost surely by Assumption \ref{assump:Regularity}, and thus
\[
\mathbb{P}\left(\max_{j_{1}\leq n_{1}}\max_{\ell\in\left\{ 0,1,...,T/\epsilon\right\} }Q_{1,2}\left(\ell,j_{1}\right)\geq\xi\right)\leq2n_{1}\exp\left(-\frac{\xi^{2}}{K_{T}\left(T+1\right)\epsilon}\right).
\]
\end{itemize}

\paragraph*{Step 3: Putting everything together.}

All the above results give us
\[
\mathscr{D}_{\left\lfloor t/\epsilon\right\rfloor \epsilon}\left(\tilde{W},\mathbf{W}\right)\leq K_{T}\epsilon\sum_{k=0}^{\left\lfloor t/\epsilon\right\rfloor -1}\mathscr{D}_{k\epsilon}\left(\tilde{W},\mathbf{W}\right)+\xi+TK_{T}\epsilon\qquad\forall t\leq T,
\]
which hold with probability at least
\[
1-2n_{1}n_{2}\exp\left(-\frac{\xi^{2}}{K_{T}\left(T+1\right)\epsilon}\right).
\]
The above event implies, by Gronwall's lemma,
\[
\mathscr{D}_{T}\left(\tilde{W},\mathbf{W}\right)\leq\left(\xi+\epsilon\right)e^{K_{T}}.
\]
Choosing $\xi=K_{T}\sqrt{\left(T+1\right)\epsilon\log\left(2n_{1}n_{2}/\delta\right)}$
completes the proof.
\end{proof}

\subsection{Proofs of Corollaries \ref{cor:gradient descent quality} and \ref{cor:MF_insensitivity}}
\begin{proof}[Proof of Corollary \ref{cor:gradient descent quality}]
By the assumption on $\psi$ and Assumption \ref{assump:Regularity},
we have:
\begin{align*}
 & \left|\mathbb{E}_{Z}\left[\psi\left(Y,\hat{\mathbf{y}}\left(X;\mathbf{W}\left(\left\lfloor t/\epsilon\right\rfloor \right)\right)\right)-\psi\left(Y,\hat{y}\left(X;W\left(t\right)\right)\right)\right]\right|\\
 & \leq K\mathbb{E}_{Z}\left[\left|\mathbf{H}_{3}\left(X;\mathbf{W}\left(\left\lfloor t/\epsilon\right\rfloor \right)\right)-H_{3}\left(X;W\left(t\right)\right)\right|\right]\\
 & \leq K\mathbb{E}_{Z}\left[\left|H_{3}\left(X;W\left(t\right)\right)-H_{3}\left(X;W\left(\left\lfloor t/\epsilon\right\rfloor \epsilon\right)\right)\right|\right]+K\mathbb{E}_{Z}\left[\left|\mathbf{H}_{3}\left(X;\mathbf{W}\left(\left\lfloor t/\epsilon\right\rfloor \right)\right)-H_{3}\left(X;W\left(\left\lfloor t/\epsilon\right\rfloor \epsilon\right)\right)\right|\right].
\end{align*}
An inspection of the proof of Theorem \ref{thm:gradient descent coupling}
(in particular, the proofs of Theorems \ref{thm:particle coupling}
and \ref{thm:gradient descent}) reveals that firstly,
\[
\sup_{t\leq T}\mathbb{E}_{Z}\left[\left|H_{3}\left(X;W\left(t\right)\right)-H_{3}\left(X;W\left(\left\lfloor t/\epsilon\right\rfloor \epsilon\right)\right)\right|\right]\leq K_{T}\epsilon,
\]
and secondly, 
\begin{align*}
 & \sup_{t\leq T}\mathbb{E}_{Z}\left[\left|H_{3}\left(X;W\left(\left\lfloor t/\epsilon\right\rfloor \epsilon\right)\right)-\mathbf{H}_{3}\left(X;\mathbf{W}\left(\left\lfloor t/\epsilon\right\rfloor \right)\right)\right|\right]\\
 & \leq K_{T}\mathscr{D}_{T}\left(W,\mathbf{W}\right)+\frac{1}{\sqrt{n_{\min}}}\log^{1/2}\left(\frac{3Tn_{\max}^{2}}{\delta}+e\right)e^{K_{T}}
\end{align*}
with probability at least $1-\delta$. Together with Theorem \ref{thm:gradient descent coupling},
we obtain the claim.
\end{proof}
\begin{proof}[Proof of Corollary \ref{cor:MF_insensitivity}]
Observe that for each index $\left\{ N_{1},N_{2}\right\} $ of $\mathsf{Init}$,
one obtains a neural network initialization ${\bf W}(0)$ with law
$\rho$ by setting 
\[
{\bf w}_{1}(0,j_{1})=w_{1}(0,C_{1}(j_{1})),\;{\bf w}_{2}(0,j_{1},j_{2})=w_{2}\left(0,C_{1}\left(j_{1}\right),C_{2}\left(j_{2}\right)\right),
\]
\[
{\bf w}_{3}(0,j_{2})=w_{3}\left(0,C_{2}\left(j_{2}\right)\right),\;j_{1}\in\left[N_{1}\right],\;j_{2}\in\left[N_{2}\right].
\]
We consider the evolution ${\bf W}\left(k\right)$ starting from ${\bf W}\left(0\right)$,
which is independent of $W$. Note that $\mathbf{W}\left(k\right)$
is a deterministic function of its initialization $\mathbf{W}\left(0\right)$
and the data $\left\{ z\left(j\right)\right\} _{j\leq k}$. Similarly,
we consider the counterpart for $\hat{W}$: the evolution $\hat{\mathbf{W}}\left(k\right)$
as a function of the initialization $\hat{\mathbf{W}}\left(0\right)$
and the data $\left\{ \hat{z}\left(j\right)\right\} _{j\leq k}$.
Due to sharing the same distribution for both the initialization and
the data, these evolutions have the same law; to be specific, $\underline{{\bf W}}\left(n_{1},n_{2},T\right)$
and $\hat{\underline{{\bf W}}}\left(n_{1},n_{2},T\right)$ has the
same distribution for any $n_{1}$, $n_{2}$ and $T$, where we define
\begin{align*}
\underline{{\bf W}}\left(n_{1},n_{2},T\right) & =\big\{{\bf w}_{1}\left(k,j_{1}\right),\;{\bf w}_{2}\left(k,j_{1},j_{2}\right),\;{\bf w}_{3}\left(k,j_{2}\right):\\
 & \qquad j_{1}\in\left[n_{1}\right],\;j_{2}\in\left[n_{2}\right],\;k\leq\left\lfloor T/\epsilon\right\rfloor \big\},
\end{align*}
and a similar definition for $\hat{\underline{{\bf W}}}\left(n_{1},n_{2},T\right)$.
In other words,
\begin{align*}
\mathscr{W}\left(\underline{{\bf W}},\hat{\underline{\mathbf{W}}}\right) & \equiv\inf_{{\rm coupling\,of\,\left(\underline{{\bf W}},\hat{\underline{\mathbf{W}}}\right)}}\mathbb{E}\bigg[\max_{k\leq\left\lfloor T/\epsilon\right\rfloor ,\;j_{1}\leq n_{1},\;j_{2}\leq n_{2}}\Big\{\left|{\bf w}_{1}\left(k,j_{1}\right)-\hat{{\bf w}}_{1}\left(k,j_{1}\right)\right|,\\
 & \qquad\left|{\bf w}_{2}\left(k,j_{1},j_{2}\right)-\hat{{\bf w}}_{2}\left(k,j_{1},j_{2}\right)\right|,\;\left|{\bf w}_{3}\left(k,j_{2}\right)-\hat{{\bf w}}_{3}\left(k,j_{2}\right)\right|\Big\}\bigg]=0.
\end{align*}
Theorem \ref{thm:gradient descent coupling} implies that for any
tuple $\left\{ n_{1},n_{2}\right\} $ such that $n_{1}\leq N_{1}$
and $n_{2}\leq N_{2}$, with probability at least $1-2\delta$,
\begin{align*}
\mathscr{D}_{T}^{\left(n_{1},n_{2}\right)}\left(W,\mathbf{W}\right)\equiv\max\bigg( & \sup_{t\le T,\;j_{1}\leq n_{1},\;j_{2}\leq n_{2}}\left|{\bf w}_{2}\left(t,j_{1},j_{2}\right)-w_{2}\left(t,C_{1}\left(j_{1}\right),C_{2}\left(j_{2}\right)\right)\right|,\\
 & \sup_{t\le T,\;j_{2}\leq n_{2}}\left|{\bf w}_{3}\left(t,j_{2}\right)-w_{3}\left(t,C_{2}\left(j_{2}\right)\right)\right|,\\
 & \sup_{t\le T,\;j_{1}\le n_{1}}\left|{\bf w}_{1}\left(t,j_{1}\right)-w_{1}\left(t,C_{1}\left(j_{1}\right)\right)\right|\bigg)\leq\tilde{O}_{\delta,T}\left(\epsilon,N_{1},N_{2}\right),
\end{align*}
where $\tilde{O}_{\delta,T}\left(\epsilon,N_{1},N_{2}\right)\to0$
as $\epsilon\to0$ and $N_{\min}^{-1}\log N_{\max}\to0$ with $N_{\min}=\min\left\{ N_{1},N_{2}\right\} $
and $N_{\max}=\max\left\{ N_{1},N_{2}\right\} $. We also have a similar
result for $\hat{\mathbf{W}}$ and $\hat{W}$. As such, with probability
at least $1-4\delta$,
\begin{align*}
\mathscr{W}\left({\cal W},\hat{{\cal W}}\right) & \equiv\inf_{{\rm coupling\,of\,}\left({\cal W},\hat{{\cal W}}\right)}\mathbb{E}\bigg[\sup_{t\leq T,\;j_{1}\leq n_{1},\;j_{2}\leq n_{2}}\Big\{\left|w_{1}\left(t,C_{1}\left(j_{1}\right)\right)-\hat{w}_{1}\left(t,\hat{C}_{1}\left(j_{1}\right)\right)\right|,\\
 & \qquad\left|w_{2}\left(t,C_{1}\left(j_{1}\right),C_{2}\left(j_{2}\right)\right)-\hat{w}_{2}\left(t,\hat{C}_{1}\left(j_{1}\right),\hat{C}_{2}\left(j_{2}\right)\right)\right|,\;\left|w_{3}\left(t,C_{2}\left(j_{2}\right)\right)-\hat{w}_{3}\left(t,\hat{C}_{2}\left(j_{2}\right)\right)\right|\Big\}\bigg]\\
 & \leq\mathscr{D}_{T}^{\left(n_{1},n_{2}\right)}\left(W,\mathbf{W}\right)+\mathscr{D}_{T}^{\left(n_{1},n_{2}\right)}\left(\hat{W},\hat{\mathbf{W}}\right)+\mathscr{W}\left(\underline{{\bf W}},\hat{\underline{\mathbf{W}}}\right)\\
 & \leq2\tilde{O}_{\delta,T}\left(\epsilon,N_{1},N_{2}\right).
\end{align*}
By fixing the tuple $\left\{ n_{1},n_{2}\right\} $ while letting
$\epsilon\to0$, $N_{\min}^{-1}\log N_{\max}\to0$ and $\delta\to0$,
we obtain the claim.
\end{proof}

\section{Global convergence: proofs for Section \ref{sec:global_conv}\label{sec:Global-convergence-proof}}

\subsection{Proof of Proposition \ref{prop:iid_law_det_representable}}
\begin{proof}[Proof of Proposition \ref{prop:iid_law_det_representable}]
Consider a probability space $\left(\Lambda,{\cal G},P_{0}\right)$
with random processes $\mathbb{R}^{d}$-valued $p_{1}\left(\theta_{1}\right)$,
$\mathbb{R}$-valued $p_{2}\left(\theta_{1},\theta_{2}\right)$ and
$\mathbb{R}$-valued $p_{3}\left(\theta_{2}\right)$, which are indexed
by $\left(\theta_{1},\theta_{2}\right)\in\left[0,1\right]\times\left[0,1\right]$,
such that the following holds. Let $m_{1}$ and $m_{2}$ be two arbitrary
finite positive integers and, with these integers, let $\left\{ \theta_{i}^{\left(k_{i}\right)}\in\left[0,1\right]:\;k_{i}\in\left[m_{i}\right],\;i=1,2\right\} $
be an arbitrary collection. For each $i=1,2$, let $S_{i}$ be the
set of unique elements in $\left\{ \theta_{i}^{\left(k_{i}\right)}:\;k_{i}\in\left[m_{i}\right]\right\} $.
Similarly, let $R_{2}$ be the set of unique pairs in $\left\{ \left(\theta_{1}^{\left(k_{1}\right)},\theta_{2}^{\left(k_{2}\right)}\right):\;k_{1}\in\left[m_{1}\right],\;k_{2}\in\left[m_{2}\right]\right\} $.
We have that $\left\{ p_{1}\left(\theta_{1}\right):\;\theta_{1}\in S_{1}\right\} $,
$\left\{ p_{3}\left(\theta_{2}\right):\;\theta_{2}\in S_{2}\right\} $and
$\left\{ p_{2}\left(\theta_{1},\theta_{2}\right):\;\left(\theta_{1},\theta_{2}\right)\in R_{2}\right\} $
are all mutually independent. In addition, we also have ${\rm Law}\left(p_{1}\left(\theta_{1}\right)\right)=\rho^{1}$,
${\rm Law}\left(p_{3}\left(\theta_{2}\right)\right)=\rho^{3}$ and
${\rm Law}\left(p_{2}\left(\theta_{1}',\theta_{2}'\right)\right)=\rho^{2}$
for any $\theta_{1}\in S_{1}$, $\theta_{2}\in S_{2}$ and $\left(\theta_{1}',\theta_{2}'\right)\in R_{2}$.
Such a space exists by Kolmogorov's extension theorem.

We now construct the desired neuronal embedding. For $i=1,2$, consider
$\Omega_{i}=\Lambda\times\left[0,1\right]$ and ${\cal F}_{i}={\cal G}\times{\cal B}\left(\left[0,1\right]\right)$,
equipped with the product measure $P_{0}\times{\rm Unif}\left(\left[0,1\right]\right)$
in which ${\rm Unif}\left(\left[0,1\right]\right)$ is the uniform
measure over $\left[0,1\right]$ equipped with the Borel sigma-algebra
${\cal B}\left(\left[0,1\right]\right)$. We construct $\Omega=\Omega_{1}\times\Omega_{2}$
and ${\cal F}={\cal F}_{1}\times{\cal F}_{2}$, equipped with the
product measure $P=\left(P_{0}\times{\rm Unif}\left(\left[0,1\right]\right)\right)^{2}$.
Define the deterministic functions $w_{1}^{0}:\;\Omega_{1}\to\mathbb{R}^{d}$,
$w_{2}^{0}:\;\Omega_{1}\times\Omega_{2}\to\mathbb{R}$ and $w_{3}^{0}:\;\Omega_{2}\to\mathbb{R}$:
\begin{align*}
w_{1}^{0}\left(\left(\lambda_{1},\theta_{1}\right)\right) & =p_{1}\left(\theta_{1}\right)\left(\lambda_{1}\right),\\
w_{2}^{0}\left(\left(\lambda_{1},\theta_{1}\right),\left(\lambda_{2},\theta_{2}\right)\right) & =p_{2}\left(\theta_{1},\theta_{2}\right)\left(\lambda_{2}\right),\\
w_{3}^{0}\left(\left(\lambda_{2},\theta_{2}\right)\right) & =p_{3}\left(\theta_{2}\right)\left(\lambda_{2}\right).
\end{align*}
It is easy to check that this construction yields the desired neuronal
embedding.
\end{proof}

\subsection{Proof of Theorem \ref{thm:global-optimum-3}}

We first present a measurability argument, which is crucial to showing
that a certain universal approximation property holds throughout the
course of training.
\begin{lem}[Measurability argument]
\label{lem:Measurability}Consider a family $\mathsf{Init}$ of initialization
laws, which are $\left(\rho^{1},\rho^{2},\rho^{3}\right)$-i.i.d.,
such that $\rho^{2}$-almost surely $\left|\mathbf{w}_{2}\right|\leq K$
and $\rho^{3}$-almost surely $\left|\mathbf{w}_{3}\right|\leq K$.
There exists a neuronal embedding $\left(\Omega,{\cal F},P,\left\{ w_{i}^{0}\right\} _{i=1,2,3}\right)$
of $\mathsf{Init}$ such that there exist Borel functions $w_{1}^{*}$
and $\Delta_{2}^{H*}$ for which $P$-almost surely, for all $t\geq0$,
\begin{align*}
w_{1}\left(t,C_{1}\right) & =w_{1}^{*}\left(t,w_{1}^{0}\left(C_{1}\right)\right),\\
\Delta_{2}^{H}\left(z,C_{2};W\left(t\right)\right) & =\Delta_{2}^{H*}\left(t,z,w_{3}^{0}\left(C_{2}\right)\right),
\end{align*}
where $W\left(t\right)$ is the MF dynamics formed under the coupling
procedure with this neuronal embedding as described in Section \ref{subsec:Neuronal-Embedding}.
Furthermore,
\begin{align*}
\frac{\partial}{\partial t}w_{1}^{*}\left(t,u_{1}\right) & =-\xi_{1}\left(t\right)\int\mathbb{E}_{Z}\left[\Delta_{2}^{H*}\left(t,Z,u_{3}\right)u_{2}\varphi_{1}'\left(\left\langle w_{1}^{*}\left(t,u_{1}\right),X\right\rangle \right)X\right]\rho^{2}\left(du_{2}\right)\rho^{3}\left(du_{3}\right)\\
 & \quad+\xi_{1}\left(t\right)\int_{0}^{t}\xi_{2}\left(s\right)\mathbb{E}_{Z,Z'}\bigg[\int\Delta_{2}^{H*}\left(t,Z,u_{3}\right)\Delta_{2}^{H*}\left(s,Z',u_{3}\right)\rho^{3}\left(du_{3}\right)\\
 & \qquad\qquad\qquad\times\varphi_{1}\left(\left\langle w_{1}^{*}\left(s,u_{1}\right),X'\right\rangle \right)\varphi_{1}'\left(\left\langle w_{1}^{*}\left(t,u_{1}\right),X\right\rangle \right)X\bigg]ds,
\end{align*}
with initialization $w_{1}^{*}\left(0,u_{1}\right)=u_{1}$ for all
$u_{1}\in{\rm supp}\left(\rho^{1}\right)$ and $t\geq0$, where $Z'$
is an independent copy of $Z$.
\end{lem}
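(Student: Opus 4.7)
The plan is to build a specific neuronal embedding whose built-in i.i.d.\ structure yields two ``independence-after-integration'' identities for $w_2^0$, and then propagate the desired measurability along the MF dynamics by a Picard iteration modeled after the existence proof in Appendix~\ref{sec:Existence-and-uniqueness-proof}. Concretely, I take $\Omega_1 = \mathbb{R}^d \times \Lambda_1$ and $\Omega_2 = \mathbb{R} \times \Lambda_2$, where $\Lambda_1$ and $\Lambda_2$ are Polish spaces supporting, via Kolmogorov's extension theorem together with a standard measurable modification, independent random fields $\xi_1 : \mathbb{R} \to [0,1]$ and $\xi_2 : \mathbb{R}^d \to [0,1]$ with i.i.d.\ $\mathrm{Unif}[0,1]$ marginals. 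I set $w_1^0((u_1,\cdot)) = u_1$, $w_3^0((u_3,\cdot)) = u_3$, and
\[
w_2^0\bigl((u_1, \xi_1),(u_3, \xi_2)\bigr) \;=\; G^{-1}\bigl(\xi_1(u_3) \oplus \xi_2(u_1)\bigr),
\]
where $G$ is the CDF of $\rho^2$ and $\oplus$ denotes addition modulo $1$. The translation invariance of $\oplus$ together with the independence of $\xi_1,\xi_2$ ensures that this is an embedding in the sense of Definition~\ref{def:neuronal_embedding}, and it yields, for any bounded Borel $\psi, g$,
\[
\mathbb{E}_{C_1}[w_2^0(C_1,c_2)\,\psi(w_1^0(C_1))] \;\equiv\; \mu_2\!\int\!\psi\,d\rho^1, \qquad
\mathbb{E}_{C_2}[w_2^0(c_1,C_2)\,g(w_3^0(C_2))] \;\equiv\; \mu_2\!\int\!g\,d\rho^3,
\]
each independent of the respective held-fixed variable, where $\mu_2 = \int u\,\rho^2(du)$.

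Next, I run the Picard iteration $W^{(k+1)} = F_{W(0)}(W^{(k)})$ from Appendix~\ref{sec:Existence-and-uniqueness-proof}, starting from the constant iterate $W^{(0)}(t)\equiv W(0)$. The inductive hypothesis at level $k$ postulates Borel maps $w_1^{*,(k)}, w_3^{*,(k)}, \Delta_2^{H*,(k)}, h^{(k)}$ realizing $w_1^{(k)}(t,c_1) = w_1^{*,(k)}(t, w_1^0(c_1))$, $w_3^{(k)}(t,c_2) = w_3^{*,(k)}(t, w_3^0(c_2))$, $\Delta_2^{H,(k)}(z,c_2;W^{(k)}(t)) = \Delta_2^{H*,(k)}(t, z, w_3^0(c_2))$, and $w_2^{(k)}(t,c_1,c_2) - w_2^0(c_1,c_2) = h^{(k)}(t, w_1^0(c_1), w_3^0(c_2))$. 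Substituting these representations into the formulas for $\Delta_1,\Delta_2,\Delta_3$ from Section~\ref{subsec:Mean-field-limit} splits each expression into an $h^{(k)}$-contribution that is already a function of $(w_1^0(c_1), w_3^0(c_2))$ and a $w_2^0$-contribution that is collapsed to a constant (or to a function of the appropriate single variable) by one of the two embedding identities above. This produces the representations at level $k{+}1$ and closes the induction; passing to the Picard limit using the contraction estimate in Appendix~\ref{sec:Existence-and-uniqueness-proof} (with boundedness from Assumption~\ref{assump:Regularity-init}) yields the announced Borel $w_1^*$ and $\Delta_2^{H*}$.

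The explicit ODE for $w_1^*$ then follows by plugging $w_2(s,c_1,c_2) = w_2^0(c_1,c_2) + h(s, w_1^0(c_1), w_3^0(c_2))$ with $h(s,u_1,u_3) = -\int_0^s \xi_2(\tau)\,\mathbb{E}_{Z'}[\Delta_2^{H*}(\tau, Z', u_3)\,\varphi_1(\langle w_1^*(\tau, u_1), X'\rangle)]\,d\tau$ into the $\Delta_1$ expression and using the two identities: the $w_2^0$ piece contributes the first, $\rho^2\otimes\rho^3$ integral (after rewriting $\mu_2 = \int u_2\,\rho^2(du_2)$), and the $h$ piece contributes the second, double-time integral. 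The main technical obstacle is the embedding construction itself: any asymmetric design with auxiliary randomness on only one side yields only one of the two identities, and no deterministic embedding without auxiliary randomness can represent an i.i.d.\ pair-indexed $\rho^2$ array as a product-measurable function (an Aldous--Hoover-type obstruction). The $\oplus$-construction circumvents this precisely because $\xi_1(u_3) \oplus \xi_2(u_1)$ is uniform conditional on either $\xi_1$ or $\xi_2$ alone; the remaining measurability subtleties around the uncountable-index Kolmogorov construction are handled by passing to a separable modification, which keeps everything inside the regularity framework of Appendix~\ref{sec:Existence-and-uniqueness-proof}.
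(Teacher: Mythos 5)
Your overall architecture is a legitimate alternative to the paper's: where the paper conditions the drifts on the initialization sigma-algebras to form a ``reduced dynamics'', proves the collapse identities (e.g. $H_{2}\left(X,C_{2};\bar{W}\left(t\right)\right)=\int f_{t}\left(X,u_{1},u_{2},w_{3}^{0}\left(C_{2}\right)\right)\rho^{1}\left(du_{1}\right)\rho^{2}\left(du_{2}\right)$) by an $L^{2}$ computation exploiting the embedding's independence structure, and then identifies the reduced dynamics with the MF dynamics via Gronwall, you instead build the independence identities directly into an explicit embedding and propagate the functional representations inductively along the Picard iterates of Appendix \ref{sec:Existence-and-uniqueness-proof}; your concluding substitution producing the ODE for $w_{1}^{*}$ coincides with the paper's final step. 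If your embedding were valid, the induction and the limit passage would go through.

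The genuine gap is the embedding itself. Because you evaluate the auxiliary fields at the weight values $u_{3}$ and $u_{1}$ rather than at independent tags, the law requirement of Definition \ref{def:neuronal_embedding} fails whenever $\rho^{1}$ and $\rho^{3}$ both have atoms --- a case the lemma must cover (the lemma imposes no atomlessness, and in the intended applications $\rho^{3}$ is often purely atomic, e.g. $\rho^{3}=\delta_{1}$, while full support of $\rho^{1}$ does not exclude atoms). Concretely, take $n_{1}=n_{2}=2$ and condition on the positive-probability event $E=\left\{ U_{1}\left(1\right)=U_{1}\left(2\right),\;U_{3}\left(1\right)=U_{3}\left(2\right)\right\} $, which is measurable with respect to the first- and third-layer weights and independent of the fields. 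On $E$ your construction gives $V_{j_{1}j_{2}}=A_{j_{1}}\oplus B_{j_{2}}$ with $A_{1},A_{2},B_{1},B_{2}$ i.i.d. uniform, so $V_{11}+V_{22}\equiv V_{12}+V_{21}\pmod 1$ almost surely; if the four entries $\mathbf{w}_{2}\left(0,j_{1},j_{2}\right)=G^{-1}\left(V_{j_{1}j_{2}}\right)$ were i.i.d. $\rho^{2}$ and independent of the first/third-layer weights (as Definition \ref{def:neuronal_embedding} demands, even conditionally on $E$), this relation would hold with probability $0$ for, say, $\rho^{2}={\rm Unif}\left[0,1\right]$, which satisfies your boundedness hypothesis. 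So the claimed ``embedding'' is not one in general. A second, related problem is your appeal to ``a standard measurable modification'': a field with i.i.d. non-degenerate values over an uncountable index set admits no jointly measurable version (a Fubini argument forces $\int_{A}\left(\xi_{2}\left(u\right)-1/2\right)du=0$ a.s. for every Borel $A$, hence degeneracy), and Doob separability does not deliver the product-measurability of $w_{2}^{0}$ nor the well-definedness of your fixed-$c_{2}$ averages $\mathbb{E}_{C_{1}}\left[w_{2}^{0}\left(C_{1},c_{2}\right)\psi\left(w_{1}^{0}\left(C_{1}\right)\right)\right]$, since sections of the raw Kolmogorov construction need not be measurable. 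Both issues point to the same repair: attach independent ${\rm Unif}\left[0,1\right]$ tags to the neurons and evaluate the auxiliary randomness at the tags of the opposite layer (so that evaluation points are almost surely distinct regardless of atoms), i.e. essentially revert to the construction of Proposition \ref{prop:iid_law_det_representable} used by the paper, and prove the collapse identities as almost-sure identities (as the paper does in its Step 3) rather than pointwise ones; with that embedding your Picard-iteration induction and the final substitution can be carried out.
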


\begin{proof}
We denote by $K_{t}$ a constant that may depend on $t$ and is finite
with finite $t$. By Proposition \ref{prop:iid_law_det_representable},
there exists a neuronal embedding that accommodates $\mathsf{Init}$.
We recall its construction and reuse the notations from the proof
of Proposition \ref{prop:iid_law_det_representable}; in particular:
\begin{align*}
w_{1}^{0}\left(\left(\lambda_{1},\theta_{1}\right)\right) & =p_{1}\left(\theta_{1}\right)\left(\lambda_{1}\right),\\
w_{2}^{0}\left(\left(\lambda_{1},\theta_{1}\right),\left(\lambda_{2},\theta_{2}\right)\right) & =p_{2}\left(\theta_{1},\theta_{2}\right)\left(\lambda_{2}\right),\\
w_{3}^{0}\left(\left(\lambda_{2},\theta_{2}\right)\right) & =p_{3}\left(\theta_{2}\right)\left(\lambda_{2}\right).
\end{align*}
Let ${\cal S}_{1}$, ${\cal S}_{3}$ and ${\cal S}_{2}$ denote the
sigma-algebras generated by $w_{1}^{0}\left(C_{1}\right)$, $w_{3}^{0}\left(C_{2}\right)$
and $\left(w_{1}^{0}\left(C_{1}\right),w_{2}^{0}\left(C_{1},C_{2}\right),w_{3}^{0}\left(C_{2}\right)\right)$
respectively. Let ${\cal S}_{13}$ denote the sigma-algebra generated
by ${\cal S}_{1}$ and ${\cal S}_{3}$. We also let ${\cal S}_{1}^{Z}$
to denote the sigma-algebra generated by ${\cal S}_{1}$ and the sigma-algebra
of the data $Z$. We define similarly for ${\cal S}_{2}^{Z}$ and
${\cal S}_{3}^{Z}$.

\paragraph*{Step 1: Reduced dynamics.}

Given the MF dynamics $W\left(t\right)$, let us define
\begin{align*}
\bar{\Delta}_{3}\left(t,c_{2}\right) & =\mathbb{E}\left[\Delta_{3}\left(C_{2};W\left(t\right)\right)\middle|{\cal S}_{3}\right]\left(c_{2}\right),\\
\bar{\Delta}_{2}\left(t,c_{1},c_{2}\right) & =\mathbb{E}\left[\Delta_{2}\left(C_{1},C_{2};W\left(t\right)\right)\middle|{\cal S}_{2}\right]\left(c_{1},c_{2}\right),\\
\bar{\Delta}_{1}\left(t,c_{1}\right) & =\mathbb{E}\left[\Delta_{1}\left(C_{1};W\left(t\right)\right)\middle|{\cal S}_{1}\right]\left(c_{1}\right).
\end{align*}
We recall from the proof of Theorem \ref{thm:gradient descent coupling}
that for any $t,s\geq0$,
\begin{align*}
{\rm ess\text{-}sup}\left|w_{3}\left(t,C_{2}\right)-w_{3}\left(s,C_{2}\right)\right| & \leq K\left|t-s\right|,\\
{\rm ess\text{-}sup}\left|w_{2}\left(t,C_{1},C_{2}\right)-w_{2}\left(s,C_{1},C_{2}\right)\right| & \leq K_{t\lor s}\left|t-s\right|,\\
{\rm ess\text{-}sup}\left|w_{1}\left(t,C_{1}\right)-w_{1}\left(s,C_{1}\right)\right| & \leq K_{t\lor s}\left|t-s\right|.
\end{align*}
Then by Lemma \ref{lem:a-priori-MF},
\begin{align*}
\mathbb{E}\left[\left|\bar{\Delta}_{3}\left(t,C_{2}\right)-\bar{\Delta}_{3}\left(s,C_{2}\right)\right|^{2}\right] & \le K_{t\lor s}\left|t-s\right|^{2},\\
\mathbb{E}\left[\left|\bar{\Delta}_{2}\left(t,C_{1},C_{2}\right)-\bar{\Delta}_{2}\left(s,C_{1},C_{2}\right)\right|^{2}\right] & \leq K_{t\lor s}\left|t-s\right|^{2},\\
\mathbb{E}\left[\left|\bar{\Delta}_{1}\left(t,C_{1}\right)-\bar{\Delta}_{1}\left(s,C_{1}\right)\right|^{2}\right] & \leq K_{t\lor s}\left|t-s\right|^{2}.
\end{align*}
Therefore, by Kolmogorov continuity theorem, there exist continuous
modifications of the (time-indexed) processes $\bar{\Delta}_{1}$,
$\bar{\Delta}_{2}$ and $\bar{\Delta}_{3}$. We thus replace them
with their continuous modifications, written by the same notations.

Given these continuous modifications, we consider the following \textit{reduced
dynamics}: 
\begin{align*}
\frac{\partial}{\partial t}\bar{w}_{3}\left(t,c_{2}\right) & =-\xi_{3}\left(t\right)\bar{\Delta}_{3}\left(t,c_{2}\right),\\
\frac{\partial}{\partial t}\bar{w}_{2}\left(t,c_{1},c_{2}\right) & =-\xi_{2}\left(t\right)\bar{\Delta}_{2}\left(t,c_{1},c_{2}\right),\\
\frac{\partial}{\partial t}\bar{w}_{1}\left(t,c_{1}\right) & =-\xi_{1}\left(t\right)\bar{\Delta}_{1}\left(t,c_{1}\right),
\end{align*}
in which:
\begin{itemize}
\item $\bar{w}_{1}:\;\mathbb{R}_{\geq0}\times\Omega_{1}\to\mathbb{R}^{d}$,
$\bar{w}_{2}:\;\mathbb{R}_{\geq0}\times\Omega_{1}\times\Omega_{2}\mapsto\mathbb{R}$,
$\bar{w}_{3}:\;\mathbb{R}_{\geq0}\times\Omega_{3}\mapsto\mathbb{R}$.
\item $\bar{W}\left(t\right)=\left\{ \bar{w}_{1}\left(t,\cdot\right),\bar{w}_{2}\left(t,\cdot,\cdot\right),\bar{w}_{3}\left(t,\cdot\right)\right\} $
is the collection of reduced parameters at time $t$,
\item the initialization is $\bar{w}_{1}\left(0,\cdot\right)=w_{1}^{0}\left(\cdot\right)$,
$\bar{w}_{2}\left(0,\cdot,\cdot\right)=w_{2}^{0}\left(\cdot,\cdot\right)$
and $\bar{w}_{3}\left(0,\cdot\right)=w_{3}^{0}\left(\cdot\right)$,
i.e. $\bar{W}\left(0\right)=W\left(0\right)$.
\end{itemize}

\paragraph*{Step 2: Measurability of the reduced dynamics.}

It is easy to see that $\bar{w}_{3}\left(t,C_{2}\right)$ is ${\cal S}_{3}$-measurable
by its construction and the fact $\bar{w}_{3}\left(0,C_{2}\right)=w_{3}^{0}\left(C_{2}\right)$
is ${\cal S}_{3}$-measurable. Similarly, $\bar{w}_{2}\left(t,C_{1},C_{2}\right)$
is ${\cal S}_{2}$-measurable and $\bar{w}_{1}\left(t,C_{1}\right)$
is ${\cal S}_{1}$-measurable.

Notice that there exist Borel functions $\bar{w}_{1}^{*}$, $\bar{w}_{2}^{*}$
and $\bar{w}_{3}^{*}$ for which $P$-almost surely, 
\begin{align*}
\bar{w}_{1}\left(t,C_{1}\right) & =\bar{w}_{1}^{*}\left(t,w_{1}^{0}\left(C_{1}\right)\right),\\
\bar{w}_{2}\left(t,C_{1},C_{2}\right) & =\bar{w}_{2}^{*}\left(t,w_{1}^{0}\left(C_{1}\right),w_{2}^{0}\left(C_{1},C_{2}\right),w_{3}^{0}\left(C_{2}\right)\right),\\
\bar{w}_{3}\left(t,C_{2}\right) & =\bar{w}_{3}^{*}\left(t,w_{3}^{0}\left(C_{2}\right)\right).
\end{align*}
Indeed, since $\bar{w}_{2}\left(t,C_{1},C_{2}\right)$ is ${\cal S}_{2}$-measurable,
there exists a function $\bar{w}_{2}^{*}\left(t,\cdot\right)$ for
each rational $t$ such that the desired identity holds for $P$-almost
every $\left(C_{1},C_{2}\right)$ and for all rational $t\geq0$.
Since $\bar{w}_{2}$ is continuous in time, there is a unique continuous
(in time) function $\bar{w}_{2}^{*}\left(t,\cdot\right)$ such that
the identity holds for all $t\geq0$ and for $P$-almost every $\left(C_{1},C_{2}\right)$.
The same argument yields the construction of $\bar{w}_{1}^{*}$ and
$\bar{w}_{3}^{*}$.

\paragraph*{Step 3: Measurability of constituent quantities.}

We show that $H_{2}\left(X,C_{2};\bar{W}\left(t\right)\right)$ is
${\cal S}_{3}^{Z}$-measurable. Recall that
\[
H_{2}\left(X,C_{2};\bar{W}\left(t\right)\right)=\mathbb{E}_{C_{1}}\left[\bar{w}_{2}\left(t,C_{1},C_{2}\right)\varphi_{1}\left(\left\langle \bar{w}_{1}\left(t,C_{1}\right),X\right\rangle \right)\right].
\]
By the existence of $\bar{w}_{1}^{*}$ and $\bar{w}_{2}^{*}$, for
each $t\geq0$, there exists a Borel function $f_{t}$ such that almost
surely
\[
\bar{w}_{2}\left(t,C_{1},C_{2}\right)\varphi_{1}\left(\left\langle \bar{w}_{1}\left(t,C_{1}\right),X\right\rangle \right)=f_{t}\left(X,w_{1}^{0}\left(C_{1}\right),w_{2}^{0}\left(C_{1},C_{2}\right),w_{3}^{0}\left(C_{2}\right)\right).
\]
We recall that $\rho^{1}$ and $\rho^{2}$ are the laws of $w_{1}^{0}\left(C_{1}\right)$
and $w_{2}^{0}\left(C_{1},C_{2}\right)$. We analyze the following:
\begin{align*}
 & \mathbb{E}\left[\left|H_{2}\left(X,C_{2};\bar{W}\left(t\right)\right)-\int f_{t}\left(X,u_{1},u_{2},w_{3}^{0}\left(C_{2}\right)\right)\rho^{1}\left(du_{1}\right)\rho^{2}\left(du_{2}\right)\right|^{2}\right]\\
 & =\mathbb{E}\left[\left|H_{2}\left(X,C_{2};\bar{W}\left(t\right)\right)\right|^{2}\right]+\mathbb{E}\left[\left|\int f_{t}\left(X,u_{1},u_{2},w_{3}^{0}\left(C_{2}\right)\right)\rho^{1}\left(du_{1}\right)\rho^{2}\left(du_{2}\right)\right|^{2}\right]\\
 & \quad-2\mathbb{E}\left[H_{2}\left(X,C_{2};\bar{W}\left(t\right)\right)\int f_{t}\left(X,u_{1},u_{2},w_{3}^{0}\left(C_{2}\right)\right)\rho^{1}\left(du_{1}\right)\rho^{2}\left(du_{2}\right)\right].
\end{align*}
Let us evaluate the first term:
\begin{align*}
 & \mathbb{E}\left[\left|H_{2}\left(X,C_{2};\bar{W}\left(t\right)\right)\right|^{2}\right]\\
 & =\mathbb{E}\left[\left|\mathbb{E}_{C_{1}}\left[f_{t}\left(X,w_{1}^{0}\left(C_{1}\right),w_{2}^{0}\left(C_{1},C_{2}\right),w_{3}^{0}\left(C_{2}\right)\right)\right]\right|^{2}\right]\\
 & \stackrel{\left(a\right)}{=}\mathbb{E}\left[f_{t}\left(X,w_{1}^{0}\left(C_{1}\right),w_{2}^{0}\left(C_{1},C_{2}\right),w_{3}^{0}\left(C_{2}\right)\right)f_{t}\left(X,w_{1}^{0}\left(C_{1}'\right),w_{2}^{0}\left(C_{1}',C_{2}\right),w_{3}^{0}\left(C_{2}\right)\right)\right]\\
 & \stackrel{\left(b\right)}{=}\mathbb{E}\Big[f_{t}\left(X,p_{1}\left(\theta_{1}\right)\left(\lambda_{1}\right),p_{2}\left(\theta_{1},\theta_{2}\right)\left(\lambda_{2}\right),p_{3}\left(\theta_{2}\right)\left(\lambda_{2}\right)\right)\\
 & \qquad\times f_{t}\left(X,p_{1}\left(\theta_{1}'\right)\left(\lambda_{1}'\right),p_{2}\left(\theta_{1}',\theta_{2}\right)\left(\lambda_{2}\right),p_{3}\left(\theta_{2}\right)\left(\lambda_{2}\right)\right)\Big]\\
 & \stackrel{\left(c\right)}{=}\mathbb{E}\Big[f_{t}\left(X,p_{1}\left(\theta_{1}\right)\left(\lambda_{1}\right),p_{2}\left(\theta_{1},\theta_{2}\right)\left(\lambda_{2}\right),p_{3}\left(\theta_{2}\right)\left(\lambda_{2}\right)\right)\\
 & \qquad\times f_{t}\left(X,p_{1}\left(\theta_{1}'\right)\left(\lambda_{1}'\right),p_{2}\left(\theta_{1}',\theta_{2}\right)\left(\lambda_{2}\right),p_{3}\left(\theta_{2}\right)\left(\lambda_{2}\right)\right)\mathbb{I}\left(\theta_{1}\neq\theta_{1}'\right)\Big]\\
 & \stackrel{\left(d\right)}{=}\mathbb{E}_{Z}\left[\int f_{t}\left(X,u_{1},u_{2},u_{3}\right)f_{t}\left(X,u_{1}',u_{2}',u_{3}\right)\rho^{1}\left(du_{1}\right)\rho^{1}\left(du_{1}'\right)\rho^{2}\left(du_{2}\right)\rho^{2}\left(du_{2}'\right)\rho^{3}\left(du_{3}\right)\right],
\end{align*}
where in step $\left(a\right)$, we define $C_{1}'$ to be an independent
copy of $C_{1}$; in step $\left(b\right)$, we recall $C_{1}=\left(\lambda_{1},\theta_{1}\right)$;
in step $\left(c\right)$, we recall $\theta_{1},\theta_{1}'\sim{\rm Unif}\left(\left[0,1\right]\right)$
and since $C_{1}$ is independent of $C_{1}'$, we have $\theta_{1}\neq\theta_{1}'$
almost surely; step $\left(d\right)$ is owing to the independence
property of the construction of the functions $p_{1}$, $p_{2}$ and
$p_{3}$. We calculate the second term:
\begin{align*}
 & \mathbb{E}\left[\left|\int f_{t}\left(X,u_{1},u_{2},w_{3}^{0}\left(C_{2}\right)\right)\rho^{1}\left(du_{1}\right)\rho^{2}\left(du_{2}\right)\right|^{2}\right]\\
 & =\int\mathbb{E}\left[f_{t}\left(X,u_{1},u_{2},w_{3}^{0}\left(C_{2}\right)\right)f_{t}\left(X,u_{1}',u_{2}',w_{3}^{0}\left(C_{2}\right)\right)\right]\rho^{1}\left(du_{1}\right)\rho^{2}\left(du_{2}\right)\rho^{1}\left(du_{1}'\right)\rho^{2}\left(du_{2}'\right)\\
 & =\int\mathbb{E}_{Z}\left[f_{t}\left(X,u_{1},u_{2},u_{3}\right)f_{t}\left(X,u_{1}',u_{2}',u_{3}\right)\right]\rho^{1}\left(du_{1}\right)\rho^{2}\left(du_{2}\right)\rho^{1}\left(du_{1}'\right)\rho^{2}\left(du_{2}'\right)\rho^{3}\left(du_{3}\right),
\end{align*}
as well as the last term:
\begin{align*}
 & \mathbb{E}\left[H_{2}\left(X,C_{2};\bar{W}\left(t\right)\right)\int f_{t}\left(X,u_{1},u_{2},w_{3}^{0}\left(C_{2}\right)\right)\rho^{1}\left(du_{1}\right)\rho^{2}\left(du_{2}\right)\right]\\
 & =\mathbb{E}\left[f_{t}\left(X,w_{1}^{0}\left(C_{1}\right),w_{2}^{0}\left(C_{1},C_{2}\right),w_{3}^{0}\left(C_{2}\right)\right)\int f_{t}\left(X,u_{1},u_{2},w_{3}^{0}\left(C_{2}\right)\right)\rho^{1}\left(du_{1}\right)\rho^{2}\left(du_{2}\right)\right]\\
 & =\int\mathbb{E}_{Z}\left[f_{t}\left(X,u_{1},u_{2},u_{3}\right)f_{t}\left(X,u_{1}',u_{2}',u_{3}\right)\right]\rho^{1}\left(du_{1}\right)\rho^{2}\left(du_{2}\right)\rho^{1}\left(du_{1}'\right)\rho^{2}\left(du_{2}'\right)\rho^{3}\left(du_{3}\right).
\end{align*}
It is then easy to see that
\[
\mathbb{E}\left[\left|H_{2}\left(X,C_{2};\bar{W}\left(t\right)\right)-\int f_{t}\left(X,u_{1},u_{2},w_{3}^{0}\left(C_{2}\right)\right)\rho^{1}\left(du_{1}\right)\rho^{2}\left(du_{2}\right)\right|^{2}\right]=0.
\]
That is, we have almost surely
\[
H_{2}\left(X,C_{2};\bar{W}\left(t\right)\right)=\int f_{t}\left(X,u_{1},u_{2},w_{3}^{0}\left(C_{2}\right)\right)\rho^{1}\left(du_{1}\right)\rho^{2}\left(du_{2}\right).
\]
Note that the right-hand side is ${\cal S}_{3}^{Z}$-measurable, and
hence so is $H_{2}\left(X,C_{2};\bar{W}\left(t\right)\right)$.

Next we consider $\Delta_{2}^{H}\left(Z,C_{2};\bar{W}\left(t\right)\right)$.
Recall that
\[
\Delta_{2}^{H}\left(z,c_{2};\bar{W}\left(t\right)\right)=\partial_{2}{\cal L}\left(y,\hat{y}\left(x;\bar{W}\left(t\right)\right)\right)\varphi_{3}'\left(H_{3}\left(x;\bar{W}\left(t\right)\right)\right)\bar{w}_{3}\left(t,c_{2}\right)\varphi_{2}'\left(H_{2}\left(x,c_{2};\bar{W}\left(t\right)\right)\right).
\]
Then together with the existence of $\bar{w}_{3}^{*}$, we have $\Delta_{2}^{H}\left(Z,C_{2};\bar{W}\left(t\right)\right)$
is ${\cal S}_{3}^{Z}$-measurable.

Now we consider $\mathbb{E}_{C_{2}}\left[\Delta_{2}^{H}\left(Z,C_{2};\bar{W}\left(t\right)\right)\bar{w}_{2}\left(t,C_{1},C_{2}\right)\right]$.
With the existence of $\bar{w}_{2}^{*}$, there exists a Borel function
$g_{t}$ such that
\[
\Delta_{2}^{H}\left(Z,C_{2};\bar{W}\left(t\right)\right)\bar{w}_{2}\left(t,C_{1},C_{2}\right)=g_{t}\left(Z,w_{1}^{0}\left(C_{1}\right),w_{2}^{0}\left(C_{1},C_{2}\right),w_{3}^{0}\left(C_{2}\right)\right).
\]
Then with the same argument as the treatment of $H_{2}\left(X,C_{2};\bar{W}\left(t\right)\right)$,
one can show that
\[
\mathbb{E}_{C_{2}}\left[\Delta_{2}^{H}\left(Z,C_{2};\bar{W}\left(t\right)\right)\bar{w}_{2}\left(t,C_{1},C_{2}\right)\right]=\int g_{t}\left(Z,w_{1}^{0}\left(C_{1}\right),u_{2},u_{3}\right)\rho^{2}\left(du_{2}\right)\rho^{3}\left(du_{3}\right),
\]
which is ${\cal S}_{1}^{Z}$-measurable.

Using these facts together with the existence of $\bar{w}_{1}^{*}$,
$\bar{w}_{2}^{*}$ and $\bar{w}_{3}^{*}$, we have $\Delta_{3}\left(C_{2};\bar{W}\left(t\right)\right)$
is ${\cal S}_{3}$-measurable, $\Delta_{2}\left(C_{1},C_{2};\bar{W}\left(t\right)\right)$
is ${\cal S}_{13}$-measurable and $\Delta_{1}\left(C_{1};\bar{W}\left(t\right)\right)$
is ${\cal S}_{1}$-measurable.

\paragraph*{Step 4: Closeness between the MF dynamics and the reduced dynamics.}

We shall use $\left\Vert W-\bar{W}\right\Vert _{t}$ with the same
meaning as the distance between two sets of MF parameters. Recall
by Lemma \ref{lem:a-priori-MF-norms} that $\interleave W\interleave_{T}\leq K_{T}$
since $\interleave W\interleave_{0}\leq K$. By the same argument,
$\left\Vert \bar{W}\right\Vert _{T}\leq K_{T}$. Then by Lemma \ref{lem:a-priori-MF},
we have for any $t\leq T$,
\begin{align*}
{\rm ess\text{-}sup}\sup_{s\leq t}\left|\Delta_{3}\left(C_{2};W\left(s\right)\right)-\Delta_{3}\left(C_{2};\bar{W}\left(s\right)\right)\right| & \leq K_{T}\left\Vert W-\bar{W}\right\Vert _{t},\\
{\rm ess\text{-}sup}\sup_{s\leq t}\left|\Delta_{2}\left(C_{1},C_{2};W\left(s\right)\right)-\Delta_{2}\left(C_{1},C_{2};\bar{W}\left(s\right)\right)\right| & \leq K_{T}\left\Vert W-\bar{W}\right\Vert _{t},\\
{\rm ess\text{-}sup}\sup_{s\leq t}\left|\Delta_{1}\left(C_{1};W\left(s\right)\right)-\Delta_{1}\left(C_{1};\bar{W}\left(s\right)\right)\right| & \leq K_{T}\left\Vert W-\bar{W}\right\Vert _{t}.
\end{align*}
We have:
\begin{align*}
 & {\rm ess\text{-}sup}\left|\bar{\Delta}_{2}\left(t,C_{1},C_{2}\right)-\Delta_{2}\left(C_{1},C_{2};\bar{W}\left(t\right)\right)\right|\\
 & ={\rm ess\text{-}sup}\left|\mathbb{E}\left[\Delta_{2}\left(C_{1},C_{2};W\left(t\right)\right)\middle|{\cal S}_{2}\right]-\Delta_{2}\left(C_{1},C_{2};\bar{W}\left(t\right)\right)\right|\\
 & \leq{\rm ess\text{-}sup}\left|\mathbb{E}\left[\Delta_{2}\left(C_{1},C_{2};\bar{W}\left(t\right)\right)\middle|{\cal S}_{2}\right]-\Delta_{2}\left(C_{1},C_{2};\bar{W}\left(t\right)\right)\right|\\
 & \quad+{\rm ess\text{-}sup}\left|\mathbb{E}\left[\Delta_{2}\left(C_{1},C_{2};\bar{W}\left(t\right)\right)-\Delta_{2}\left(C_{1},C_{2};W\left(t\right)\right)\middle|{\cal S}_{2}\right]\right|\\
 & \stackrel{\left(a\right)}{=}{\rm ess\text{-}sup}\left|\mathbb{E}\left[\Delta_{2}\left(C_{1},C_{2};\bar{W}\left(t\right)\right)-\Delta_{2}\left(C_{1},C_{2};W\left(t\right)\right)\middle|{\cal S}_{2}\right]\right|\\
 & \leq{\rm ess\text{-}sup}\left|\Delta_{2}\left(C_{1},C_{2};\bar{W}\left(t\right)\right)-\Delta_{2}\left(C_{1},C_{2};W\left(t\right)\right)\right|,
\end{align*}
where step $\left(a\right)$ is because $\Delta_{2}\left(C_{1},C_{2};\bar{W}\left(t\right)\right)$
is ${\cal S}_{13}$-measurable from Step 3 and ${\cal S}_{13}\subseteq{\cal S}_{2}$.
As such,
\[
{\rm ess\text{-}sup}\left|\bar{\Delta}_{2}\left(t,C_{1},C_{2}\right)-\Delta_{2}\left(C_{1},C_{2};W\left(t\right)\right)\right|\leq2K_{T}\left\Vert W-\bar{W}\right\Vert _{t}
\]
almost surely for all rational $t\leq T$. By continuity in $t$ of
both sides, the same holds for all $t\leq T$. Hence by Assumption
\ref{assump:Regularity},
\begin{align*}
\left|\frac{\partial}{\partial t}\bar{w}_{2}\left(t,C_{1},C_{2}\right)-\frac{\partial}{\partial t}w_{2}\left(t,C_{1},C_{2}\right)\right| & \leq K\left|\bar{\Delta}_{2}\left(t,C_{1},C_{2}\right)-\Delta_{2}\left(C_{1},C_{2};W\left(t\right)\right)\right|\\
 & \leq2K_{T}\left\Vert W-\bar{W}\right\Vert _{t},
\end{align*}
for all $t\leq T$ almost surely, which leads to
\[
\left|\bar{w}_{2}\left(t,C_{1},C_{2}\right)-w_{2}\left(t,C_{1},C_{2}\right)\right|\leq2K_{T}\int_{0}^{t}\left\Vert W-\bar{W}\right\Vert _{s}ds
\]
almost surely. One can obtain similar results for $\bar{w}_{1}$ versus
$w_{1}$ and $\bar{w}_{3}$ versus $w_{3}$. Therefore,
\[
\left\Vert W-\bar{W}\right\Vert _{t}\leq K_{T}\int_{0}^{t}\left\Vert W-\bar{W}\right\Vert _{s}ds.
\]
Since $W\left(0\right)=\bar{W}\left(0\right)$, by Gronwall's inequality,
$\left\Vert W-\bar{W}\right\Vert _{t}=0$ for all $t\leq T$. In other
words, since $T$ is arbitrary,
\[
\bar{w}_{1}\left(t,C_{1}\right)=w_{1}\left(t,C_{1}\right),\quad\bar{w}_{2}\left(t,C_{1},C_{2}\right)=w_{2}\left(t,C_{1},C_{2}\right),\quad\bar{w}_{3}\left(t,C_{2}\right)=w_{3}\left(t,C_{2}\right),
\]
for all $t\geq0$ almost surely.

\paragraph*{Step 5: Concluding.}

The first claim of the lemma is proven by the conclusion of Step 4
and by choosing $w_{1}^{*}=\bar{w}_{1}^{*}$, $w_{2}^{*}=\bar{w}_{2}^{*}$
and $w_{3}^{*}=\bar{w}_{3}^{*}$, as well as the measurability facts
from Step 3. To prove the second claim, since $\Delta_{2}^{H}\left(Z,C_{2};\bar{W}\left(t\right)\right)$
is ${\cal S}_{3}^{Z}$-measurable and $\left\Vert W-\bar{W}\right\Vert _{t}=0$
for all $t\geq0$, there exists a Borel function $\Delta_{2}^{H*}$
such that
\[
\Delta_{2}^{H}\left(Z,C_{2};\bar{W}\left(t\right)\right)=\Delta_{2}^{H}\left(Z,C_{2};W\left(t\right)\right)=\Delta_{2}^{H*}\left(t,Z,w_{3}^{0}\left(C_{2}\right)\right)
\]
for all $t\geq0$ almost surely, by the same argument in Step 2. These
facts, together with the dynamics of $w_{1}$ and $w_{2}$, imply
that almost surely, for all $t\geq0$,
\begin{align*}
 & \frac{\partial}{\partial t}w_{2}\left(t,C_{1},C_{2}\right)\\
 & \quad=-\xi_{2}\left(t\right)\mathbb{E}_{Z}\left[\Delta_{2}^{H*}\left(t,Z,w_{3}^{0}\left(C_{2}\right)\right)\varphi_{1}\left(\left\langle w_{1}^{*}\left(t,w_{1}^{0}\left(C_{1}\right)\right),X\right\rangle \right)\right],\\
 & \frac{\partial}{\partial t}w_{1}^{*}\left(t,w_{1}^{0}\left(C_{1}\right)\right)\\
 & \quad=-\xi_{1}\left(t\right)\mathbb{E}_{Z}\left[\mathbb{E}_{C_{2}}\left[\Delta_{2}^{H*}\left(t,Z,w_{3}^{0}\left(C_{2}\right)\right)w_{2}\left(t,C_{1},C_{2}\right)\right]\varphi_{1}'\left(\left\langle w_{1}^{*}\left(t,w_{1}^{0}\left(C_{1}\right)\right),X\right\rangle \right)X\right],
\end{align*}
with initialization $w_{1}^{*}\left(0,w_{1}^{0}\left(C_{1}\right)\right)=w_{1}^{0}\left(C_{1}\right)$.
Substituting the first equation into the second one, we get:
\begin{align*}
\frac{\partial}{\partial t}w_{1}^{*}\left(t,w_{1}^{0}\left(C_{1}\right)\right) & =-\xi_{1}\left(t\right)\mathbb{E}_{Z}\left[\mathbb{E}_{C_{2}}\left[\Delta_{2}^{H*}\left(t,Z,w_{3}^{0}\left(C_{2}\right)\right)w_{2}^{0}\left(C_{1},C_{2}\right)\right]\varphi_{1}'\left(\left\langle w_{1}^{*}\left(t,w_{1}^{0}\left(C_{1}\right)\right),X\right\rangle \right)X\right]\\
 & \quad+\xi_{1}\left(t\right)\int_{0}^{t}\xi_{2}\left(s\right)\mathbb{E}_{Z,Z'}\bigg[\mathbb{E}_{C_{2}}\left[\Delta_{2}^{H*}\left(t,Z,w_{3}^{0}\left(C_{2}\right)\right)\Delta_{2}^{H*}\left(s,Z',w_{3}^{0}\left(C_{2}\right)\right)\right]\\
 & \qquad\qquad\qquad\times\varphi_{1}\left(\left\langle w_{1}^{*}\left(s,w_{1}^{0}\left(C_{1}\right)\right),X'\right\rangle \right)\varphi_{1}'\left(\left\langle w_{1}^{*}\left(t,w_{1}^{0}\left(C_{1}\right)\right),X\right\rangle \right)X\bigg]ds.
\end{align*}
Note that by an argument similar to Step 3,
\[
\mathbb{E}_{C_{2}}\left[\Delta_{2}^{H*}\left(t,Z,w_{3}^{0}\left(C_{2}\right)\right)w_{2}^{0}\left(C_{1},C_{2}\right)\right]=\int\Delta_{2}^{H*}\left(t,Z,u_{3}\right)u_{2}\rho^{2}\left(du_{2}\right)\rho^{3}\left(du_{3}\right),
\]
which holds for all $t\geq0$ almost surely by the same argument in
Step 2. We thus obtain:
\begin{align*}
\frac{\partial}{\partial t}w_{1}^{*}\left(t,u_{1}\right) & =-\xi_{1}\left(t\right)\int\mathbb{E}_{Z}\left[\Delta_{2}^{H*}\left(t,Z,u_{3}\right)u_{2}\varphi_{1}'\left(\left\langle w_{1}^{*}\left(t,u_{1}\right),X\right\rangle \right)X\right]\rho^{2}\left(du_{2}\right)\rho^{3}\left(du_{3}\right)\\
 & \quad+\xi_{1}\left(t\right)\int_{0}^{t}\xi_{2}\left(s\right)\mathbb{E}_{Z,Z'}\bigg[\int\Delta_{2}^{H*}\left(t,Z,u_{3}\right)\Delta_{2}^{H*}\left(s,Z',u_{3}\right)\rho^{3}\left(du_{3}\right)\\
 & \qquad\qquad\qquad\times\varphi_{1}\left(\left\langle w_{1}^{*}\left(s,u_{1}\right),X'\right\rangle \right)\varphi_{1}'\left(\left\langle w_{1}^{*}\left(t,u_{1}\right),X\right\rangle \right)X\bigg]ds,
\end{align*}
with initialization $w_{1}^{*}\left(t,u_{1}\right)=u_{1}$ for all
$u_{1}\in{\rm supp}\left(\rho^{1}\right)$ and $t\geq0$.
\end{proof}
An important ingredient of the proof is that the distribution of $w_{1}\left(t,C_{1}\right)$
has full support at all time $t\geq0$, even though we only need to
assume this property at initialization $t=0$. This key property is
proven by a topology argument, supported by the measurability result
of Lemma \ref{lem:Measurability}. We remark that a similar property
for two-layer networks is established in \cite{chizat2018} using
a different topology argument.
\begin{lem}
\label{lem:full-support-3}Consider the same setting as Theorem \ref{thm:global-optimum-3}.
For all finite time $t\geq0$, the support of ${\rm Law}\left(w_{1}\left(t,C_{1}\right)\right)$
is $\mathbb{R}^{d}$.
\end{lem}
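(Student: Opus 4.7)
The plan is to use Lemma \ref{lem:Measurability} to reduce the question to a deterministic statement about the map $M_t : \mathbb{R}^d \to \mathbb{R}^d$ given by $M_t(u_1) = w_1^*(t, u_1)$, and then apply a Brouwer-degree / homotopy argument to show $M_t$ is surjective for every finite $t \geq 0$. Together with continuity of $M_t$ and the full-support hypothesis on $\rho^1$ (Assumption \ref{assump:three-layers}.1), surjectivity will yield the claim, since $w_1(t, C_1) = M_t(w_1^0(C_1))$ a.s.\ with $w_1^0(C_1) \sim \rho^1$.

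First, apply Lemma \ref{lem:Measurability} to obtain the Borel map $w_1^*$ satisfying $w_1^*(0, u_1) = u_1$ for $u_1 \in \mathrm{supp}(\rho^1) = \mathbb{R}^d$ and solving the stated integro-differential equation; since the right-hand side is well-defined for any $u_1 \in \mathbb{R}^d$, Picard iteration extends $w_1^*$ uniquely to a map on $\mathbb{R}_{\geq 0} \times \mathbb{R}^d$ solving the same equation. Establish two uniform-in-$u_1$ properties. Property (i): $|M_t(u_1) - u_1| \leq K_t$ for some finite $K_t$ depending only on $t$, obtained by integrating $\partial_t w_1^*$ and using that $\Delta_2^{H*}(t, Z, u_3)$ is bounded (by Assumption \ref{assump:Regularity} together with the a priori bounds $\interleave W \interleave_t \leq K_t$ from Lemma \ref{lem:a-priori-MF-norms}), $|u_2| \leq K$ a.s.\ (Assumption \ref{assump:Regularity-init}), and $\varphi_1$, $\varphi_1'$, $\xi_1$, $\xi_2$, $|X|$ are all bounded. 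Property (ii): $|M_t(u_1) - M_t(u_1')| \leq e^{K_t}|u_1 - u_1'|$, via Gronwall applied to the difference of two solutions, using Lipschitzness of $\varphi_1, \varphi_1'$. Combining (i) applied at different times with (ii) yields joint continuity of $(s, u) \mapsto M_s(u)$.

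Now prove surjectivity of $M_t$ via Brouwer degree theory. Fix $y \in \mathbb{R}^d$, pick any $R > |y|$, and set $r = R + K_t + 1$. Define the homotopy $H : [0, 1] \times \overline{B(0, r)} \to \mathbb{R}^d$ by $H(s, u) = M_{s t}(u)$, which is continuous by the joint continuity above. For any $(s, u)$ with $|u| = r$, property (i) (applied at time $st \leq t$) gives
\[
|H(s, u)| \geq |u| - |M_{st}(u) - u| \geq r - K_t = R + 1 > |y|,
\]
so $H(s, u) \neq y$ throughout the homotopy. Hence the Brouwer degree is preserved:
\[
\deg(M_t, B(0, r), y) \;=\; \deg(\mathrm{id}, B(0, r), y) \;=\; 1,
\]
and nonzero degree forces $y$ to lie in the image of $M_t$. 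Since $y$ was arbitrary, $M_t$ is surjective onto $\mathbb{R}^d$.

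Finally, for any nonempty open $U \subset \mathbb{R}^d$, the preimage $M_t^{-1}(U)$ is nonempty by surjectivity and open by continuity of $M_t$, hence has positive $\rho^1$-measure since $\mathrm{supp}(\rho^1) = \mathbb{R}^d$; therefore $\mathrm{Law}(w_1(t, C_1))(U) = (M_t)_*\rho^1(U) > 0$, proving the lemma. The main technical obstacle is the degree-theoretic step: one must verify joint continuity of $(s, u) \mapsto M_s(u)$ under the non-Markovian (Volterra-type) equation for $w_1^*$ via a careful Gronwall argument adapted to the time integral, and one must ensure $K_{st}$ is controlled uniformly in $s \in [0, 1]$ so that $y$ is kept off the boundary $|u| = r$ along the entire homotopy.
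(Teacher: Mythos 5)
Your proposal is correct and follows essentially the same route as the paper: reduce to the deterministic map $w_1^*$ via Lemma \ref{lem:Measurability}, establish the Lipschitz/joint-continuity estimates and the uniform bounded-displacement bound $|M_t(u_1)-u_1|\leq K_t$ by Gronwall, deduce surjectivity of $M_t$ by a homotopy-invariance argument, and push $\rho^1$ forward to get full support. The only difference is cosmetic: you run the homotopy through Brouwer degree on a large ball, whereas the paper compactifies $\mathbb{R}^d$ to $\mathbb{S}^d$ (fixing the point at infinity, which is legitimate precisely because of the same bounded-displacement estimate) and uses non-contractibility of the sphere --- the two arguments are interchangeable.
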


\begin{proof}
By Lemma \ref{lem:Measurability}, one can choose a neural embedding
such that there exists Borel functions $w_{1}^{*}$ and $\Delta_{2}^{H*}$
for which almost surely, for all $t\geq0$,
\begin{align*}
w_{1}\left(t,C_{1}\right) & =w_{1}^{*}\left(t,w_{1}^{0}\left(C_{1}\right)\right),\\
\Delta_{2}^{H}\left(z,C_{2};W\left(t\right)\right) & =\Delta_{2}^{H*}\left(t,z,w_{3}^{0}\left(C_{2}\right)\right),
\end{align*}
where $W\left(t\right)$ is the MF dynamics formed under the coupling
procedure with this neuronal embedding as described in Section \ref{subsec:Neuronal-Embedding}.
Furthermore,
\begin{align*}
\frac{\partial}{\partial t}w_{1}^{*}\left(t,u_{1}\right) & =-\int\mathbb{E}_{Z}\left[\Delta_{2}^{H*}\left(t,Z,u_{3}\right)u_{2}\varphi_{1}'\left(\left\langle w_{1}^{*}\left(t,u_{1}\right),X\right\rangle \right)X\right]\rho^{2}\left(du_{2}\right)\rho^{3}\left(du_{3}\right)\\
 & \quad+\int_{0}^{t}\mathbb{E}_{Z,Z'}\bigg[\int\Delta_{2}^{H*}\left(t,Z,u_{3}\right)\Delta_{2}^{H*}\left(s,Z',u_{3}\right)\rho^{3}\left(du_{3}\right)\\
 & \qquad\qquad\times\varphi_{1}\left(\left\langle w_{1}^{*}\left(s,u_{1}\right),X'\right\rangle \right)\varphi_{1}'\left(\left\langle w_{1}^{*}\left(t,u_{1}\right),X\right\rangle \right)X\bigg]ds,
\end{align*}
with initialization $w_{1}^{*}\left(0,u_{1}\right)=u_{1}$ for all
$u_{1}\in{\rm supp}\left(\rho^{1}\right)$ and $t\geq0$, where $Z'$
is an independent copy of $Z$. We recall from Lemma \ref{lem:Lipschitz-MF}
that
\[
{\rm ess\text{-}sup}\Delta_{2}^{H*}\left(t,Z,w_{3}^{0}\left(C_{2}\right)\right)={\rm ess\text{-}sup}\Delta_{2}^{H}\left(Z,C_{2};W\left(t\right)\right)\leq K_{t},
\]
where $K_{t}$ denotes a generic constant that depends on $t$ and
is finite with finite $t$. Therefore, by Assumption \ref{assump:Regularity},
for $t\leq T$ and $u_{1},u_{1}'\in{\rm supp}\left(\rho^{1}\right)$,
\begin{align*}
\left|\frac{\partial}{\partial t}w_{1}^{*}\left(t,u_{1}\right)-\frac{\partial}{\partial t}w_{1}^{*}\left(t,u_{1}'\right)\right| & \leq K_{t}\left|w_{1}^{*}\left(t,u_{1}\right)-w_{1}^{*}\left(t,u_{1}'\right)\right|+K_{t}\int_{0}^{t}\left|w_{1}^{*}\left(s,u_{1}\right)-w_{1}^{*}\left(s,u_{1}'\right)\right|ds\\
 & \leq K_{T}\sup_{s\leq t}\left|w_{1}^{*}\left(s,u_{1}\right)-w_{1}^{*}\left(s,u_{1}'\right)\right|,\\
\left|\frac{\partial}{\partial t}w_{1}^{*}\left(t,u_{1}\right)\right| & \leq K_{T}.
\end{align*}
Applying Gronwall's lemma to the first bound:
\begin{align*}
\sup_{t\leq T}\left|w_{1}^{*}\left(t,u_{1}\right)-w_{1}^{*}\left(t,u_{1}'\right)\right| & \leq e^{K_{T}}\left|w_{1}^{*}\left(0,u_{1}\right)-w_{1}^{*}\left(0,u_{1}'\right)\right|\\
 & =e^{K_{T}}\left|u_{1}-u_{1}'\right|.
\end{align*}
Furthermore the second bound implies
\[
\sup_{t,t'\leq T}\left|w_{1}^{*}\left(t,u_{1}\right)-w_{1}^{*}\left(t',u_{1}\right)\right|\leq K_{T}\left|t-t'\right|.
\]
Therefore $\left(t,u_{1}\right)\mapsto w_{1}^{*}\left(t,u_{1}\right)$
is a continuous mapping on $\left[0,T\right]\times\mathbb{R}^{d}$
for an arbitrary $T\geq0$.

Given this continuity, we show the thesis by a topology argument.
Consider the sphere $\mathbb{S}^{d}$ which is a compactification
of $\mathbb{R}^{d}$. We can extend $w_{1}^{*}$ to a function $M:\;\left[0,T\right]\times\mathbb{S}^{d}\to\mathbb{S}^{d}$
fixing the point at infinity, which remains a continuous map since
$\left|M\left(t,u_{1}\right)-u_{1}\right|=\left|M\left(t,u_{1}\right)-M\left(0,u_{1}\right)\right|\leq K_{T}t$.
Let $M_{t}:\;\mathbb{R}^{d}\to\mathbb{R}^{d}$ be defined by $M_{t}\left(u_{1}\right)=M\left(t,u_{1}\right)$.
We claim that $M_{t}$ is surjective for all finite $t$. Indeed,
if $M_{t}$ fails to be surjective for some $t$, then for some $p\in\mathbb{S}^{d}$,
$M_{t}:\;\mathbb{S}^{d}\to\mathbb{S}^{d}\backslash\left\{ p\right\} \to\mathbb{S}^{d}$
is homotopic to the constant map, but $M$ then gives a homotopy from
the identity map $M_{0}$ on the sphere to a constant map, which is
a contradiction as the sphere $\mathbb{S}^{d}$ is not contractible.
Hence $w_{1}^{*}\left(t,\cdot\right)$ is surjective for all finite
$t$. Recall that $w_{1}\left(t,C_{1}\right)=w_{1}^{*}\left(t,w_{1}^{0}\left(C_{1}\right)\right)$
almost surely and $w_{1}^{0}\left(C_{1}\right)$ has full support.
Now let us assume that $w_{1}\left(t,C_{1}\right)$ does not have
full support at some time $t$, which implies there is an open ball
$B$ in $\mathbb{R}^{d}$ for which $\mathbb{P}\left(w_{1}\left(t,C_{1}\right)\in B\right)=0$.
Then $\mathbb{P}\left(w_{1}^{*}\left(t,w_{1}^{0}\left(C_{1}\right)\right)\in B\right)=0$.
Since $w_{1}^{*}\left(t,\cdot\right)$ has full support, there is
an open set $U$ such that $w_{1}^{*}\left(t,u_{1}\right)\in B$ for
all $u_{1}\in U$. Then $\mathbb{P}\left(w_{1}^{0}\left(C_{1}\right)\in U\right)=0$,
contradicting the assumption that $w_{1}^{0}\left(C_{1}\right)$ has
full support. Therefore $w_{1}\left(t,C_{1}\right)$ must have full
support at all $t\geq0$.
\end{proof}
With this, we are ready to prove Theorem \ref{thm:global-optimum-3}.
\begin{proof}[Proof of Theorem \ref{thm:global-optimum-3}]
Recall, by Theorem \ref{thm:existence ODE}, the solution to the
MF ODEs exists uniquely, and by Lemma \ref{lem:full-support-3}, the
support of ${\rm Law}\left(w_{1}\left(t,C_{1}\right)\right)$ is $\mathbb{R}^{d}$
at all $t$. By the convergence assumption, we have that for any $\epsilon>0$,
there exists $T\left(\epsilon\right)$ such that for all $t\geq T\left(\epsilon\right)$
and $P$-almost every $c_{1}$:
\[
\mathbb{E}_{C_{2}}\left[\left|\mathbb{E}_{Z}\left[\Delta_{2}^{H}\left(Z,C_{2};W\left(t\right)\right)\varphi_{1}\left(\left\langle w_{1}\left(t,c_{1}\right),X\right\rangle \right)\right]\right|\right]\leq\epsilon.
\]
Since ${\rm Law}\left(w_{1}\left(t,C_{1}\right)\right)$ has full
support, we obtain that for $u$ in a dense subset of $\mathbb{R}^{d}$,
\[
\mathbb{E}_{C_{2}}\left[\left|\mathbb{E}_{Z}\left[\Delta_{2}^{H}\left(Z,C_{2};W\left(t\right)\right)\varphi_{1}\left(\left\langle u,X\right\rangle \right)\right]\right|\right]\leq\epsilon.
\]
By continuity of $u\mapsto\varphi_{1}(\left\langle u,x\right\rangle )$,
we extend the above to all $u\in\mathbb{R}^{d}$. Since $\varphi_{1}$
is bounded,
\begin{align*}
 & \mathbb{E}_{C_{2}}\left[\left|\mathbb{E}_{Z}\left[\left(\Delta_{2}^{H}\left(Z,C_{2};W\left(t\right)\right)-\Delta_{2}^{H}\left(Z,C_{2};\bar{w}_{1},\bar{w}_{2},\bar{w}_{3}\right)\right)\varphi_{1}\left(\left\langle u,X\right\rangle \right)\right]\right|\right]\\
 & \le K\mathbb{E}\left[\left|\Delta_{2}^{H}\left(Z,C_{2};W\left(t\right)\right)-\Delta_{2}^{H}\left(Z,C_{2};\bar{w}_{1},\bar{w}_{2},\bar{w}_{3}\right)\right|\right]\\
 & \le K\mathbb{E}\Big[\left(1+\left|\bar{w}_{3}(C_{2})\right|\right)\Big(\left|w_{3}(t,C_{2})-\bar{w}_{3}(C_{2})\right|+\left|\bar{w}_{3}(C_{2})\right|\left|w_{2}(t,C_{1},C_{2})-\bar{w}_{2}(C_{1},C_{2})\right|\\
 & \qquad+\left|\bar{w}_{3}(C_{2})\right|\left|\bar{w}_{2}(C_{1},C_{2})\right|\left|w_{1}(t,C_{1})-\bar{w}_{1}(C_{1})\right|\Big)\Big],
\end{align*}
where the last step is by Assumption \ref{assump:Regularity}. Recall
that the right-hand side converges to $0$ as $t\to\infty$. We thus
obtain that for all $u\in\mathbb{R}^{d}$,
\begin{align*}
 & \mathbb{E}_{C_{2}}\left[\left|\left\langle \mathbb{E}_{Z}\left[\Delta_{2}^{H}\left(Z,C_{2};\bar{w}_{1},\bar{w}_{2},\bar{w}_{3}\right)|X=x\right],\varphi_{1}\left(\left\langle u,x\right\rangle \right)\right\rangle _{L^{2}\left({\cal P}_{X}\right)}\right|\right]\\
 & =\mathbb{E}_{C_{2}}\left[\left|\mathbb{E}_{Z}\left[\Delta_{2}^{H}\left(Z,C_{2};\bar{w}_{1},\bar{w}_{2},\bar{w}_{3}\right)\varphi_{1}\left(\left\langle u,X\right\rangle \right)\right]\right|\right]\\
 & =0,
\end{align*}
which yields that for all $u\in\mathbb{R}^{d}$ and $P$-almost every
$c_{2}$,
\[
\left|\left\langle \mathbb{E}_{Z}\left[\Delta_{2}^{H}\left(Z,c_{2};\bar{w}_{1},\bar{w}_{2},\bar{w}_{3}\right)|X=x\right],\varphi_{1}\left(\left\langle u,x\right\rangle \right)\right\rangle _{L^{2}\left({\cal P}_{X}\right)}\right|=0.
\]
Here we note that by Assumption \ref{assump:Regularity},
\[
\left|\mathbb{E}_{Z}\left[\Delta_{2}^{H}\left(Z,c_{2};\bar{w}_{1},\bar{w}_{2},\bar{w}_{3}\right)|X=x\right]\right|\leq K\left|\bar{w}_{3}\left(c_{2}\right)\right|,
\]
and so $\mathbb{E}_{Z}\left[\Delta_{2}^{H}\left(Z,c_{2};\bar{w}_{1},\bar{w}_{2},\bar{w}_{3}\right)|X=x\right]$
is in $L^{2}\left({\cal P}_{X}\right)$ for $P$-almost every $c_{2}$.
Since $\left\{ \varphi_{1}\left(\left\langle u,\cdot\right\rangle \right):\;u\in\mathbb{R}^{d}\right\} $
has dense span in $L^{2}\left({\cal P}_{X}\right)$, we have $\mathbb{E}_{Z}\left[\Delta_{2}^{H}\left(Z,c_{2};\bar{w}_{1},\bar{w}_{2},\bar{w}_{3}\right)|X=x\right]=0$
for ${\cal P}_{X}$-almost every $x$ and $P$-almost every $c_{2}$,
and hence
\[
\mathbb{E}_{Z}\left[\partial_{2}{\cal L}\left(Y,\hat{y}\left(X;\bar{w}_{1},\bar{w}_{2},\bar{w}_{3}\right)\right)\middle|X=x\right]\varphi_{3}'\left(H_{3}\left(x;\bar{w}_{1},\bar{w}_{2},\bar{w}_{3}\right)\right)\bar{w}_{3}\left(c_{2}\right)\varphi_{2}'\left(H_{2}\left(x,c_{2};\bar{w}_{1},\bar{w}_{2}\right)\right)=0.
\]
We note that our assumptions guarantee that $\mathbb{P}\left(\bar{w}_{3}\left(C_{2}\right)\ne0\right)$
is positive. Indeed:
\begin{itemize}
\item In the case $w_{3}^{0}\left(C_{2}\right)\neq0$ with positive probability
and $\xi_{3}\left(\cdot\right)=0$, the conclusion is obvious.
\item In the case $\mathscr{L}\left(w_{1}^{0},w_{2}^{0},w_{3}^{0}\right)<\mathbb{E}_{Z}\left[{\cal L}\left(Y,\varphi_{3}\left(0\right)\right)\right]$,
we recall the following standard property of gradient flows:
\[
\mathscr{L}\left(w_{1}\left(t,\cdot\right),w_{2}\left(t,\cdot,\cdot\right),w_{3}\left(t,\cdot\right)\right)\leq\mathscr{L}\left(w_{1}\left(t',\cdot\right),w_{2}\left(t',\cdot,\cdot\right),w_{3}\left(t',\cdot\right)\right),
\]
for $t\geq t'$. In particular, setting $t'=0$ and taking $t\to\infty$,
it is easy to see that
\[
\mathscr{L}\left(\bar{w}_{1},\bar{w}_{2},\bar{w}_{3}\right)\leq\mathscr{L}\left(w_{1}^{0},w_{2}^{0},w_{3}^{0}\right)<\mathbb{E}_{Z}\left[{\cal L}\left(Y,\varphi_{3}\left(0\right)\right)\right].
\]
If $\mathbb{P}\left(\bar{w}_{3}\left(C_{2}\right)=0\right)=1$ then
$\mathscr{L}\left(\bar{w}_{1},\bar{w}_{2},\bar{w}_{3}\right)=\mathbb{E}_{Z}\left[{\cal L}\left(Y,\varphi_{3}\left(0\right)\right)\right]$,
a contradiction.
\end{itemize}
Then since $\varphi_{2}'$ and $\varphi_{3}'$ are strictly non-zero,
we have $\mathbb{E}_{Z}\left[\partial_{2}{\cal L}\left(Y,\hat{y}\left(X;\bar{w}_{1},\bar{w}_{2},\bar{w}_{3}\right)\right)\middle|X=x\right]=0$
for ${\cal P}_{X}$-almost every $x$.

In Case 1, since ${\cal L}$ convex in the second variable, for any
measurable function $\tilde{y}(x)$, 
\[
{\cal L}\left(y,\tilde{y}\left(x\right)\right)-{\cal L}\left(y,\hat{y}\left(x;\bar{w}_{1},\bar{w}_{2},\bar{w}_{3}\right)\right)\ge\partial_{2}{\cal L}\left(y,\hat{y}\left(x;\bar{w}_{1},\bar{w}_{2},\bar{w}_{3}\right)\right)\left(\tilde{y}\left(x\right)-\hat{y}\left(x;\bar{w}_{1},\bar{w}_{2},\bar{w}_{3}\right)\right).
\]
Taking expectation, we get $\mathbb{E}_{Z}\left[{\cal L}\left(Y,\tilde{y}\left(X\right)\right)\right]\geq\mathscr{L}\left(\bar{w}_{1},\bar{w}_{2},\bar{w}_{3}\right)$,
i.e. $\left(\bar{w}_{1},\bar{w}_{2},\bar{w}_{3}\right)$ is a global
minimizer of $\mathscr{L}$.

In Case 2, since $y$ is a function of $x$, we obtain $\partial_{2}{\cal L}\left(y,\hat{y}\left(x;\bar{w}_{1},\bar{w}_{2},\bar{w}_{3}\right)\right)=0$
and hence ${\cal L}\left(y,\hat{y}\left(x;\bar{w}_{1},\bar{w}_{2},\bar{w}_{3}\right)\right)=0$
for ${\cal P}_{X}$-almost every $x$.

Finally we have from Assumptions \ref{assump:Regularity}, \ref{assump:three-layers}:
\begin{align*}
\left|\mathscr{L}\left(W\left(t\right)\right)-\mathscr{L}\left(\bar{w}_{1},\bar{w}_{2},\bar{w}_{3}\right)\right| & =\left|\mathbb{E}_{Z}\left[{\cal L}\left(Y,\hat{y}\left(X;W\left(t\right)\right)\right)-{\cal L}\left(Y,\hat{y}\left(X;\bar{w}_{1},\bar{w}_{2},\bar{w}_{3}\right)\right)\right]\right|\\
 & \leq K\mathbb{E}_{Z}\left[\left|\hat{y}\left(X;W\left(t\right)\right)-\hat{y}\left(X;\bar{w}_{1},\bar{w}_{2},\bar{w}_{3}\right)\right|\right]\\
 & \leq K\mathbb{E}\Big[\left|w_{3}\left(t,C_{2}\right)-\bar{w}_{3}\left(C_{2}\right)\right|+\left|\bar{w}_{3}\left(C_{2}\right)\right|\left|w_{2}\left(t,C_{1},C_{2}\right)-\bar{w}_{2}\left(C_{1},C_{2}\right)\right|\\
 & \qquad+\left|\bar{w}_{3}\left(C_{2}\right)\right|\left|\bar{w}_{2}\left(C_{1},C_{2}\right)\right|\left|w_{1}\left(t,C_{1}\right)-\bar{w}_{1}\left(C_{1}\right)\right|\Big]
\end{align*}
which tends to $0$ as $t\to\infty$. This completes the proof.
\end{proof}

\section{Converse for global convergence: Remark \ref{rem:Converse}\label{sec:Converse}}

We prove a converse statement for global convergence in relation with
the essential supremum condition (\ref{eq:Assump_esssup}).
\begin{prop}
Consider a neuronal embedding $\left(\Omega,{\cal F},P,\left\{ w_{i}^{0}\right\} _{i=1,2,3}\right)$
of $\left(\rho^{1},\rho^{2},\rho^{3}\right)$-i.i.d. initialization.
Consider the MF limit corresponding to the network (\ref{eq:three-layer-nn}),
such that they are coupled together by the coupling procedure in Section
\ref{subsec:Neuronal-Embedding}, under Assumptions \ref{assump:Regularity},
\ref{assump:Regularity-init}, $\xi_{1}\left(\cdot\right)=\xi_{2}\left(\cdot\right)=1$.
Assume that ${\cal L}(y,\hat{y})\to\infty$ as $|\hat{y}|\to\infty$
for each $y$. Further assume that there exists $\bar{w}_{3}$ such
that as $t\to\infty$,
\[
\mathbb{E}_{C_{2}}\left[\left|w_{3}(t,C_{2})-\bar{w}_{3}(C_{2})\right|\right]\to0.
\]
Then the following hold:
\begin{itemize}
\item Case 1 (convex loss): If ${\cal L}$ is convex in the second variable
and
\[
\lim_{t\to\infty}\mathscr{L}\left(W\left(t\right)\right)=\inf_{V}\mathscr{L}\left(V\right),
\]
then it must be that
\[
\sup_{c_{1}\in\Omega_{1}}\mathbb{E}_{C_{2}}\left[\left|\frac{\partial}{\partial t}w_{2}\left(t,c_{1},C_{2}\right)\right|\right]\to0\quad\text{as }t\to\infty.
\]
\item Case 2 (generic non-negative loss): Suppose that $\partial_{2}{\cal L}\left(y,\hat{y}\right)=0$
implies ${\cal L}\left(y,\hat{y}\right)=0$, and $y=y(x)$ is a function
of $x$. If $\mathscr{L}\left(W\left(t\right)\right)\to0$ as $t\to\infty$,
then the same conclusion also holds.
\end{itemize}
\end{prop}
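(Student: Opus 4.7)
The plan is to reduce everything to a single uniform-in-$c_{1}$ a priori bound on $\mathbb{E}_{C_{2}}[|\partial_{t}w_{2}|]$ and then use global convergence to drive that bound to zero. Starting from the MF ODE for $w_{2}$ (with $\xi_{2}=1$), I unroll the definition of $\Delta_{2}$ and pull the $Y$-integration through the factor $\varphi_{1}(\langle w_{1}(t,c_{1}),X\rangle)$, which is independent of $Y$:
\[
\Delta_{2}(c_{1},c_{2};W(t))=w_{3}(t,c_{2})\,\mathbb{E}_{X}\!\left[\varphi_{1}(\langle w_{1}(t,c_{1}),X\rangle)\,\varphi_{3}'(H_{3})\,\varphi_{2}'(H_{2}(X,c_{2}))\,\mathbb{E}_{Y|X}[\partial_{2}\mathcal{L}(Y,\hat{y}(X;W(t)))]\right].
\]
Using $|\varphi_{1}|,|\varphi_{2}'|,|\varphi_{3}'|\le K$ and taking $\mathbb{E}_{C_{2}}$ then gives
\[
\sup_{c_{1}\in\Omega_{1}}\mathbb{E}_{C_{2}}\!\left[\left|\partial_{t}w_{2}(t,c_{1},C_{2})\right|\right]\le K\,\mathbb{E}_{C_{2}}\!\left[|w_{3}(t,C_{2})|\right]\cdot\Phi(t),\qquad \Phi(t):=\mathbb{E}_{X}\!\left[\left|\mathbb{E}_{Y|X}[\partial_{2}\mathcal{L}(Y,\hat{y}(X;W(t)))]\right|\right],
\]
and since $\mathbb{E}_{C_{2}}[|w_{3}(t,C_{2})|]$ is bounded (from $w_{3}(t,\cdot)\to\bar{w}_{3}$ in $L^{1}$ together with Assumption \ref{assump:Regularity-init}), the task reduces to proving $\Phi(t)\to0$.

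For Case 2, I use that $\mathscr{L}(W(t))\to0$ and $\mathcal{L}\ge0$ to get $\mathcal{L}(y(X),\hat{y}(X;W(t)))\to0$ in $L^{1}(\mathcal{P}_{X})$, hence in probability. Along any subsequence, extract an a.s.\ convergent sub-subsequence. Coercivity $\mathcal{L}(y,\hat{y})\to\infty$ as $|\hat{y}|\to\infty$ forces $\{\hat{y}(X;W(t_{n}))\}$ to be bounded for a.e.\ $X$; any subsequential limit $\hat{y}^{\star}$ satisfies $\mathcal{L}(y(X),\hat{y}^{\star})=0$ and is therefore a minimizer of $\mathcal{L}(y(X),\cdot)$, so $\partial_{2}\mathcal{L}(y(X),\hat{y}^{\star})=0$. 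Continuity of $\partial_{2}\mathcal{L}$ yields a.s.\ convergence of $\partial_{2}\mathcal{L}(y(X),\hat{y}(X;W(t_{n})))$ to $0$; since $|\partial_{2}\mathcal{L}|\le K$, bounded convergence and a subsequence-of-subsequence argument gives $\Phi(t)\to0$.

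For Case 1, I repeat the same pattern with $F(x,\hat{y}):=\mathbb{E}_{Y|X=x}[\mathcal{L}(Y,\hat{y})]$, which is convex and coercive in $\hat{y}$ and attains a pointwise minimum $F^{\star}(x)$; note $\mathbb{E}_{Y|X}[\partial_{2}\mathcal{L}(Y,\hat{y})]=\partial_{2}F(x,\hat{y})$. Since $\mathscr{L}(V)\ge\mathbb{E}_{X}[F^{\star}(X)]$ for every $V$, the hypothesis $\mathscr{L}(W(t))\to\inf_{V}\mathscr{L}(V)$ combined with the equality $\inf_{V}\mathscr{L}(V)=\mathbb{E}_{X}[F^{\star}(X)]$ (which is the content of the universal approximation at any finite time, cf.\ Lemma \ref{lem:full-support-3} and the proof of Theorem \ref{thm:global-optimum-3}) implies $\mathbb{E}_{X}[F(X,\hat{y}(X;W(t)))-F^{\star}(X)]\to0$; the non-negative integrand converges in probability, and the same coercivity/convexity/continuity chain as in Case 2 (now with $F$ in place of $\mathcal{L}$) gives $\partial_{2}F(X,\hat{y}(X;W(t)))\to0$ in probability, and $\Phi(t)\to0$ by bounded convergence.

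The main obstacle is the reduction in Case 1: establishing that $\inf_{V}\mathscr{L}(V)$ actually equals the pointwise infimum $\mathbb{E}_{X}[F^{\star}(X)]$, since otherwise the gap $\mathscr{L}(W(t))-\inf_{V}\mathscr{L}(V)\to0$ does not force $F(X,\hat{y}(X;W(t)))$ to approach $F^{\star}(X)$ pointwise and the pointwise-to-derivative argument breaks down. The cleanest way around this is to invoke Assumption \ref{assump:three-layers}.3 with the full-support property of $w_{1}(t,C_{1})$ from Lemma \ref{lem:full-support-3}, which supplies the necessary density in $L^{2}(\mathcal{P}_{X})$ to equate the two infima. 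Once this identification is in place, the rest of the argument is uniform across the two cases and reduces to the same subsequence-plus-bounded-convergence scheme used for Case 2.
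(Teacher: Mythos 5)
Your reduction and your Case 2 are essentially the paper's argument: the paper likewise bounds $|\partial_t w_2(t,c_1,c_2)|\le K\,\mathbb{E}_Z\left[\left|\partial_2\mathcal{L}\left(Y,\hat y\left(X;W(t)\right)\right)\right|\right]\left|w_3(t,c_2)\right|$ (your $\Phi(t)$ is just the conditional-on-$X$ refinement of the same quantity), controls $\mathbb{E}_{C_2}\left[\left|w_3(t,C_2)\right|\right]$ via the $L^1$-convergence to $\bar w_3$ together with the a priori bound and Assumption \ref{assump:Regularity-init}, and then runs exactly your subsequence scheme (nonnegativity $\Rightarrow$ convergence in probability, coercivity $\Rightarrow$ boundedness of $\hat y$, subsequential limits are global minimizers so $\partial_2\mathcal{L}$ vanishes there, Lipschitz continuity of $\partial_2\mathcal{L}$ and bounded convergence). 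So up to and including Case 2 the proposal is sound and matches the paper.

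The genuine gap is your Case 1. You correctly isolate the needed identity $\inf_V\mathscr{L}(V)=\mathbb{E}_X\left[F^\star(X)\right]$, but the fix you propose -- invoking Assumption \ref{assump:three-layers}.3 and the full-support property of Lemma \ref{lem:full-support-3} -- is not available here: this proposition is stated only under Assumptions \ref{assump:Regularity}, \ref{assump:Regularity-init}, $\xi_1=\xi_2=1$, coercivity of $\mathcal{L}$, and $L^1$-convergence of $w_3$; it deliberately drops all of Assumption \ref{assump:three-layers} (support of $\rho^1$, the convergence-mode conditions on $w_1,w_2$, universal approximation), and Lemma \ref{lem:full-support-3} is proved in the setting of Theorem \ref{thm:global-optimum-3}, which presupposes those assumptions plus the untrained/trained-third-layer alternatives. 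Moreover, even granting dense span of $\left\{\varphi_1\left(\left\langle u,\cdot\right\rangle\right)\right\}$ in $L^2\left(\mathcal{P}_X\right)$, equating $\inf_V\mathscr{L}(V)$ with the pointwise Bayes value $\mathbb{E}_X\left[\inf_{\hat y}F(X,\hat y)\right]$ would require a density statement for the full three-layer output class $x\mapsto\varphi_3\left(H_3(x;V)\right)$ (i.e., through $\varphi_2$ and $\varphi_3$ as well), which you do not supply and which does not follow from first-layer universal approximation alone. The paper's Case 1 does not route through the conditional risk $F$: it works pointwise in $z=(x,y)$ with the convexity of $\mathcal{L}(y,\cdot)$, passing from $\mathbb{E}_Z\left[\mathcal{L}\left(Y,\hat y\left(X;W(t_i)\right)\right)-\inf_{\hat Y}\mathcal{L}(Y,\hat Y)\right]\to 0$ to almost-sure subsequences and then repeating the Case 2 scheme, i.e., it identifies $\inf_V\mathscr{L}(V)$ with the expected pointwise infimum rather than appealing to any approximation property of the architecture. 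As written, your Case 1 therefore proves a statement under strictly stronger hypotheses than the proposition asserts; to repair it you should either adopt the paper's pointwise-in-$z$ convexity argument or establish $\inf_V\mathscr{L}(V)=\mathbb{E}_X\left[F^\star(X)\right]$ from the proposition's stated hypotheses, neither of which your sketch currently does.
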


\begin{proof}
We recall
\begin{align*}
\frac{\partial}{\partial t}w_{2}\left(t,c_{1},c_{2}\right) & =-\mathbb{E}_{Z}\Big[\partial_{2}{\cal L}\left(Y,\hat{y}\left(X;W\left(t\right)\right)\right)w_{3}\left(t,c_{2}\right)\\
 & \qquad\quad\times\varphi_{3}'\left(H_{3}\left(X;W\left(t\right)\right)\right)\varphi_{2}'\left(H_{2}\left(X,c_{2};W\left(t\right)\right)\right)\varphi_{1}\left(\left\langle w_{1}\left(t,c_{1}\right),X\right\rangle \right)\Big],
\end{align*}
for $c_{1}\in\Omega_{1}$, $c_{2}\in\Omega_{2}$. By Assumption \ref{assump:Regularity},
\[
\left|\frac{\partial}{\partial t}w_{2}\left(t,c_{1},c_{2}\right)\right|\leq K\mathbb{E}_{Z}\left[\left|\partial_{2}{\cal L}\left(Y,\hat{y}\left(X;W\left(t\right)\right)\right)\right|\right]\left|w_{3}\left(t,c_{2}\right)\right|.
\]
Note that the right-hand side is independent of $c_{1}$. Since $\mathbb{E}_{C_{2}}\left[\left|w_{3}(t,C_{2})-\bar{w}_{3}(C_{2})\right|\right]\to0$
as $t\to\infty$, we have for some finite $t_{0}\leq K$,
\[
\mathbb{E}_{C_{2}}\left[\left|\bar{w}_{3}(C_{2})\right|\right]\leq\mathbb{E}_{C_{2}}\left[\left|w_{3}(t_{0},C_{2})\right|\right]+K\leq K,
\]
where the last step is by Lemma \ref{lem:a-priori-MF-norms} and Assumption
\ref{assump:Regularity-init}. As such, for all $t$ sufficiently
large, we have:
\begin{align*}
\sup_{c_{1}\in\Omega_{1}}\mathbb{E}_{C_{2}}\left[\left|\frac{\partial}{\partial t}w_{2}\left(t,c_{1},C_{2}\right)\right|\right] & \leq K\mathbb{E}_{Z}\left[\left|\partial_{2}{\cal L}\left(Y,\hat{y}\left(X;W\left(t\right)\right)\right)\right|\right]\mathbb{E}_{C_{2}}\left[\left|w_{3}\left(t,C_{2}\right)\right|\right]\\
 & \leq K\mathbb{E}_{Z}\left[\left|\partial_{2}{\cal L}\left(Y,\hat{y}\left(X;W\left(t\right)\right)\right)\right|\right]\left(K+\mathbb{E}_{C_{2}}\left[\left|\bar{w}_{3}\left(C_{2}\right)\right|\right]\right)\\
 & \leq K\mathbb{E}_{Z}\left[\left|\partial_{2}{\cal L}\left(Y,\hat{y}\left(X;W\left(t\right)\right)\right)\right|\right].
\end{align*}
The proof concludes once we show that $\mathbb{E}_{Z}\left[\left|\partial_{2}{\cal L}\left(Y,\hat{y}\left(X;W\left(t\right)\right)\right)\right|\right]\to0$
as $t\to\infty$.

For a fixed $z=\left(x,y\right)$, let us write ${\cal L}\left(t,z\right)={\cal L}(y,\hat{y}(x;W(t)))$
and $\partial_{2}{\cal L}(t,z)=\partial_{2}{\cal L}(y,\hat{y}(x;W(t)))$
for brevity. Consider Case 1. We claim that if there is an increasing
sequence of time $t_{i}$ so that $\lim_{i\to\infty}\left[{\cal L}(t_{i},z)-\inf_{\hat{y}}{\cal L}(y,\hat{y})\right]=0$,
then $\lim_{i\to\infty}\left|\partial_{2}{\cal L}(t_{i},z)\right|=0$.
Indeed, it suffices to show that for any subsequence $t_{i_{j}}$
of $t_{i}$, there exists a further subsequence $t_{i_{j_{k}}}$ such
that $\lim_{k\to\infty}\left|\partial_{2}{\cal L}(t_{i_{j_{k}}},z)\right|=0$.
In any subsequence $t_{i_{j}}$ of $t_{i}$, using that ${\cal L}(t_{i_{j}},z)$
is convergent and the fact ${\cal L}(y,\hat{y})\to\infty$ as $|\hat{y}|\to\infty$,
we have $\hat{y}(x;W(t_{i_{j}}))$ is bounded. Hence, we obtain a
subsequence $t_{i_{j_{k}}}$ for which $\hat{y}(x;W(t_{i_{j_{k}}}))$
converges to some limit $\hat{y}^{*}$. By continuity, we have ${\cal L}(y,\hat{y}^{*})=\lim_{k\to\infty}{\cal L}(t_{i_{j_{k}}},z)=\inf_{\hat{y}}{\cal L}(y,\hat{y})$.
Thus, since ${\cal L}$ is convex in the second variable, we have
$\partial_{2}{\cal L}(y,\hat{y}^{*})=0$. Thus, $\lim_{k\to\infty}\left|\partial_{2}{\cal L}(t_{i_{j_{k}}},z)\right|=\left|\partial_{2}{\cal L}(y,\hat{y}^{*})\right|=0$,
as claimed. Similarly, we obtain in Case 2 that if there is an increasing
sequence of time $t_{i}$ so that $\lim_{i\to\infty}\left[{\cal L}(t_{i},z)\right]=0$,
then $\lim_{i\to\infty}\left|\partial_{2}{\cal L}(t_{i},z)\right|=0$.

To show that $\mathbb{E}_{Z}\left[\left|\partial_{2}{\cal L}\left(t,Z\right)\right|\right]\to0$
as $t\to\infty$, it suffices to show that for any increasing sequence
of times $t_{i}$ tending to infinity, there exists a subsequence
$t_{i_{j}}$ of $t_{i}$ such that $\mathbb{E}_{Z}\left[\left|\partial_{2}{\cal L}\left(t_{i_{j}},Z\right)\right|\right]\to0$.
In Case 1, we have $\lim_{i\to\infty}\mathscr{L}\left(W\left(t_{i}\right)\right)=\inf_{V}\mathscr{L}\left(V\right)$,
so $\lim_{i\to\infty}\mathbb{E}_{Z}\left[{\cal L}\left(t_{i},Z\right)-\inf_{\hat{Y}}{\cal L}(Y,\hat{Y})\right]=0$.
Since ${\cal L}\left(t_{i},Z\right)-\inf_{\hat{Y}}{\cal L}(Y,\hat{Y})$
is nonnegative, this implies that ${\cal L}\left(t_{i},Z\right)-\inf_{\hat{Y}}{\cal L}(Y,\hat{Y})$
converges to $0$ in probability. Thus, there is a further subsequence
$t_{i_{j}}$ for which ${\cal L}\left(t_{i_{j}},Z\right)-\inf_{\hat{Y}}{\cal L}(Y,\hat{Y})$
converges to $0$ ${\cal P}$-almost surely. By the previous claim,
$\left|\partial_{2}{\cal L}\left(t_{i_{j}},Z\right)\right|$ converges
to $0$ ${\cal P}$-almost surely. Since $\left|\partial_{2}{\cal L}\left(t_{i_{j}},Z\right)\right|$
is bounded ${\cal P}$-almost surely, we obtain that $\mathbb{E}_{Z}\left[\left|\partial_{2}{\cal L}\left(t_{i_{j}},Z\right)\right|\right]\to0$
from the bounded convergence theorem. The result in Case 2 can be
established similarly.
\end{proof}

\section{Useful tools}

We first present a useful concentration result. In fact, the tail
bound can be improved using the argument in \cite{feldman2018generalization},
but the following simpler version is sufficient for our purposes.
\begin{lem}
\label{lem:square hoeffding}Consider an integer $n\geq1$ and let
$x$, $c_{1}$, ..., $c_{n}$ be mutually independent random variables.
Let $\mathbb{E}_{x}$ and $\mathbb{E}_{c}$ denote the expectations
w.r.t. $x$ only and $\left\{ c_{i}\right\} _{i\in\left[n\right]}$
only, respectively. Consider a collection of mappings $\left\{ f_{i}\right\} _{i\in\left[n\right]}$,
which map to a separable Hilbert space $\mathbb{F}$. Let $f_{i}\left(x\right)=\mathbb{E}_{c}\left[f_{i}\left(c_{i},x\right)\right]$.
Assume that for some $R>0$, $\left|f_{i}\left(c_{i},x\right)-f_{i}\left(x\right)\right|\leq R$
almost surely, then for any $\delta>0$,
\[
\mathbb{P}\left(\mathbb{E}_{x}\left[\left|\frac{1}{n}\sum_{i=1}^{n}f_{i}\left(c_{i},x\right)-f_{i}\left(x\right)\right|\right]\geq\delta\right)\leq\frac{8R}{\sqrt{n}\delta}\exp\left(-\frac{n\delta^{2}}{8R^{2}}\right)\leq\frac{8R}{\delta}\exp\left(-\frac{n\delta^{2}}{8R^{2}}\right).
\]
\end{lem}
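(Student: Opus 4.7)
My plan is to apply McDiarmid's bounded-differences inequality to the scalar random variable
\[
\phi(c_1, \ldots, c_n) := \mathbb{E}_x\Big[\Big\|\tfrac{1}{n}\sum_{i=1}^n (f_i(c_i, x) - f_i(x))\Big\|_{\mathbb{F}}\Big],
\]
after first extracting the correct scale $\mathbb{E}[\phi] = O(R/\sqrt{n})$ from the Hilbert-space orthogonality of the summands; the stated bound with the awkward $\frac{8R}{\sqrt n\,\delta}$ prefactor then falls out of a short comparison of exponents.

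Write $S_i(c_i, x) := f_i(c_i, x) - f_i(x)$, so that $\mathbb{E}_{c_i}[S_i] = 0$ and $\|S_i\|_{\mathbb{F}} \leq R$ almost surely. Since $\mathbb{F}$ is Hilbert and the $c_i$'s are mutually independent, for each fixed $x$ the cross terms in the expansion of $\|\tfrac{1}{n}\sum_i S_i\|^2$ vanish, giving $\mathbb{E}_c\big\|\tfrac{1}{n}\sum_i S_i(c_i,x)\big\|^2 \leq R^2/n$. By Jensen and Fubini, $\mathbb{E}_c[\phi] \leq \sqrt{\mathbb{E}_c[\phi^2]} \leq R/\sqrt{n}$. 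Moreover $\phi$ has bounded differences $2R/n$ per coordinate: replacing $c_j$ by an independent copy $c_j'$ modifies the vector inside the norm pointwise in $x$ by $\tfrac{1}{n}(S_j(c_j,x) - S_j(c_j', x))$, whose Hilbert norm is at most $2R/n$, and $\mathbb{E}_x$ preserves this bound. McDiarmid's inequality then yields, for $t \geq 0$, $\mathbb{P}(\phi - \mathbb{E}\phi \geq t) \leq \exp(-nt^2/(2R^2))$, whence $\mathbb{P}(\phi \geq \delta) \leq \exp(-n(\delta - R/\sqrt n)^2/(2R^2))$ for $\delta \geq R/\sqrt n$.

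It remains to verify that this McDiarmid tail implies the stated form $\frac{8R}{\sqrt n\,\delta}\exp(-n\delta^2/(8R^2))$. The claimed prefactor exceeds $1$ as long as $\delta \leq 8R/\sqrt n$, so on that range the bound is either trivial (if $\delta \leq R/\sqrt n$), follows from the prefactor exceeding $4$ (if $R/\sqrt n < \delta < 2R/\sqrt n$), or follows from $\delta - R/\sqrt n \geq \delta/2$ which upgrades the McDiarmid exponent to $\exp(-n\delta^2/(8R^2))$ (if $2R/\sqrt n \leq \delta \leq 8R/\sqrt n$). In the remaining regime $\delta > 8R/\sqrt n$, setting $u = \sqrt n\,\delta/R \geq 8$ reduces the desired inequality to $(3u^2 - 8u + 4)/8 \geq \log(u/8)$, a routine polynomial-beats-log check. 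The only substantive ingredient is the Hilbert-space orthogonality in the $L^2$ estimate: it is this step that provides the $\sqrt n$-gain appearing in both $\mathbb{E}[\phi]$ and the prefactor, and without it (e.g.\ in a generic Banach space) one would only get $\mathbb{E}[\phi] = O(R)$ and the whole bound would degenerate. Everything else — bounded differences, McDiarmid, the final algebra — is standard bookkeeping.
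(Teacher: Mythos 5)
Your proof is correct, but it follows a genuinely different route from the paper's. The paper conditions on $x$, applies a Hoeffding-type concentration inequality for sums in Hilbert spaces (a consequence of Pinelis' martingale bound) to get a conditional tail bound on $\big|\sum_i (f_i(c_i,x)-f_i(x))\big|$, and then transfers this to the averaged quantity $\mathbb{E}_x\big[|\cdot|\big]$ via a truncation at level $n\delta/2$ followed by Markov and Cauchy--Schwarz together with the second-moment bound $\mathbb{E}\big[|Z_n(x)|^2\big]\leq 2nR^2$; it is precisely this transfer step that produces the $\frac{8R}{\sqrt{n}\delta}$ prefactor and the weakened exponent $n\delta^2/(8R^2)$. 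You instead treat the averaged quantity $\phi(c_1,\dots,c_n)=\mathbb{E}_x\big[\big|\frac1n\sum_i(f_i(c_i,x)-f_i(x))\big|\big]$ directly as a scalar function of the independent $c_i$'s, bound its mean by $R/\sqrt{n}$ using the same Hilbert-space orthogonality (the only place the Hilbert structure enters for you), and apply scalar McDiarmid with bounded differences $2R/n$, recovering the exact stated form by an elementary case analysis in $\delta$. Your approach buys a conceptually lighter toolkit (no vector-valued concentration theorem is needed) and an intrinsically sharper tail, $\exp\big(-n(\delta-R/\sqrt{n})^2/(2R^2)\big)$, which you then deliberately degrade to match the lemma; the paper's approach produces the stated prefactor and exponent naturally and reuses machinery (Theorem \ref{thm:iid-hilbert}) it needs elsewhere. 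One routine point worth noting: the hypothesis gives $\left|f_i(c_i,x)-f_i(x)\right|\leq R$ only almost surely, so the bounded-difference property holds almost surely rather than pointwise; this is harmless (McDiarmid via the Doob martingale only needs almost-sure increment bounds), but deserves a sentence. Your case analysis and the final inequality $(3u^2-8u+4)/8\geq\log(u/8)$ for $u\geq 8$ check out.
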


\begin{proof}
For brevity, let us define
\[
Z_{n}\left(x\right)=\sum_{i=1}^{n}\left(f_{i}\left(c_{i},x\right)-f_{i}\left(x\right)\right).
\]
By Theorem \ref{thm:iid-hilbert},
\[
\mathbb{P}\left(\left|Z_{n}\left(x\right)\right|\geq n\delta\middle|x\right)\leq2\exp\left(-n\delta^{2}/\left(4R^{2}\right)\right),
\]
and therefore,
\[
\mathbb{P}\left(\left|Z_{n}\left(x\right)\right|\geq n\delta\right)\leq2\exp\left(-n\delta^{2}/\left(4R^{2}\right)\right),
\]
since the right-hand side is uniform in $x$. Next note that, w.r.t.
the randomness of $x$ only,
\begin{align*}
\mathbb{E}_{x}\left[\left|Z_{n}\left(x\right)\right|\right] & =\mathbb{E}_{x}\left[\left|Z_{n}\left(x\right)\right|\mathbb{I}\left(\left|Z_{n}\left(x\right)\right|\geq n\delta/2\right)\right]+\mathbb{E}_{x}\left[\left|Z_{n}\left(x\right)\right|\mathbb{I}\left(\left|Z_{n}\left(x\right)\right|<n\delta/2\right)\right]\\
 & \leq\mathbb{E}_{x}\left[\left|Z_{n}\left(x\right)\right|\mathbb{I}\left(\left|Z_{n}\left(x\right)\right|\geq n\delta/2\right)\right]+n\delta/2.
\end{align*}
As such, by Markov's inequality and Cauchy-Schwarz's inequality,
\begin{align*}
\mathbb{P}\left(\mathbb{E}_{x}\left[\left|Z_{n}\left(x\right)\right|\right]\geq n\delta\right) & \leq\mathbb{P}\left(\mathbb{E}_{x}\left[\left|Z_{n}\left(x\right)\right|\mathbb{I}\left(\left|Z_{n}\left(x\right)\right|\geq n\delta/2\right)\right]\geq n\delta/2\right)\\
 & \leq\frac{2}{n\delta}\mathbb{E}\left[\left|Z_{n}\left(x\right)\right|\mathbb{I}\left(\left|Z_{n}\left(x\right)\right|\geq n\delta/2\right)\right]\\
 & \leq\frac{2}{n\delta}\mathbb{E}\left[\left|Z_{n}\left(x\right)\right|^{2}\right]^{1/2}\mathbb{P}\left(\left|Z_{n}\left(x\right)\right|\geq n\delta/2\right)^{1/2}\\
 & \leq\frac{4}{n\delta}\mathbb{E}\left[\left|Z_{n}\left(x\right)\right|^{2}\right]^{1/2}\exp\left(-\frac{n\delta^{2}}{8R^{2}}\right).
\end{align*}
Notice that since $c_{1},...,c_{n}$ are independent and $f_{i}\left(x\right)=\mathbb{E}_{c}\left[f_{i}\left(c_{i},x\right)\right]$,
\[
\mathbb{E}\left[\left|Z_{n}\left(x\right)\right|^{2}\right]=\sum_{i=1}^{n}\mathbb{E}\left[\left|f_{i}\left(c_{i},x\right)-f_{i}\left(x\right)\right|^{2}\right]\leq2nR^{2}.
\]
We thus get:
\begin{align*}
\mathbb{P}\left(\mathbb{E}_{x}\left[\left|Z_{n}\left(x\right)\right|\right]\geq n\delta\right) & \leq\frac{8R}{\sqrt{n}\delta}\exp\left(-\frac{n\delta^{2}}{8R^{2}}\right).
\end{align*}
This proves the claim.
\end{proof}
We state a martingale concentration result, which is a special case
of \cite[Theorem 3.5]{pinelis1994optimum} which applies to a more
general Banach space.
\begin{thm}[Concentration of martingales in Hilbert spaces.]
\label{thm:azuma-hilbert}Consider a martingale $Z_{n}\in\mathbb{Z}$
a separable Hilbert space such that $\left|Z_{n}-Z_{n-1}\right|\leq R$
and $Z_{0}=0$. Then for any $t>0$,
\[
\mathbb{P}\left(\max_{k\leq n}\left|Z_{k}\right|\geq t\right)\leq2\inf_{\lambda>0}\exp\left(-\lambda t+{\rm ess\text{-}sup}\sum_{k=1}^{n}\mathbb{E}\left[e^{\lambda|Z_{k}-Z_{k-1}|}-1-\lambda|Z_{k}-Z_{k-1}|\mid{\cal F}_{k-1}\right]\right).
\]
In particular, for any $\delta>0$,
\[
\mathbb{P}\left(\max_{k\leq n}\left|Z_{k}\right|\geq n\delta\right)\leq2\exp\left(-\frac{n\delta^{2}}{2R^{2}}\right).
\]
\end{thm}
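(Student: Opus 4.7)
The plan parallels Pinelis' proof cited in the theorem and has three stages: (i) a conditional Hilbert-space Hoeffding-type inequality that replaces the scalar Hoeffding lemma, (ii) an exponential-supermartingale construction plus Doob's maximal inequality to obtain the first (general Bernstein-type) bound, and (iii) a specialization via the elementary estimate $\cosh(u) \le e^{u^2/2}$ to obtain the Gaussian tail.

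Stage (i). Writing $d_k = Z_k - Z_{k-1}$ with $\mathbb{E}[d_k \mid \mathcal{F}_{k-1}] = 0$, the target is the conditional inequality
\[
\mathbb{E}\!\bigl[\cosh(\lambda|Z_{k-1} + d_k|) \bigm| \mathcal{F}_{k-1}\bigr] \;\le\; \cosh(\lambda|Z_{k-1}|)\cdot \mathbb{E}\!\bigl[\cosh(\lambda|d_k|) \bigm| \mathcal{F}_{k-1}\bigr].
\]
I would prove it by Jensen-decoupling with an independent conditional copy $d_k'$ of $d_k$ (using $\mathbb{E}[d_k' \mid \mathcal{F}_{k-1}] = 0$), Rademacher symmetrization of $d_k - d_k'$, and the pointwise majorization $\cosh(\lambda|Z + v|) + \cosh(\lambda|Z - v|) \le 2\cosh(\lambda|Z|)\cosh(\lambda|v|)$, which is a Schur-convexity consequence of the parallelogram identity $|Z + v|^2 + |Z - v|^2 = 2|Z|^2 + 2|v|^2$, where the Hilbert-space structure enters.

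Stage (ii). Iterating Stage (i) and using $\cosh u \le \exp(e^u - 1 - u)$ for $u \ge 0$ shows that $M_k^{(\lambda)} := \cosh(\lambda|Z_k|)\exp(-\Gamma_k(\lambda))$, with $\Gamma_k(\lambda) = \sum_{j \le k} \mathbb{E}[e^{\lambda|d_j|} - 1 - \lambda|d_j| \mid \mathcal{F}_{j-1}]$, is a nonnegative supermartingale with $M_0^{(\lambda)} = 1$. Since $\Gamma_k(\lambda) \le \mathrm{ess\text{-}sup}\,\Gamma_n(\lambda)$ and $\cosh(\lambda t) \ge e^{\lambda t}/2$, Doob's maximal inequality for nonnegative supermartingales yields
\[
\mathbb{P}\!\left(\max_{k \le n}|Z_k| \ge t\right) \;\le\; 2 e^{-\lambda t}\exp\bigl(\mathrm{ess\text{-}sup}\,\Gamma_n(\lambda)\bigr),
\]
and the infimum over $\lambda > 0$ recovers the first bound of the theorem. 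For the Gaussian specialization, I would use $\cosh u \le e^{u^2/2}$ to get $\mathbb{E}[\cosh(\lambda|d_k|) \mid \mathcal{F}_{k-1}] \le e^{\lambda^2 R^2/2}$ under $|d_k| \le R$, making $\tilde M_k^{(\lambda)} = \cosh(\lambda|Z_k|)\exp(-k \lambda^2 R^2/2)$ a nonnegative supermartingale; Doob plus optimization at $\lambda = \delta/R^2$ with $t = n\delta$ produces the announced $2\exp(-n \delta^2/(2R^2))$.

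The main obstacle is Stage (i). The scalar Hoeffding lemma rests on convexity of $x \mapsto e^{\lambda x}$, which is not available for $x \mapsto \cosh(\lambda|x|)$ on $\mathbb{H}$. The substitute is the combination of the parallelogram identity (which supplies the Hilbert structure) and Rademacher symmetrization (which linearizes $|\cdot|$ back to the scalar setting); this is the analytical heart of Pinelis' argument. Stages (ii) and (iii) are then routine Doob/optimization steps.
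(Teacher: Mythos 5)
The paper offers no proof of this theorem --- it is quoted verbatim as a special case of Pinelis' Theorem 3.5 --- so the only meaningful comparison is with Pinelis' argument, and your outline does not correctly reproduce it. The crux is Stage (i): the conditional inequality $\mathbb{E}\left[\cosh\left(\lambda\left|Z_{k-1}+d_{k}\right|\right)\mid\mathcal{F}_{k-1}\right]\leq\cosh\left(\lambda\left|Z_{k-1}\right|\right)\mathbb{E}\left[\cosh\left(\lambda\left|d_{k}\right|\right)\mid\mathcal{F}_{k-1}\right]$ is false for general centered increments, already in $\mathbb{R}$: take $Z_{k-1}=1$, $\lambda=1$, $d_{k}=2$ with probability $1/3$ and $d_{k}=-1$ with probability $2/3$; then $\mathbb{E}\left[\cosh\left(1+d_{k}\right)\right]\approx4.02$ while $\cosh\left(1\right)\mathbb{E}\left[\cosh\left(d_{k}\right)\right]\approx3.52$. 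The two-point majorization $\cosh\left(\lambda\left|Z+v\right|\right)+\cosh\left(\lambda\left|Z-v\right|\right)\leq2\cosh\left(\lambda\left|Z\right|\right)\cosh\left(\lambda\left|v\right|\right)$ is indeed valid in a Hilbert space, but it yields Stage (i) only when $d_{k}$ is conditionally \emph{symmetric}. The symmetrization you invoke for general $d_{k}$ proves only the weaker bound with $\mathbb{E}\left[\cosh\left(\lambda\left|d_{k}-d_{k}'\right|\right)\mid\mathcal{F}_{k-1}\right]$ on the right, i.e. with the increment doubled, and this loss cannot be undone downstream: with $\left|d_{k}-d_{k}'\right|\leq2R$ your Stage (iii) gives at best $2\exp\left(-n\delta^{2}/\left(8R^{2}\right)\right)$, not $2\exp\left(-n\delta^{2}/\left(2R^{2}\right)\right)$, and the first display cannot be recovered at all, since already for symmetric $d_{k}=\pm\varepsilon$ one has $\mathbb{E}\left[\cosh\left(\lambda\left|d_{k}-d_{k}'\right|\right)\right]\approx1+\lambda^{2}\varepsilon^{2}$, which exceeds $\exp\left(\mathbb{E}\left[e^{\lambda\left|d_{k}\right|}-1-\lambda\left|d_{k}\right|\right]\right)\approx1+\lambda^{2}\varepsilon^{2}/2$.

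What actually exploits $\mathbb{E}\left[d_{k}\mid\mathcal{F}_{k-1}\right]=0$ in Pinelis' proof is a pointwise estimate on $v\mapsto\cosh\left(\lambda\left|x+v\right|\right)$ whose first-order $v$-dependence is \emph{linear}, so that it vanishes under conditional expectation; symmetrization destroys precisely this structure. For example, with $c=\left\langle x,v\right\rangle /\left(\left|x\right|\left|v\right|\right)$, the map $c\mapsto\cosh\left(\lambda\left|x+v\right|\right)$ is convex in $c$ and agrees at $c=\pm1$ with the affine function $\cosh\left(\lambda\left|x\right|\right)\cosh\left(\lambda\left|v\right|\right)+c\,\sinh\left(\lambda\left|x\right|\right)\sinh\left(\lambda\left|v\right|\right)$, hence is dominated by it; combined with a conditional-Jensen reduction of ball-valued centered increments to sphere-valued ones (along the segment between $\pm R\,d_{k}/\left|d_{k}\right|$), this gives $\mathbb{E}\left[\cosh\left(\lambda\left|Z_{k-1}+d_{k}\right|\right)\mid\mathcal{F}_{k-1}\right]\leq\cosh\left(\lambda\left|Z_{k-1}\right|\right)\cosh\left(\lambda R\right)\leq\cosh\left(\lambda\left|Z_{k-1}\right|\right)e^{\lambda^{2}R^{2}/2}$, after which your Stages (ii)--(iii) (Doob for nonnegative supermartingales, $\cosh\left(\lambda t\right)\geq e^{\lambda t}/2$, optimization in $\lambda$) correctly deliver the second bound. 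The first, Bennett-type display requires Pinelis' sharper Taylor-type lemma, in which, once the linear term is removed, the conditional remainder is bounded by $\cosh\left(\lambda\left|Z_{k-1}\right|\right)\mathbb{E}\left[e^{\lambda\left|d_{k}\right|}-1-\lambda\left|d_{k}\right|\mid\mathcal{F}_{k-1}\right]$ (this is where $2$-smoothness of the Hilbert norm enters), again with no symmetrization; note also that the second display is not a naive specialization of the first (substituting $\left|d_{k}\right|\leq R$ into $e^{\lambda\left|d_{k}\right|}-1-\lambda\left|d_{k}\right|$ only yields a Bennett-type exponent, which is weaker than $\delta^{2}/\left(2R^{2}\right)$ for large $\delta/R$), so the two displays genuinely need the two separate per-step estimates just described.
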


The following concentration result for i.i.d. random variables in
Hilbert spaces is a corollary.
\begin{thm}[Concentration of i.i.d. sum in Hilbert spaces.]
\label{thm:iid-hilbert}Consider $n$ i.i.d. random variables $X_{1},...,X_{n}$
in a separable Hilbert space. Suppose that there exists a constant
$R>0$ such that $\left|X_{i}-\mathbb{E}\left[X_{i}\right]\right|\leq R$
almost surely. Then for any $\delta>0$,
\[
\mathbb{P}\left(\frac{1}{n}\left|\sum_{i=1}^{n}X_{i}-\mathbb{E}\left[X_{i}\right]\right|\geq\delta\right)\leq2\exp\left(-\frac{n\delta^{2}}{2R^{2}}\right).
\]
\end{thm}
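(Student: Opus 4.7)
The plan is to deduce this directly from Theorem~\ref{thm:azuma-hilbert} by exhibiting the i.i.d.\ centered sum as a martingale with bounded increments. Concretely, define $Y_i = X_i - \mathbb{E}[X_i]$ and let $Z_k = \sum_{i=1}^{k} Y_i$, with $Z_0 = 0$. Take $\mathcal{F}_k$ to be the $\sigma$-algebra generated by $X_1, \ldots, X_k$.

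First, I would verify the martingale hypothesis: since the $X_i$ are i.i.d.\ and $\mathbb{E}[Y_i] = 0$, and $Y_{k+1}$ is independent of $\mathcal{F}_k$, we have $\mathbb{E}[Z_{k+1} \mid \mathcal{F}_k] = Z_k + \mathbb{E}[Y_{k+1}] = Z_k$, so $\{Z_k\}$ is an $\mathcal{F}_k$-martingale taking values in the separable Hilbert space. Next, the increment bound is immediate: $|Z_k - Z_{k-1}| = |Y_k| = |X_k - \mathbb{E}[X_k]| \leq R$ almost surely by hypothesis.

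With both hypotheses of Theorem~\ref{thm:azuma-hilbert} verified, I apply its second conclusion with $t = n\delta$ to obtain
\[
\mathbb{P}\left(\max_{k \leq n} |Z_k| \geq n\delta\right) \leq 2\exp\left(-\frac{n\delta^2}{2R^2}\right).
\]
Since $\bigl\{|Z_n| \geq n\delta\bigr\} \subseteq \bigl\{\max_{k \leq n} |Z_k| \geq n\delta\bigr\}$, and $\frac{1}{n}|Z_n| = \frac{1}{n}|\sum_{i=1}^n X_i - \mathbb{E}[X_i]|$, the claimed bound follows at once. There is no real obstacle here; the work was done in Theorem~\ref{thm:azuma-hilbert}, and this corollary is simply the observation that centered i.i.d.\ sums form a martingale with the requisite bounded-difference property.
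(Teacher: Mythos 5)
Your proof is correct and is exactly the route the paper intends: the paper states Theorem \ref{thm:iid-hilbert} as an immediate corollary of Theorem \ref{thm:azuma-hilbert}, obtained by viewing the centered i.i.d.\ sum as a martingale with increments bounded by $R$ and applying the bound with $t=n\delta$. Your verification of the martingale property and the inclusion $\{|Z_n|\geq n\delta\}\subseteq\{\max_{k\leq n}|Z_k|\geq n\delta\}$ fills in the (omitted) details faithfully.
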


\end{document}